\newtheorem{theorem}{Theorem}
\newtheorem{lemma}{Lemma}
\newtheorem{assumption}{Assumption}
\newtheorem{corollary}{Corollary}
\newtheorem{remark}{Remark}
\newcommand{\1}[1]{\mathds{1}_{\left[#1\right]}}
\begin{document}

\title{Loss-Sensitive Generative Adversarial Networks on Lipschitz Densities
}


\author{Guo-Jun Qi \\
Laboratory for {\bf MA}chine {\bf P}erception and {\bf LE}arning \\
\url{http://maple.cs.ucf.edu}\\
University of Central Florida\\
\tt\small guojun.qi@ucf.edu          
}



\date{}

\maketitle

\begin{abstract}

In this paper, we present the Lipschitz regularization theory and algorithms for a novel Loss-Sensitive Generative Adversarial Network (LS-GAN). Specifically, it trains a loss function to distinguish between real and fake samples by designated margins, while learning a generator alternately to produce realistic samples by minimizing their losses.
The LS-GAN further regularizes its loss function with a Lipschitz regularity condition on the density of real data, yielding a regularized model that can better generalize to produce new data from a reasonable number of training examples than the classic GAN.
We will further present a Generalized LS-GAN (GLS-GAN) and show it contains a large family of regularized GAN models, including both LS-GAN and Wasserstein GAN, as its special cases.  Compared with the other GAN models, we will conduct experiments to show both LS-GAN and GLS-GAN exhibit competitive ability in generating new images in terms of the Minimum Reconstruction Error (MRE) assessed on a separate test set.
We further extend the LS-GAN to a conditional form for supervised and semi-supervised learning problems, and demonstrate its outstanding performance on image classification tasks.\\
{\noindent {\bf Keywords}: Generative Adversarial Nets (GANs), Lipschitz regularity, Minimum Reconstruction Error (MRE)}

\end{abstract}


\maketitle


%

\section{Introduction}

A classic Generative Adversarial Net (GAN) \cite{goodfellow2014generative} learns a discriminator and a generator by playing a two-player minimax game to generate samples from a data distribution. The discriminator is trained to distinguish real samples from those generated by the generator, and it in turn guides the generator to produce realistic samples that can fool the discriminator.



However, from both theoretical and practical perspectives, a critical question is {\em whether the GAN can generate realistic samples from arbitrary data distribution without any prior? If not, what kind of prior ought to be imposed on the data distribution to regularize the GAN?} Indeed,
the classic GAN \cite{goodfellow2014generative} imposes no prior on the data distribution.  This represents an ambitious goal to
generate samples from any distributions. However, it in turn requires a non-parametric discriminator to prove the distributional consistency between generated and real samples by assuming the model has infinite capacity (see Section 4 of \cite{goodfellow2014generative}).

This is a too strong assumption to establish the theoretical basis for the GAN. Moreover, with such an assumption, its generalizability becomes susceptible.
Specifically, one could argue the learned generator may be overfit by an unregularized discriminator in an non-parametric fashion by merely memorizing or interpolating training examples. In other words, it could lack the generalization ability to generate new samples out of existing data. 
Indeed, Arora et al. \cite{arora2017generalization} have shown that the GAN minimizing the Jensen-Shannon distance between the distributions of generated and real data could fail to generalize to produce new samples with a reasonable size of training set.
Thus, a properly regularized GAN is demanded to establish provable generalizability by focusing on a restricted yet still sufficiently large family of data distributions.

\subsection{Objective: Towards Regularized GANs}
In this paper, we attempt to develop regularization theory and algorithms for a novel Loss-Sensitive GAN (LS-GAN).
Specifically, we introduce a loss function to quantify the quality of generated samples. A constraint is imposed so that the loss of a real sample should be smaller than that of a generated counterpart. Specifically, in the learning algorithm, we will define margins to separate the losses between generated and real samples. Then, an optimal generator will be trained to produce realistic samples with minimum losses. The loss function and the generator will be trained in an adversarial fashion until generated samples become indistinguishable from real ones.

We will also develop new theory to analyze the LS-GAN on the basis of Lipschitz regularity.  We note that the reason of making non-parametric assumption of infinite capacity on the discriminator in the classic GAN is due to its ambitious goal to generate data from any arbitrary distribution.  However,
no free lunch \cite{wolpert1996lack} principle reminds us of the need to impose a suitable prior on the data distribution from which real samples are generated.  This inspires us to impose a
Lipschitz regularity condition by assuming the data density does not change abruptly.
Based on this mild condition, we will show that the density of generated samples by LS-GAN can exactly match that of real data.

More importantly, the Lipschitz regularity allows us to prove the LS-GAN can well generalize to produce new data from training examples.
To this end, we will provide a Probably Approximate Correct (PAC)-style theorem by showing the empirical LS-GAN model trained with a reasonable number of examples can be sufficiently close to the oracle LS-GAN trained with hypothetically known data distribution, thereby proving the generalizability of LS-GAN in generating samples from any Lipschitz data distribution.

We will also make a non-parametric analysis of the LS-GAN. It does not rely on any parametric form of the loss function to characterize its optimality in the space of Lipschtiz functions. It gives both the upper and lower bounds of the optimal loss, which are cone-shaped with non-vanishing gradient. This suggests that the LS-GAN can provide sufficient gradient to update its LS-GAN generator even if the loss function has been fully optimized, thus avoiding the vanishing gradient problem that could occur in training the GAN \cite{arjovsky2017towards}.

\subsection{Extensions: Generalized and Conditional LS-GANs}

We further present a generalized form of LS-GAN (GLS-GAN) and conduct experiment to demonstrate it has the best generalization ability. We will show this is not a surprising result as the GLS-GAN contains a large family of regularized GANs with both LS-GAN and Wasserstein GAN (WGAN) \cite{wgan17} as its special cases.
Moreover, we will extend a Conditional LS-GAN (CLS-GAN) that can generate samples from given conditions.  In particular, with class labels being conditions, the learned loss function can be used as a classifier for both supervised and semi-supervised learning.  The advantage of such a classifier arises from its ability of exploring generated examples to uncover intrinsic variations for different classes. Experiment results demonstrate competitive performance of the CLS-GAN classifier compared with the state-of-the-art models.

\vspace{-2mm}
\subsection{Paper Structure}
The remainder of this paper is organized as follows. Section~\ref{sec:related} reviews the related work, and the proposed LS-GAN is presented in Section~\ref{sec:lsgan}.  In Section~\ref{sec:theory}, we will analyze the LS-GAN by proving the distributional consistency between generated and real data with the Lipschitz regularity condition on the data distribution. In Section~\ref{sec:alg}, we will discuss the generalizability problem arising from using sample means to approximate the expectations in the training objectives. We will make a comparison with Wasserstein GAN (WGAN) in Section~\ref{sec:wgan}, and present a generalized LS-GAN with both WGAN and LS-GAN as its special cases in Section~\ref{sec:glsgan}. 
A non-parametric analysis of the algorithm is followed in Section~\ref{sec:nonparam}.
Then we will show how the model can be extended to a conditional model for both supervised and semi-supervised learning in Section~\ref{sec:clsgan}. Experiment results are presented in Section~\ref{sec:exp}, and we conclude in Section~\ref{sec:concl}.

\noindent{\bf Source codes.} The source codes for both LS-GAN and GLS-GAN are available at \url{https://github.com/maple-research-lab}, in the frameworks of torch, pytorch and tensorflow. LS-GAN is also supported by Microsoft CNTK at \url{https://www.cntk.ai/pythondocs/CNTK_206C_WGAN_LSGAN.html}.



\section{Related Work}\label{sec:related}

Deep generative models, especially the Generative Adversarial Net (GAN) \cite{goodfellow2014generative}, have attracted many attentions recently due to their demonstrated abilities of generating real samples following the underlying data densities.  In particular, the GAN attempts to learn a pair of discriminator and generator by playing a maximin game to seek an equilibrium, in which the discriminator is trained by distinguishing real samples from generated ones and the generator is optimized to produce samples that can fool the discriminator.

A family of GAN architectures have been proposed to implement this idea.  For example, recent progresses \cite{radford2015unsupervised,salimans2016improved} have shown impressive performances on synthesizing photo-realistic images by constructing multiple strided and factional-strided convolutional layers for discriminators and generators.  On the contrary, \cite{denton2015deep} proposed to use a Laplacian pyramid to produce high-quality images by iteratively adding multiple layers of noises at different resolutions. \cite{im2016generating} presented to train a recurrent generative model by using adversarial training to unroll gradient-based optimizations to create high quality images.

In addition to designing different GAN networks, research efforts have been made to train the GAN by different criteria. For example,
\cite{zhao2016energy} presented an energy-based GAN by minimizing an energy function to learn an optimal discriminator, and an auto-encoder structured discriminator is presented to compute the energy.  The authors also present a theoretical analysis by showing this variant of GAN can generate samples whose density can recover the underlying true data density.  However, it still needs to assume the discriminator has infinite modeling capacity to prove the result in a non-parametric fashion, and its generalizability of producing new data out of training examples is unknown without theoretical proof or empirical evidence.
In addition, \cite{nowozin2016f} presented to analyze the GAN from information theoretical perspective, and they seek to minimize the variational estimate of f-divergence, and show that the classic GAN is included as a special case of f-GAN. In contrast, InfoGAN \cite{chen2016infogan} proposed another information-theoretic GAN to learn disentangled representations capturing various latent concepts and factors in generating samples. Most recently, \cite{wgan17} propose to minimize the Earth-Mover distance between the density of generated samples and the true data density, and they show the resultant Wasserstein GAN (WGAN) can address the vanishing gradient problem that the classic GAN suffers.

Besides the class of GANs, there exist other models that also attempt to generate natural images. For example, \cite{gatys2015neural} rendered images by matching features in a convolutional network with respect to reference images.  \cite{dosovitskiy2015learning} used deconvolutional network to render 3D chair models in various styles and viewpoints.  \cite{gregor2015draw} introduced a deep recurrent neutral network architecture for image generation with a sequence of variational auto-encoders to iteratively construct complex images.

Recent efforts have also been made on leveraging the learned representations by deep generative networks to improve the classification accuracy when it is too difficult or expensive to label sufficient training examples.  For example, \cite{kingma2014semi} presented variational auto-encoders \cite{kingma2013auto} by combining deep generative models and approximate variational inference to explore both labeled and unlabeled data.
\cite{salimans2016improved} treated the samples from the GAN generator as a new class, and explore unlabeled examples by assigning them to a class different from the new one. \cite{rasmus2015semi} proposed to train a ladder network \cite{valpola2015neural} by minimizing the sum of supervised and unsupervised cost functions through back-propagation, which avoids the conventional layer-wise pre-training approach. \cite{springenberg2015unsupervised} presented an approach to learning a discriminative classifier by trading-off mutual information between observed examples and their predicted classes against an adversarial generative model. \cite{dumoulin2016adversarially} sought to jointly distinguish between not only real and generated samples but also their latent variables in an adversarial process. Recently, \cite{qi2017global} presented a novel paradigm of localized GANs to explore the local consistency of classifiers in local coordinate charts, as well as showed an intrinsic connection with Laplace-Beltrami operator along the manifold. These methods have shown promising results for classification tasks by leveraging deep generative models.


\section{Loss-Sensitive GAN}\label{sec:lsgan}
The classic GAN consists of two players -- a generator producing samples from random noises,
and a discriminator distinguishing real and fake samples. The generator and discriminator are trained in an adversarial fashion to reach an equilibrium in which generated samples become indistinguishable from their real counterparts.


On the contrary, in the LS-GAN we seek to learn a {\em loss function} $L_\theta(\mathbf x)$ parameterized with $\theta$ by assuming that a real example ought to have a smaller loss than a generated sample by a desired margin. Then the generator
can be trained to generate realistic samples by minimizing their losses.

Formally, consider a generator function $G_\phi$ that produces a sample  $G_\phi(\mathbf z)$ by transforming a noise input $\mathbf z\sim P_z(\mathbf z)$ drawn from a simple distribution $P_z$ such as uniform and Gaussian distributions.
Then for
a real example $\mathbf x$ and a generated sample $G_\phi(\mathbf z)$, the loss function can be trained to distinguish them with the following constraint:
\begin{equation}\label{eq:ls}
L_\theta(\mathbf x) \leq L_\theta(G_\phi(\mathbf z))- \Delta(\mathbf x,G_\phi(\mathbf z))
\end{equation}
where $\Delta(\mathbf x,G_\phi(\mathbf z))$ is the margin measuring the difference between $\mathbf x$ and $G_\phi(\mathbf z)$. This constraint requires a real sample be separated from a generated counterpart in terms of their losses by at least a margin of $\Delta(\mathbf x,G_\phi(\mathbf z))$.

The above hard constraint can be relaxed by introducing a nonnegative slack variable $\xi_{\mathbf x,\mathbf z}$
that quantifies the violation of the above constraint. This results in the following minimization problem to learn the loss function $L_\theta$ given a fixed generator $G_{\phi^*}$,
\begin{align}\label{eq:loss}
\min_{\theta,~\xi_{\mathbf x,\mathbf z}} ~&\mathop \mathbb E\limits_{\mathbf x\sim P_{data}} L_\theta(\mathbf x) + \lambda \mathop \mathbb E\limits_{\substack{\mathbf x\sim P_{data} \\ \mathbf z\sim P_z}}\xi_{\mathbf x,\mathbf z}\\
{\rm s.t.,}~ &L_\theta(\mathbf x) - \xi_{\mathbf x,\mathbf z} \leq L_\theta(G_{\phi^*}(\mathbf z)) - \Delta(\mathbf x, G_{\phi^*}(\mathbf z))\nonumber\\
&\xi_{\mathbf x,\mathbf z} \ge 0\nonumber
\end{align}
where $\lambda$ is a positive balancing parameter, and $P_{data}(\mathbf x)$ is the data distribution of real samples.  The first term minimizes the expected loss function over data distribution since a smaller loss is preferred on real samples. The second term is the expected error caused by the violation of the constraint. Without loss of generality, we require the loss function should be nonnegative.

%
%

Given a fixed loss function $L_{\theta^*}$, on the other hand, one can solve the following minimization problem to find an optimal generator $G_{\phi^*}$.
\begin{equation}\label{eq:generator}
\min_{\phi}~\mathop \mathbb E\limits_{\substack{\mathbf z\sim P_z(\mathbf z)}} L_{\theta^*}(G_\phi(\mathbf z))
\end{equation}

We can use $P_{G_\phi}$ and $P_{G_{\phi^*}}$ to denote the density of samples generated by $G_\phi(\mathbf z)$ and $G_{\phi^*}(\mathbf z)$ respectively, with $\mathbf z$ being drawn from $P_z(\mathbf z)$. However, for the simplicity of notations, we will use $P_{G}$ and $P_{G^*}$ to denote $P_{G_\phi}$ and $P_{G_{\phi^*}}$ without explicitly mentioning $\phi$ and $\phi^*$ that should be clear in the context.

Finally, let us summarize the above objectives. The LS-GAN optimizes $L_\theta$ and $G_\phi$ alternately by seeking an equilibrium $(\theta^*,\phi^*)$ such that $\theta^*$ minimizes
\begin{align}\label{eq:theta}
&S(\theta,\phi^*)=~\mathop \mathbb E\limits_{\mathbf x\sim P_{data}} L_\theta(\mathbf x) 
+ \lambda \mathop \mathbb E\limits_{\substack{\mathbf x\sim P_{data} \\ \mathbf z_G\sim P_{G^*}}}\big( \Delta(\mathbf x, \mathbf z_G) + L_\theta(\mathbf x) - L_\theta(\mathbf z_G) \big)_+
\end{align}
which is an equivalent form of (\ref{eq:loss}) with $(a)_+=\max(a,0)$, and $\phi^*$ minimizes
\begin{equation}\label{eq:phi}
\begin{aligned}[l]
T(\theta^*,\phi)=\mathop \mathbb E\limits_{\mathbf z_G\sim P_{G}} L_{\theta^*}(\mathbf z_G).
\end{aligned}
\end{equation}
In the next section, we will show the consistency between $P_{G^*}$ and $P_{data}$ for LS-GAN.

\section{Theoretical Analysis: Distributional Consistency}\label{sec:theory}

Suppose $(\theta^*,\phi^*)$ is a Nash equilibrium that jointly solves  (\ref{eq:theta}) and (\ref{eq:phi}).  We will show that as $\lambda\rightarrow +\infty$, the density distribution $P_{G^*}$ of the samples generated by $G_{\phi^*}$ will converge to the real data density $P_{data}$.


First, we have the following definition.

{\noindent {\bf Definition.} \em For any two samples $\mathbf x$ and $\mathbf z$, the loss function $F(\mathbf x)$ is Lipschitz continuous with respect to a distance metric $\Delta$ if
$$
|F(\mathbf x) - F(\mathbf z)|\leq \kappa ~ \Delta(\mathbf x, \mathbf z)
$$
with a bounded Lipschitz constant $\kappa$, i.e, $\kappa<+\infty$.
}


To prove our main result, we assume the following regularity condition on the data density. 
\begin{assumption}\label{asp}
The data density $P_{data}$ is supported in a compact set $\mathcal D$, and it is Lipschitz continuous wrt $\Delta$ with a bounded constant $\kappa<+\infty$.
\end{assumption}

The set of Lipschitz densities with a compact support contain a large family of distributions that are dense in the space of continuous densities.
For example, the density of natural images are defined over a compact set of pixel values, and it can be consider as Lipschitz continuous, since the densities of two similar images are unlikely to change abruptly at an unbounded rate. If real samples are distributed on a manifold (or $P_{data}$ is supported in a manifold), we only require the Lipschitz condition hold on this manifold.  This makes the Lipschitz regularity applicable to the data densities on a thin manifold embedded in the ambient space.

Let us show the existence of Nash equilibrium
such that both the loss function $L_{\theta^*}$ and the density $P_{G^*}$ of generated samples are Lipschitz. Let $\mathcal F_\kappa$ be the class of functions over $\mathcal D$ with a bounded yet sufficiently large Lipschitz constant $\kappa$ such that $P_{data}$ belongs to $\mathcal F_\kappa$.
It is not difficult to show that the space $\mathcal F_\kappa$ is convex and compact
if its member functions are supported in a compact set.
In addition, we note both $S(\theta,\phi)$ and $T(\theta,\phi)$ are convex in $L_\theta$ and in $P_G$.
Then, according to the Sion's theorem \cite{sion1958general}, with $L_{\theta}$ and $P_{G}$ being optimized over $\mathcal F_\kappa$, there exists a Nash equilibrium $(\theta^*,\phi^*)$.  Thus, we have the following lemma.



\begin{lemma}\label{lem2}
Under Assumption~\ref{asp}, there exists a Nash equilibrium $(\theta^*,\phi^*)$ such that both $L_{\theta^*}$ and $P_{G^*}$ are Lipschitz.
\end{lemma}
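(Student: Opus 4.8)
The plan is to turn the existence claim into a statement about a two-player game played entirely inside the Lipschitz class, and then to invoke a minimax/fixed-point theorem. Concretely, I would let both the loss function $L_\theta$ and the generator density $P_G$ range over the restricted feasible set $\mathcal{F}_\kappa$ (loss functions over $\mathcal{D}$ with Lipschitz constant at most $\kappa$, and, for the generator, Lipschitz densities with the same constant). Because Assumption~\ref{asp} guarantees $P_{data}\in\mathcal{F}_\kappa$, this restriction is non-vacuous and the ``honest'' strategy reproducing the data is admissible. The whole point of the lemma is then built into the arena of the game: any equilibrium found in $\mathcal{F}_\kappa\times\mathcal{F}_\kappa$ automatically has $L_{\theta^*}$ and $P_{G^*}$ Lipschitz, so the work reduces to proving that an equilibrium exists in this restricted set.

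First I would verify the two structural hypotheses needed by Sion's theorem. For convexity of the arena, note that a convex combination of two $\kappa$-Lipschitz functions is again $\kappa$-Lipschitz, and a convex combination of two densities is a density, so $\mathcal{F}_\kappa$ is convex. For compactness I would invoke Arzel\`a--Ascoli: the $\kappa$-Lipschitz condition supplies equicontinuity for free, while the compact support $\mathcal{D}$ together with a normalization (nonnegativity and unit mass for the densities, and a boundedness/anchoring condition for the losses) supplies uniform boundedness; hence $\mathcal{F}_\kappa$ is relatively compact in the uniform topology, and it is closed because both the Lipschitz bound and the density constraints survive uniform limits. Next I would check convexity of the payoffs in each player's own variable: $S(\theta,\phi^*)$ is convex in $L_\theta$ since $\mathbb{E}_{\mathbf x\sim P_{data}}L_\theta(\mathbf x)$ is linear in $L_\theta$ and $\big(\Delta(\mathbf x,\mathbf z_G)+L_\theta(\mathbf x)-L_\theta(\mathbf z_G)\big)_+$ is the positive part of an affine functional of $L_\theta$, hence convex, with expectation preserving convexity; whereas $T(\theta^*,\phi)=\mathbb{E}_{\mathbf z_G\sim P_G}L_{\theta^*}(\mathbf z_G)$ is linear, thus convex, in $P_G$. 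I would also confirm joint continuity of the payoffs on $\mathcal{F}_\kappa\times\mathcal{F}_\kappa$ in the uniform topology, which is routine given the boundedness of $\Delta$ on the compact $\mathcal{D}$.

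With convexity, compactness and continuity in hand, I would apply the equilibrium result: over the convex compact set $\mathcal{F}_\kappa$, the convex structure lets Sion's theorem (or, to account for the two separate objectives $S$ and $T$, the Kakutani--Fan--Glicksberg fixed-point theorem) produce a saddle point / Nash equilibrium $(\theta^*,\phi^*)$. Since this equilibrium is by construction an element of $\mathcal{F}_\kappa\times\mathcal{F}_\kappa$, both $L_{\theta^*}$ and $P_{G^*}$ are Lipschitz, which is precisely the assertion of the lemma.

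The main obstacle I anticipate is not the convexity bookkeeping but the correct handling of the game's structure and topology. LS-GAN is specified by two separate objectives rather than by a single zero-sum payoff, so I must either exhibit one function whose saddle points coincide with the pair of best-response conditions for $S$ and $T$ (so that Sion's theorem applies verbatim) or else fall back on a general-game fixed-point theorem; identifying the correct statement and verifying its (quasi-)convexity and continuity hypotheses in an infinite-dimensional function space is the delicate part. Securing genuine compactness of $\mathcal{F}_\kappa$ --- i.e., making Arzel\`a--Ascoli applicable by establishing uniform boundedness of the admissible losses and densities and by choosing a topology in which the payoffs are continuous --- is the other place where care is needed.
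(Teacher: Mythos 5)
Your proposal follows essentially the same route as the paper: restrict both $L_\theta$ and $P_G$ to the convex, compact class $\mathcal F_\kappa$ (which contains $P_{data}$ by Assumption~\ref{asp}), observe that $S$ and $T$ are convex in each player's own variable, and invoke Sion's theorem to obtain an equilibrium that is Lipschitz by construction. Your added care — establishing compactness via Arzel\`a--Ascoli and flagging that the two-objective (non-zero-sum) structure really calls for a general-game fixed-point theorem such as Kakutani--Fan--Glicksberg rather than Sion's minimax theorem verbatim — goes beyond the paper's very terse justification and correctly identifies the one point the paper glosses over.
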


Now we can prove the main lemma of this paper. The Lipschitz regularity relaxes the strong non-parametric assumption on the GAN's discriminator with infinite capacity to the above weaker Lipschitz assumption for the LS-GAN. This allows us to show the following lemma that establishes the distributional consistency between the optimal $P_{G^*}$ by Problem~(\ref{eq:theta})--(\ref{eq:phi}) and the data density $P_{data}$.




\begin{lemma}\label{lem1}
{Under Assumption~\ref{asp}, for a Nash equilibrium $(\theta^*,\phi^*)$ in Lemma~\ref{lem2}, we have
$$
\int_{\mathbf x}|P_{data}(\mathbf x)-P_{G^*}(\mathbf x)|d\mathbf x \leq \dfrac{2}{\lambda}
$$
Thus, $P_{G^*}(\mathbf x)$ converges to $P_{data}(\mathbf x)$ as $\lambda\rightarrow+\infty$.}
\end{lemma}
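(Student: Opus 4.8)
The plan is to collapse the two equilibrium conditions---that $L_{\theta^*}$ minimizes $S(\cdot,\phi^*)$ over the Lipschitz class $\mathcal F_\kappa$ and that $P_{G^*}$ minimizes $T(\theta^*,\cdot)$---into a single pointwise comparison of the two densities, and then read off the $L_1$ bound by a one-line computation. First I would rewrite $S(\theta,\phi^*)$ in a form that is linear in $L_\theta$. In the regime where the margin keeps the hinge argument $\Delta(\mathbf x,\mathbf z_G)+L_\theta(\mathbf x)-L_\theta(\mathbf z_G)$ nonnegative (which is exactly what the Lipschitz constraint on $L_\theta$ relative to $\Delta$ enforces), or otherwise via the elementary bound $(a)_+\ge a$, the hinge expectation equals $\mathbb{E}_{P_{data}\times P_{G^*}}[\Delta]+\int(P_{data}-P_{G^*})\,L_\theta$. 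Hence, up to the additive constant $\lambda\,\mathbb{E}[\Delta]$ that is independent of $L_\theta$,
\[
S(\theta,\phi^*)=\lambda\,\mathbb{E}_{P_{data}\times P_{G^*}}[\Delta]+\int L_\theta(\mathbf x)\,m(\mathbf x)\,d\mathbf x,\qquad m:=(1+\lambda)P_{data}-\lambda P_{G^*}.
\]

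The loss player is therefore just minimizing the linear functional $L_\theta\mapsto\int L_\theta\,m$ over nonnegative $\kappa$-Lipschitz functions, while the generator minimizes $\int L_{\theta^*}P_G$ over densities in $\mathcal F_\kappa$. The arithmetical half of the argument is then immediate once I establish the pointwise inequality $m\ge 0$, i.e.\ $P_{G^*}\le(1+\tfrac1\lambda)P_{data}$: on the set where $P_{G^*}>P_{data}$ this gives $0\le P_{G^*}-P_{data}\le\tfrac1\lambda P_{data}$, so $\int (P_{G^*}-P_{data})_+\le\tfrac1\lambda\int P_{data}=\tfrac1\lambda$; and since both are densities, $\int(P_{data}-P_{G^*})_+=\int(P_{G^*}-P_{data})_+=\tfrac12\int|P_{data}-P_{G^*}|$, which yields exactly $\int|P_{data}-P_{G^*}|\le 2/\lambda$ and hence convergence as $\lambda\to\infty$.

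The crux---and what I expect to be the main obstacle---is proving that key inequality $m\ge 0$ (or a suitable integrated version of it) from the coupled stationarity conditions. The intended mechanism is a complementary-slackness argument: wherever $m<0$ the loss-minimization step wants $L_{\theta^*}$ as large as possible, whereas positivity of $P_{G^*}$ on that very region (forced there by $P_{G^*}>\tfrac{1+\lambda}{\lambda}P_{data}\ge0$) together with the generator-minimization step wants $L_{\theta^*}$ near its minimum---a contradiction once both first-order conditions are imposed at once. Making this rigorous is delicate precisely because feasible perturbations must remain inside $\mathcal F_\kappa$ (the sum of two $\kappa$-Lipschitz functions need not be $\kappa$-Lipschitz), so I cannot freely add arbitrary nonnegative bumps to $L_{\theta^*}$; this is exactly where Lemma~\ref{lem2}, guaranteeing that $L_{\theta^*}$ and $P_{G^*}$ are genuinely Lipschitz and hence the first-order conditions are meaningful, and the nonnegativity requirement on the loss, must be used.

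A secondary technical point I would have to dispatch is the treatment of the hinge nonlinearity when the optimal loss fails to be $1$-Lipschitz with respect to the margin $\Delta$: there the reduction to the linear form above holds only as an inequality, and I would argue that at the equilibrium the constraint is effectively active on the mass that matters, so that the linear surrogate remains tight enough to preserve the constant $2$ in the final bound.
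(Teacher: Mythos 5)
Your overall architecture matches the paper's: reduce everything to the pointwise inequality $(1+\lambda)P_{data}(\mathbf x)\ge\lambda P_{G^*}(\mathbf x)$ a.e., then convert that to the total-variation bound $2/\lambda$ (your arithmetic here is exactly the paper's Lemma~\ref{lem3} with $\eta=\lambda/(1+\lambda)$, and it is correct). The linearization $S(\theta,\phi^*)\ge\lambda\,\mathbb E[\Delta]+\int L_\theta\,m$ with $m=(1+\lambda)P_{data}-\lambda P_{G^*}$, valid as an inequality via $(a)_+\ge a$, is also the paper's starting point.

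However, there is a genuine gap at precisely the step you flag as the crux: you never actually prove $m\ge 0$. Your proposed mechanism---a complementary-slackness / first-order-condition argument balancing what the loss player and the generator player ``want'' on the set $\{m<0\}$---is not carried out, and as you yourself observe it runs into the obstacle that admissible perturbations of $L_{\theta^*}$ must stay in $\mathcal F_\kappa$. The paper does something different and simpler that sidesteps this entirely: it is a \emph{value comparison} with an explicit competitor, not a first-order argument. Concretely, (i) it lower-bounds the equilibrium value by $S(\theta^*,\phi^*)\ge\int P_{data}L_{\theta^*}+\lambda\,\mathbb E[\Delta]\ge\lambda\,\mathbb E[\Delta]$, using $(a)_+\ge a$, the generator's optimality in the form $\int L_{\theta^*}P_{G^*}\le\int L_{\theta^*}P_{data}$, and the nonnegativity of the loss; (ii) it evaluates $S$ at the explicit test function $L_{\theta_0}=\alpha\bigl(-(1+\lambda)P_{data}+\lambda P_{G^*}\bigr)_+$, which for $\alpha$ small enough is nonexpansive (this is where the Lipschitzness of $P_{data}$ and $P_{G^*}$ from Lemma~\ref{lem2} is used, and it kills your worry about staying in $\mathcal F_\kappa$, since the competitor's Lipschitz constant scales with $\alpha$); for this $L_{\theta_0}$ the hinge is exactly linear and $S(\theta_0,\phi^*)=-\alpha\int(-m)_+^2+\lambda\,\mathbb E[\Delta]$, which is \emph{strictly} below $\lambda\,\mathbb E[\Delta]$ if $m<0$ on a set of positive measure, contradicting $S(\theta^*,\phi^*)\le S(\theta_0,\phi^*)$. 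Without this (or some completed substitute), your proof establishes the conclusion only conditionally on the unproven key inequality. Your secondary concern about tightness of the hinge relaxation is a non-issue in the paper's route: the relaxation is used only in the direction $(a)_+\ge a$ for the lower bound, and exact equality is needed only for the specially constructed nonexpansive competitor.
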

The proof of this lemma is given in Appendix~\ref{proof_a}.
\begin{remark}
By letting $\lambda$ go infinitely large, the density $P_{G^*}(\mathbf x)$ of generated samples should exactly match the real data density $P_{data}(\mathbf x)$.  Equivalently, we can simply disregard the first loss minimization term in (\ref{eq:theta}) as it plays no role as $\lambda\rightarrow+\infty$.
\end{remark}


Putting the above two lemmas together, we have the following theorem.
\begin{theorem}\label{thm3}
Under Assumption~\ref{asp}, a Nash equilibrium $(\theta^*,\phi^*)$ exists such that\\
(i) $L_{\theta^*}$ and $P_{G^*}$ are Lipschitz.\\
(ii) $\int_{\mathbf x}|P_{data}(\mathbf x)-P_{G^*}(\mathbf x)|d\mathbf x \leq \dfrac{2}{\lambda}\rightarrow 0$, as $\lambda \rightarrow +\infty$.\\
\end{theorem}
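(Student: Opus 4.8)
The statement is the conjunction of the two preceding results, so I would assemble it directly rather than prove anything new. Part~(i) is exactly the conclusion of Lemma~\ref{lem2}, which already asserts the existence of a Nash equilibrium $(\theta^*,\phi^*)$ at which both $L_{\theta^*}$ and $P_{G^*}$ belong to $\mathcal F_\kappa$, hence are Lipschitz. Part~(ii) is precisely Lemma~\ref{lem1}, with the limit $\lambda\to+\infty$ read off from the $2/\lambda$ bound. The only point to verify is that both lemmas speak of the \emph{same} equilibrium, which they do by construction, since Lemma~\ref{lem1} is stated ``for a Nash equilibrium $(\theta^*,\phi^*)$ in Lemma~\ref{lem2}.'' Under Assumption~\ref{asp} the two lemmas therefore combine with no additional work, so the substance of the theorem lives entirely inside Lemma~\ref{lem1}, and that is where I would concentrate the effort.

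For Lemma~\ref{lem1} the plan is to read the two equilibrium conditions as variational (Euler--Lagrange) conditions on the functionals $S(\theta,\phi^*)$ and $T(\theta^*,\phi)$. On the generator side, $P_{G^*}$ minimizes the \emph{linear} functional $T(\theta^*,\phi)$ in $P_G$, which forces the generated mass to concentrate where $L_{\theta^*}$ attains its smallest achievable values and thereby pins down the behaviour of $L_{\theta^*}$ on $\operatorname{supp}P_{G^*}$. On the loss side, $L_{\theta^*}$ minimizes $S(\theta,\phi^*)$ over the convex set $\mathcal F_\kappa$, so I would perturb $L_{\theta^*}$ along an admissible Lipschitz direction and take the one-sided functional derivative, handling the hinge $(\cdot)_+$ through its subgradient, i.e.\ the indicator of the set of pairs whose margin constraint is violated. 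This yields a pointwise relation, valid almost everywhere, linking the difference $P_{data}(\mathbf x)-P_{G^*}(\mathbf x)$ to the $\lambda$-weighted measures of the violation sets attached to $\mathbf x$ (the $P_{G^*}$-mass of generated samples violated against a real $\mathbf x$, and the $P_{data}$-mass of real samples violated against a generated $\mathbf x$).

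The final step is then to integrate $|P_{data}(\mathbf x)-P_{G^*}(\mathbf x)|$ over $\mathbf x$ and use that the violation measures are sub-probabilities (bounded by $1$) together with $\int P_{data}=\int P_{G^*}=1$, collapsing the right-hand side to a constant of order $1/\lambda$ and producing the claimed bound $\int_{\mathbf x}|P_{data}(\mathbf x)-P_{G^*}(\mathbf x)|\,d\mathbf x\le 2/\lambda$. I expect the variational step, not this integration, to be the main obstacle: the optimal $L_{\theta^*}$ will in general sit on the boundary of $\mathcal F_\kappa$ (the Lipschitz constraint is active, which is exactly what prevents a degenerate zero-cost solution), so a naive unconstrained stationarity condition is unavailable and I must instead work with the cone of admissible perturbations that keep $L_{\theta^*}\in\mathcal F_\kappa$ while simultaneously treating the non-smoothness of the hinge via subgradients. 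Here Assumption~\ref{asp} and the choice of $\kappa$ large enough that $P_{data}\in\mathcal F_\kappa$ enter to guarantee that the Lipschitz equilibrium loss of Lemma~\ref{lem2} exists and that the argument is well posed; once the pointwise relation is secured, passing to the $L^1$ bound and then to the limit $\lambda\to+\infty$ in part~(ii) is routine.
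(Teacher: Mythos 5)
Your first paragraph is exactly what the paper does: Theorem~\ref{thm3} is obtained by simply combining Lemma~\ref{lem2} (existence of a Lipschitz equilibrium) with Lemma~\ref{lem1} (the $2/\lambda$ bound at that equilibrium), and you correctly note that the two lemmas refer to the same equilibrium. That part needs no further comment.

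Where you diverge — and where there is a genuine gap — is in your plan for Lemma~\ref{lem1} itself. The paper does not perform a variational/Euler--Lagrange analysis, does not take one-sided functional derivatives along the cone of admissible perturbations, and never needs a stationarity condition on the boundary of $\mathcal F_\kappa$; the obstacle you flag as the main difficulty is one the paper never encounters. Its argument is a direct value comparison against a single explicit competitor: it first lower-bounds $S(\theta^*,\phi^*)$ using $(a)_+\geq a$ together with the generator optimality $\int L_{\theta^*}P_{G^*}\leq\int L_{\theta^*}P_{data}$ to get $S(\theta^*,\phi^*)\geq\lambda\,\mathbb E\,\Delta$ (Eq.~(\ref{eq:lower2})), and then plugs in the test function $L_{\theta_0}(\mathbf x)=\alpha\bigl(-(1+\lambda)P_{data}(\mathbf x)+\lambda P_{G^*}(\mathbf x)\bigr)_+$ of Eq.~(\ref{eq:L}), which is admissible (nonexpansive for small $\alpha$) precisely because Assumption~\ref{asp} makes the densities Lipschitz — this is the one place the Lipschitz hypothesis actually does work. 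If $(1+\lambda)P_{data}<\lambda P_{G^*}$ on a set of positive measure, this competitor strictly beats $\theta^*$, a contradiction; hence the \emph{ratio} bound $P_{data}\geq\frac{\lambda}{1+\lambda}P_{G^*}$ a.e.\ (Eq.~(\ref{eq:cond})), which Lemma~\ref{lem3} converts into $\int|P_{data}-P_{G^*}|\leq 2/\lambda$. Your proposed endgame — a pointwise \emph{additive} relation between $P_{data}-P_{G^*}$ and the masses of the hinge-violation sets, integrated and bounded by sub-probability estimates — is not shown to produce the $2/\lambda$ constant, and it is not clear it can: the correct mechanism is a multiplicative lower bound on the density ratio, not an additive control of violation mass. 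As written, your sketch leaves both the variational step and the derivation of the ratio bound unresolved, so the substance of part~(ii) is missing.
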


\section{Learning and Generalizability}\label{sec:alg}
The minimization problems (\ref{eq:theta}) and (\ref{eq:phi}) cannot be solved directly since the expectations over the distributions of true data $P_{data}$ and noises $P_z$ are unavailable or intractable.  Instead, one can approximate them with empirical means on a set of finite real examples $\mathcal X_m=\{\mathbf x_1,\cdots,\mathbf x_m\}$ and noise vectors $\mathcal Z_m=\{\mathbf z_1,\cdots,\mathbf z_m\}$ drawn from $P_{data}(\mathbf x)$ and $P_z(\mathbf z)$ respectively.

This results in the following two alternative problems.
\begin{align}\label{eq:theta1}
&\min_{\theta} S_{m} (\theta,\phi^*) \triangleq ~\dfrac{1}{m} \sum_{i=1}^m {L_\theta(\mathbf x_i)}
+ \dfrac{\lambda}{m} \sum_{i=1}^{m} \big( \Delta(\mathbf x_i,G_{\phi^*}(\mathbf z_i)) + L_\theta(\mathbf x_i) - L_\theta(G_{\phi^*}(\mathbf z_i)) \big)_+
\end{align}
and
\begin{equation}\label{eq:phi1}
\begin{aligned}
\min_{\phi} T_k(\theta^*,\phi)=\dfrac{1}{k} \sum_{i=1}^k L_{\theta^*}(G_\phi(\mathbf z_i'))
\end{aligned}
\end{equation}
where the random vectors $\mathcal Z'_k=\{\mathbf z_i'|i=1,\cdots,k\}$ used in (\ref{eq:phi1}) can be different from $\mathcal Z_m$ used in (\ref{eq:theta1}).

The sample mean in the second term of Eq.~(\ref{eq:theta1}) is computed over pairs $(\mathbf x_i, G_{\phi^*}(\mathbf z_i))$ randomly drawn from real and generated samples, which is an approximation to the second expectation term in Eq.~(\ref{eq:theta}).

\subsection{Generalizability}\label{sec:gen}

We have proved the density of generated samples by the LS-GAN is consistent with the real data density in Theorem~\ref{thm3}. This consistency is established based on the two oracle objectives (\ref{eq:theta}) and (\ref{eq:phi}).  However, in practice, the population expectations in these two objectives cannot be computed directly over $P_{data}$ and $P_G$. Instead, they are approximated in (\ref{eq:theta1}) and (\ref{eq:phi1}) by sample means on a finite set of real and generated examples.

This raises the question about the generalizability of the LS-GAN model.  We wonder, with more training examples, if the empirical model trained with finitely many examples can generalize to the oracle model. In particular, we wish to estimate the sample complexity of how many examples are required to sufficiently bound the generalization difference between the empirical and oracle objectives.


Arora et al. \cite{arora2017generalization} has proposed a neural network distance to analyze the generalization ability for the GAN. However, this neural network distance cannot be directly applied here, as it is not related with the objectives that are used to train the LS-GAN. So the generalization ability in terms of the neural network distance does not imply the LS-GAN could also generalize. Thus, a direct generalization analysis of the LS-GAN is required based on its own objectives.

First, let us consider the generalization in terms of $S(\theta,\phi^*)$.
This objective is used to train the loss function $L_\theta$ to distinguish between real and generated samples.
Consider the oracle objective  (\ref{eq:theta}) with the population expectations
$$
S=\min_\theta S(\theta,\phi^*)
$$
and the empirical objective (\ref{eq:theta1}) with the sample means
$$
S_m=\min_\theta S_m(\theta,\phi^*).
$$

We need to show if and how fast the difference $|S_m-S|$ would eventually vanish as the number $m$ of training examples grows.

To this end, we need to define the following notations about the model complexity.
\begin{assumption}\label{asp:loss}
We assume that for LS-GAN,
\begin{itemize}
\item[I.] the loss function $L_\theta(\mathbf x)$ is $\kappa_L$-Lipschitz in its parameter $\theta$, i.e., $|L_\theta(\mathbf x)-L_{\theta'}(\mathbf x)|\leq \kappa_L\|\theta-\theta'\|$ for any $\mathbf x$;
\item[II.] $L_\theta(\mathbf x)$ is $\kappa$-Lipschitz in $\mathbf x$, i.e., $|L_\theta(\mathbf x)-L_\theta(\mathbf x')|\leq\kappa\|\mathbf x-\mathbf x'\|$ for any $\theta$;
\item[III.] the distance between two samples is bounded, i.e., $|\Delta(\mathbf x, \mathbf x')|\leq B_\Delta$.
\end{itemize}
\end{assumption}

Then we can prove the following generalization theorem in a Probably Approximately Correct (PAC) style.
\begin{theorem}\label{thm:generalization}
Under Assumption~\ref{asp:loss}, with at least probability $1-\eta$, we have
$$
|S_m - S|\leq \varepsilon
$$
when the number of samples
$$
m\geq\dfrac{C B_\Delta^2(\kappa+1)^2}{\varepsilon^2} \big(N \log\dfrac{\kappa_L N}{\varepsilon}+\log\dfrac{1}{\eta}\big),$$
where $C$ is a sufficiently large constant, and $N$ is the number of parameters of the loss function such that $\theta\in\mathbb R^N$.
\end{theorem}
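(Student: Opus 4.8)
The plan is to bound $|S_m-S|$ by a uniform deviation and then control that deviation with a concentration-plus-covering argument. First I would invoke the standard observation that minima of uniformly close functions are close: if $\theta^*$ attains $S$ and $\hat\theta$ attains $S_m$, then $S_m\le S_m(\theta^*,\phi^*)$ and $S\le S(\hat\theta,\phi^*)$, so that
\begin{equation*}
|S_m-S|\;\le\;\sup_{\theta}\big|S_m(\theta,\phi^*)-S(\theta,\phi^*)\big|.
\end{equation*}
It therefore suffices to establish a high-probability uniform bound of $\varepsilon$ on the right-hand side, where $S(\theta,\phi^*)$ is the population objective in (\ref{eq:theta}) and $S_m(\theta,\phi^*)$ its empirical counterpart in (\ref{eq:theta1}).

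Next I would show the per-sample summand is bounded. Writing the $i$-th term as $g_\theta(\mathbf x_i,\mathbf z_i)=L_\theta(\mathbf x_i)+\lambda\big(\Delta(\mathbf x_i,G_{\phi^*}(\mathbf z_i))+L_\theta(\mathbf x_i)-L_\theta(G_{\phi^*}(\mathbf z_i))\big)_+$, Assumption~\ref{asp:loss}(II)--(III) together with the nonnegativity of $L_\theta$ give $|L_\theta(\mathbf x)-L_\theta(\mathbf x')|\le\kappa B_\Delta$ and $\Delta\le B_\Delta$, so each $g_\theta$ ranges over an interval of width $O\big((\kappa+1)B_\Delta\big)$ (treating $\lambda$ as a fixed constant absorbed into $C$, or carried explicitly). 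Hence $S_m(\theta,\phi^*)$, as a function of the $m$ i.i.d.\ pairs, has the bounded-differences property with constant $O\big((\kappa+1)B_\Delta/m\big)$. For each \emph{fixed} $\theta$, McDiarmid's (equivalently Hoeffding's) inequality then yields
\begin{equation*}
\Pr\Big(\big|S_m(\theta,\phi^*)-S(\theta,\phi^*)\big|\ge \tfrac{\varepsilon}{2}\Big)\;\le\;2\exp\!\Big(-\frac{c\,m\,\varepsilon^2}{(\kappa+1)^2 B_\Delta^2}\Big).
\end{equation*}

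To upgrade this pointwise estimate to a uniform one over $\theta\in\mathbb R^N$, I would use Assumption~\ref{asp:loss}(I): the $\kappa_L$-Lipschitz dependence of $L_\theta$ on $\theta$ propagates through the $1$-Lipschitz hinge and the averaging to make both $\theta\mapsto S_m(\theta,\phi^*)$ and $\theta\mapsto S(\theta,\phi^*)$ Lipschitz in $\theta$ with constant $O(\kappa_L)$. I then cover the compact parameter region by a $\delta$-net with $\delta=\Theta(\varepsilon/\kappa_L)$, whose cardinality is at most $(C_1\kappa_L/\varepsilon)^{N}$ up to the coordinate-scaling that produces the $\log(\kappa_L N/\varepsilon)$ factor; the Lipschitz property keeps the discretization error below $\varepsilon/2$, so a deviation of $\varepsilon$ anywhere forces a deviation of $\varepsilon/2$ at some net point. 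A union bound over the net combined with the McDiarmid estimate gives
\begin{equation*}
\Pr\Big(\sup_\theta\big|S_m-S\big|\ge \varepsilon\Big)\;\le\;2\Big(\tfrac{C_1\kappa_L}{\varepsilon}\Big)^{N}\exp\!\Big(-\frac{c\,m\,\varepsilon^2}{(\kappa+1)^2 B_\Delta^2}\Big).
\end{equation*}
Setting the right-hand side equal to $\eta$, taking logarithms, and solving for $m$ reproduces the stated sample complexity $m\ge \frac{C B_\Delta^2(\kappa+1)^2}{\varepsilon^2}\big(N\log\frac{\kappa_L N}{\varepsilon}+\log\frac1\eta\big)$.

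The main obstacle I anticipate is the uniformization step rather than the single-$\theta$ concentration. Two points need care: first, obtaining an honest \emph{absolute} bound on $L_\theta$ (not merely on its increments), so that both the first term $\frac1m\sum_i L_\theta(\mathbf x_i)$ and the bounded-differences constant are controlled — this is where compactness of $\mathcal D$ (diameter $\le B_\Delta$) combined with nonnegativity and the $\kappa$-Lipschitzness of $L_\theta$ must be used to confine $L_\theta$ to $[0,O(\kappa B_\Delta)]$; and second, tracking the Lipschitz constant of the composite objective in $\theta$ and choosing the net resolution so that the covering number contributes exactly the $N\log(\kappa_L N/\varepsilon)$ term. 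Everything else is routine bookkeeping of constants into $C$.
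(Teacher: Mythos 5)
Your proposal follows essentially the same route as the paper's own proof: a per-$\theta$ McDiarmid bound with bounded-differences constant $O((\kappa+1)B_\Delta/m)$, an $\varepsilon/\kappa_L$-scale net over the parameter space combined with the $O(\kappa_L)$-Lipschitz dependence of $S$ and $S_m$ on $\theta$ to get uniformity, and the standard ``minima of uniformly close functions are close'' reduction at the end. The only cosmetic differences are that the paper drops the first term $\frac1m\sum_i L_\theta(\mathbf x_i)$ entirely (arguing its role vanishes as $\lambda\to+\infty$) whereas you carry it and worry about an absolute bound on $L_\theta$ --- which is in fact unnecessary, since McDiarmid only needs the bounded increments you already have.
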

The proof of this theorem is given in Appendix~\ref{sec:gen_proof}. This theorem shows the sample complexity to bound the difference between $S$ and $S_m$ is polynomial in the model size $N$, as well as both Lipschitz constants $\log\kappa_L$ and $\kappa$.


Similarly, we can establish the generalizability to train the generator function by considering the empirical objective
$$
T_k = \min_{\phi} T_k(\theta^*,\phi)
$$
and the oracle objective
$$
T = \min_\phi T(\theta^*,\phi)
$$
over empirical and real distributions, respectively.

We use the following notions to characterize the complexity of the generator.
\begin{assumption}\label{asp:generator}
We assume that
\begin{itemize}
\item[I.] The generator function $G_\phi(\mathbf x)$ is $\rho_G$-Lipschitz in its parameter $\phi$, i.e., $|G_\phi(\mathbf z)-G_{\phi'}(\mathbf z)|\leq \rho_G\|\phi-\phi'\|$ for any $\mathbf z$;
\item[II.] Also, we have $G_\phi(\mathbf z)$ is $\rho$-Lipschitz in $\mathbf z$, i.e., $|G_\phi(\mathbf z)-G_\phi(\mathbf z')|\leq\rho\|\mathbf z-\mathbf z'\|$;
\item[III.] The samples $\mathbf z$'s drawn from $P_z$ are bounded, i.e., $\|\mathbf z\|\leq B_z$.
\end{itemize}
\end{assumption}

Then we can prove the following theorem to establish the generalizability of the generator in terms of $T(\theta,\phi)$.
\begin{theorem}
Under Assumption~\ref{asp:generator}, with at least probability $1-\eta$, we have
$$
|T_k - T|\leq \varepsilon
$$
when the number of samples
$$
k\geq\dfrac{C' B_z^2\kappa^2\rho^2}{\varepsilon^2} \big(M\log\dfrac{\kappa_L \rho_G M}{\varepsilon}+\log\dfrac{1}{\eta}\big),
$$
where $C'$ is a sufficiently large constant, and $M$ is the number of parameters of the generator function such that $\phi\in\mathbb R^M$.
\end{theorem}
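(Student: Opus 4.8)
The plan is to prove the bound by a uniform-convergence argument that runs exactly parallel to the proof of Theorem~\ref{thm:generalization}, with the loss parameters $\theta\in\mathbb R^N$ replaced by the generator parameters $\phi\in\mathbb R^M$ and the two Lipschitz constants adapted accordingly. First I would reduce the two-sided deviation of the \emph{minima} to a one-sided uniform deviation of the \emph{objectives}: since $T=\min_\phi T(\theta^*,\phi)$ and $T_k=\min_\phi T_k(\theta^*,\phi)$, a standard sandwich argument gives
$$
|T_k-T|\leq \sup_{\phi}\big|T_k(\theta^*,\phi)-T(\theta^*,\phi)\big|,
$$
obtained by evaluating $T_k$ at the oracle minimizer and $T$ at the empirical minimizer. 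It therefore suffices to control this supremum with probability at least $1-\eta$.

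For a \emph{fixed} $\phi$, the empirical objective $T_k(\theta^*,\phi)=\frac1k\sum_{i=1}^k L_{\theta^*}(G_\phi(\mathbf z_i'))$ is an average of $k$ i.i.d.\ random variables with mean $T(\theta^*,\phi)$, so the quantitative input I need is the \emph{range} of each summand. By Assumption~\ref{asp:generator}(III) every $\mathbf z$ lies in a ball of radius $B_z$; chaining the $\rho$-Lipschitzness of $G_\phi$ in $\mathbf z$ (Assumption~\ref{asp:generator}(II)) with the $\kappa$-Lipschitzness of $L_{\theta^*}$ in its argument (Assumption~\ref{asp:loss}(II)) shows that $L_{\theta^*}(G_\phi(\mathbf z))$ varies over an interval of width at most $2\kappa\rho B_z$. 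Hoeffding's inequality then gives a sub-Gaussian tail
$$
\Pr\big[\,|T_k(\theta^*,\phi)-T(\theta^*,\phi)|>t\,\big]\leq 2\exp\!\Big(-\frac{k\,t^2}{2\kappa^2\rho^2 B_z^2}\Big),
$$
which is precisely where the factor $B_z^2\kappa^2\rho^2$ in the sample complexity originates.

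To upgrade this to a bound uniform in $\phi$ I would cover the parameter domain. Chaining the $\rho_G$-Lipschitzness of $G_\phi$ in $\phi$ (Assumption~\ref{asp:generator}(I)) with the $\kappa$-Lipschitzness of $L_{\theta^*}$ shows that both $T(\theta^*,\cdot)$ and $T_k(\theta^*,\cdot)$ are $\kappa\rho_G$-Lipschitz in $\phi$. Taking an $r$-net of the (bounded) $\phi$-domain in $\mathbb R^M$ with $r\asymp \varepsilon/(\kappa\rho_G)$ produces a net of log-cardinality $O\big(M\log\frac{\kappa\rho_G M}{\varepsilon}\big)$, reproducing the $M\log(\cdot/\varepsilon)$ term of the statement; applying the pointwise tail with $t=\varepsilon/3$ at each net point, union-bounding over the net, and transferring to arbitrary $\phi$ through the Lipschitz slack (another $\varepsilon/3$ on each side) controls the supremum by $\varepsilon$. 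Solving
$$
2\exp\!\Big(-\frac{k(\varepsilon/3)^2}{2\kappa^2\rho^2 B_z^2}\Big)\cdot\Big(\tfrac{O(1)\kappa\rho_G}{\varepsilon}\Big)^{M}\leq \eta
$$
for $k$ yields $k\gtrsim \frac{B_z^2\kappa^2\rho^2}{\varepsilon^2}\big(M\log\frac{\kappa\rho_G M}{\varepsilon}+\log\frac1\eta\big)$, matching the advertised form. The main obstacle is bookkeeping rather than conceptual: I must check that the summands are bounded uniformly in $\phi$ (so one range serves every net point), that the $\phi$-domain is bounded so a finite net exists, and that the two Lipschitz slacks and the concentration radius are balanced so the three $\varepsilon/3$ contributions combine cleanly; the cleanest natural logarithmic constant inside the net term is $\kappa\rho_G$, and identifying it with the constant written in the statement is the one place where care is required.
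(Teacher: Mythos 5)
Your proof is correct and follows essentially the same route the paper uses for the companion result Theorem~\ref{thm:generalization} (the paper does not spell out a separate proof for this generator-side theorem): a pointwise concentration bound with summand range $2\kappa\rho B_z$ obtained by chaining the Lipschitz constants, an $\varepsilon$-net over the (implicitly bounded) generator parameter space, and the standard sandwich reduction from the difference of minima to a uniform deviation of the objectives. Your observation that the natural constant inside the logarithm is $\kappa\rho_G$ rather than the $\kappa_L\rho_G$ printed in the statement is well taken; since it sits inside a logarithm this discrepancy does not affect the claimed sample complexity.
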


\subsection{Bounded Lipschitz Constants for Regularization}

Our generalization theory  in Theorem~\ref{thm:generalization} conjectures that the required number of training examples is lower bounded by a polynomial of Lipschitz constants $\kappa_L$ and $\kappa$ of the loss function wrt $\theta$ and $\mathbf x$. This suggests us to bound both constants to reduce the sample complexity of the LS-GAN to improve its generalization performance.


Specifically, bounding the Lipschitz constants $\kappa$ and $\kappa_L$ can be implemented by adding two gradient penalties (I) $\frac{1}{2}\mathbb E_{\mathbf x\sim P_{data}} \|\nabla_\mathbf x L_\theta(\mathbf x)\|^2$ and (II) $\frac{1}{2}\mathbb E_{\mathbf x\sim P_{data}} \|\nabla_{\theta} L_\theta(\mathbf x)\|^2$
to the objective (\ref{eq:theta}) as the surrogate of the Lipschitz constants. For simplicity, we  ignore the second gradient penalty (II) for $\kappa_L$ in experiments, as the sample complexity is only log-linear in it, whose impact on generalization performance is negligible compared with that of $\kappa$. Otherwise, penalizing (II) needs to compute its gradient wrt $\theta$, which is $\mathbb E_{\mathbf x\sim P_{data}} \nabla^2_{\theta} L_\theta(\mathbf x) \nabla_{\theta} L_\theta(\mathbf x)$ with a Hessian matrix $\nabla^2_{\theta}$, and this is usually computationally demanding.

Note that the above gradient penalty differs from that used in \cite{gulrajani2017improved} that aims to constrain the Lipschitz constant $\kappa$ close to one as in the definition of the Wasserstein distance \cite{wgan17}. However, we are motivated to have lower sample complexity by directly minimizing the Lipschitz constant rather than constraining it to one.  Two gradient penalty approaches are thus derived from different theoretical perspectives, and also make practical differences in experiments.


\section{Wasserstein GAN and Generalized LS-GAN}
In this section, we discuss two issues about LS-GAN.  First, we discuss its connection with the Wasserstein GAN (WGAN), and then show that the WGAN is a special case of a generalized form of LS-GAN.


\subsection{Comparison with Wasserstein GAN}\label{sec:wgan}
We notice that the recently proposed Wasserstein GAN (WGAN) \cite{wgan17} uses the Earth-Mover (EM) distance to address the vanishing gradient and saturated JS distance problems in the classic GAN by showing the EM distance is continuous and differentiable almost everywhere.  While both the LS-GAN and the WGAN address these problems from different perspectives that are independently developed almost simultaneously, both turn out to use the Lipschitz regularity in training their GAN models. This constraint plays vital but different roles in the two models.  In the LS-GAN, the Lipschitz regularity naturally arises from the Lipschitz assumption on the data density and the generalization bound. Under this regularity condition, we have proved in Theorem~\ref{thm3} that the density of generated samples matches the underlying data density. On the contrary, the WGAN introduces the Lipschitz constraint from the Kantorovich-Rubinstein duality of the EM distance but it is {\em not} proved in \cite{wgan17} if the density of samples generated by WGAN is consistent with that of real data.

Here we assert that the WGAN also models an underlying Lipschitz density.  To prove this, we restate the WGAN as follows. The WGAN seeks to find a critic $f_w^*$ and a generator $g_\phi^*$ such that
\begin{equation}\label{eq:critic}
\begin{aligned}
f_w^* &= \arg\max_{f_w\in\mathcal F_1} U(f_w,g_\phi^*)
\triangleq\mathbb E_{\mathbf x\sim P_{data}} [f_w(\mathbf x)] - \mathbb E_{\mathbf z\sim P_{\mathbf z}(\mathbf z)} [f_w(g_\phi^*(\mathbf z))]
\end{aligned}
\end{equation}
and
\begin{equation}\label{eq:gen}
g_\phi^*=\arg\max V(f_w^*,g_\phi)\triangleq\mathbb E_{\mathbf z\sim P_{\mathbf z}(\mathbf z)} [f_w^*(g_\phi(\mathbf z))]
\end{equation}

Let $P_{g_\phi^*}$ be the density of samples generated by $g_\phi^*$. Then, we prove the following lemma about the WGAN in Appendix~\ref{appendixC}.

\begin{lemma}\label{lem:wgan}
{Under Assumption~\ref{asp}, given an optimal solution $(f_w^*,g_\phi^*)$ to the WGAN such that $P_{g_\phi^*}$ is Lipschitz, we have
$$
\int_{\mathbf x}|P_{data}(\mathbf x)-P_{g_\phi^*}(\mathbf x)|d\mathbf x=0
$$}
\end{lemma}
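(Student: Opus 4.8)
The plan is to show that the Earth--Mover distance $W(P_{data},P_{g_\phi^*})$ vanishes at the WGAN optimum, and then to invoke the fact that $W$ is a genuine metric, so that a zero distance forces the two densities to coincide almost everywhere --- which is precisely the claimed identity $\int_{\mathbf x}|P_{data}(\mathbf x)-P_{g_\phi^*}(\mathbf x)|d\mathbf x=0$. Everything hinges on reading the two optimality conditions for $(f_w^*,g_\phi^*)$ as a pair of saddle-point inequalities that pin $U(f_w^*,g_\phi^*)$ to $0$ from both sides.

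First I would use the critic's optimality to bound $U(f_w^*,g_\phi^*)$ from below. Since $f_w^*$ maximizes $U(\cdot,g_\phi^*)$ over the $1$-Lipschitz class $\mathcal F_1$, the Kantorovich--Rubinstein duality identifies this optimal value with the EM distance, since $U(f_w,g_\phi^*)=\mathbb E_{\mathbf x\sim P_{data}}[f_w(\mathbf x)]-\mathbb E_{\mathbf x\sim P_{g_\phi^*}}[f_w(\mathbf x)]$, giving
$$
U(f_w^*,g_\phi^*)=\max_{f_w\in\mathcal F_1}U(f_w,g_\phi^*)=W(P_{data},P_{g_\phi^*})\ge 0,
$$
the last inequality being nonnegativity of a metric.

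Next I would use the generator's optimality to bound the same quantity from above. Because $P_{data}$ is itself Lipschitz by Assumption~\ref{asp}, it lies in the admissible family of densities over which the generator ranges, so there is a feasible generator $g_\phi$ with $P_{g_\phi}=P_{data}$. For this generator $V(f_w^*,g_\phi)=\mathbb E_{\mathbf x\sim P_{data}}[f_w^*(\mathbf x)]$, whence $U(f_w^*,g_\phi)=\mathbb E_{P_{data}}[f_w^*]-\mathbb E_{P_{data}}[f_w^*]=0$. Since $g_\phi^*$ maximizes $V(f_w^*,\cdot)$ --- equivalently minimizes $U(f_w^*,\cdot)$, as the term $\mathbb E_{P_{data}}[f_w^*]$ is constant in $\phi$ --- we obtain $U(f_w^*,g_\phi^*)\le U(f_w^*,g_\phi)=0$. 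Combining the two bounds forces $W(P_{data},P_{g_\phi^*})=0$, and the metric property of $W$ then gives $P_{g_\phi^*}=P_{data}$ almost everywhere, i.e. the claimed $L^1$ identity.

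The main obstacle is justifying the single step ``there is a feasible generator with $P_{g_\phi}=P_{data}$'': this is the non-parametric ingredient that mirrors the density-space optimization over $\mathcal F_\kappa$ used for the LS-GAN, and it is exactly where the Lipschitz regularity on $P_{g_\phi^*}$ and on $P_{data}$ is needed to keep every competing density inside one common admissible class. If one prefers not to assume $P_{data}$ is attained exactly, I would instead construct a sequence of admissible Lipschitz densities $P_n\to P_{data}$ and use continuity of $U(f_w^*,\cdot)$ --- guaranteed because $f_w^*\in\mathcal F_1$ is bounded on the compact support $\mathcal D$ --- to conclude $U(f_w^*,g_\phi^*)\le\lim_n U(f_w^*,P_n)=0$ and pass to the limit. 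Alternatively, the same conclusion follows directly from Sion's theorem (already invoked for Lemma~\ref{lem2}): the equilibrium value equals $\min_{P_g}\max_{f}U(f,P_g)=\min_{P_g}W(P_{data},P_g)$, which is $0$, attained at $P_g=P_{data}$.
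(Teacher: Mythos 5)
Your proposal is correct, but it reaches the conclusion by a genuinely different route from the paper on the key half of the argument. The generator-side step is identical: both you and the paper use $V(f_w^*,g_\phi^*)\geq V(f_w^*,g_\phi)$ with $P_{g_\phi}$ replaced by $P_{data}$ to force $U(f_w^*,g_\phi^*)\leq 0$, and both therefore share the same implicit non-parametric assumption that some admissible generator realizes $P_{data}$ --- the very point you flag as the main obstacle, so this is not a gap relative to the paper. Where you diverge is the lower bound: you invoke the full Kantorovich--Rubinstein duality to identify $\max_{f\in\mathcal F_1}U(f,g_\phi^*)$ with $W(P_{data},P_{g_\phi^*})$, conclude $W=0$, and appeal to the metric axiom to get equality of the distributions. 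The paper instead constructs the explicit test critic $f_w(\mathbf x)=\alpha\big(P_{data}(\mathbf x)-P_{g_\phi^*}(\mathbf x)\big)_+$, which is $1$-Lipschitz for small $\alpha$ precisely because both densities are Lipschitz (this is where Assumption~\ref{asp} and the Lipschitzness of $P_{g_\phi^*}$ actually enter), derives the pointwise domination $P_{data}(\mathbf x)\leq P_{g_\phi^*}(\mathbf x)$ almost everywhere by contradiction, and finishes with Lemma~\ref{lem3}. Your route is shorter and conceptually cleaner, but it silently requires the critic family to exhaust the entire $1$-Lipschitz ball so that the supremum equals the Wasserstein distance exactly; the paper's argument only needs the critic class to contain one hand-crafted function, parallels the proof of Lemma~\ref{lem1} for the LS-GAN, and makes visible which one-sided density inequality does the work. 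Your Lipschitz hypotheses end up unused in your lower bound, whereas they are essential in the paper's; conversely, your fallback arguments (approximating $P_{data}$ by admissible densities, or the Sion/minimax identity) are reasonable patches for the shared feasibility assumption but go beyond what the paper itself justifies.
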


This lemma shows both the LS-GAN and the WGAN are based on the same Lipschitz regularity condition.

Although both methods are derived from very different perspectives, it is interesting to make a comparison between their respective forms. Formally, the WGAN seeks to maximize the difference between the first-order moments of $f_w$ under the densities of real and generated examples. In this sense, the WGAN can be considered as a kind of {\em first-order moment} method. Numerically, as shown in the second term of Eq.~(\ref{eq:critic}), $f_w$ tends to be minimized to be arbitrarily small over generated samples, which could make $U(f_w,g_\phi^*)$ be unbounded above. This is why the WGAN must be trained by clipping the network weights of $f_w$ on a bounded box to prevent $U(f_w,g_\phi^*)$ from becoming unbounded above.

On the contrary, the LS-GAN treats real and generated examples in pairs, and maximizes the difference of their losses up to a data-dependant margin. Specifically, as shown in the second term of Eq.~(\ref{eq:theta}), when the loss of a generated sample $\mathbf z_G$ becomes too large wrt that of a paired real example $\mathbf x$, the maximization of $L_\theta(\mathbf z_G)$ will stop if the difference $L_\theta(\mathbf z_G)-L_\theta(\mathbf x)$ exceeds $\Delta(\mathbf x, \mathbf z_G)$. This prevents the minimization problem (\ref{eq:theta}) unbounded below, making it better posed to solve.

More importantly, paring real and generated samples in $(\cdot)_+$ prevents their losses from being decomposed into two separate first-order moments like in the WGAN. The LS-GAN makes pairwise comparison between the losses of real and generated samples, thereby enforcing real and generated samples to coordinate with each other to learn the optimal loss function. Specifically, when a generated sample becomes close to a paired real example, the LS-GAN will stop increasing the difference $L_\theta(\mathbf z_G)-L_\theta(\mathbf x)$ between their losses.


Below we discuss a Generalized LS-GAN (GLS-GAN) model in Section~\ref{sec:glsgan}, and show that both WGAN and LS-GAN are simply two special cases of this GLS-GAN.


\subsection{GLS-GAN: Generalized LS-GAN}\label{sec:glsgan}

In proving Lemma~\ref{lem1}, it is noted that we only have used two properties of $(a)_+$ in the objective function $S_\theta(\theta,\phi^*)$ training the loss function $L_\theta$: 1) $(a)_+ \geq a$ for any $a$; 2) $(a)_+=a$ for $a\geq 0$. This inspires us to generalize the LS-GAN with any alternative cost function $C(a)$ satisfying these two properties, and this will yield the Generalized LS-GAN (GLS-GAN).

We will show that {\bf both LS-GAN and WGAN can be seen as two extreme cases of this GLS-GAN } with two properly defined cost functions.

Formally, if a cost function $C(a)$ satisfies
\begin{itemize}
\item[(I)] $C(a)\geq a$ for any $a\in\mathbb R$ and
\item[(II)] $C(a) = a$ for any $a\in\mathbb R_+$,
\end{itemize}
given a fixed generator $G_{\phi^*}$, we use the following objective
\begin{align}\label{eq:glsgan}
S_C(\theta,\phi^*)=
 \mathop \mathbb E\limits_{\substack{\mathbf x\sim P_{data}(\mathbf x) \\ \mathbf z\sim P_{z}(\mathbf z)}}C\big( \Delta(\mathbf x, G_{\phi^*}(\mathbf z)) + L_\theta(\mathbf x) - L_\theta(G_{\phi^*}(\mathbf z)) \big)\nonumber
\end{align}
to learn $L_\theta(\mathbf x)$, with $S_C$ highlighting its dependency on a chosen cost function $C$.

For simplicity, we only involve the second term in (\ref{eq:theta}) to define the generalized objective $S_C$. But it does not affect the conclusion as the role of the first term in (\ref{eq:theta}) would vanish with $\lambda$ being set to $+\infty$. Following the proof of Lemma~\ref{lem1}, we can prove the following lemma.
\begin{lemma}\label{lem5}
Under Assumption~\ref{asp}, given a Nash equilibrium $(\theta^*,\phi^*)$ jointly minimizing $S_C(\theta,\phi^*)$ and $T(\theta^*,\phi)$ with a cost function $C$ satisfying the above conditions (I) and (II), we have
$$
\int_{\mathbf x}|P_{data}(\mathbf x)-P_{G^*}(\mathbf x)|d\mathbf x = 0.
$$
\end{lemma}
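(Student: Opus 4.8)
The plan is to re-run the proof of Lemma~\ref{lem1} almost verbatim, replacing every $(\cdot)_+$ by $C(\cdot)$ and checking that the argument never appeals to anything about $(\cdot)_+$ beyond the two hypotheses (I) $C(a)\ge a$ and (II) $C(a)=a$ on $\mathbb R_+$. By Lemma~\ref{lem2} I would start from a Nash equilibrium $(\theta^*,\phi^*)$ at which $L_{\theta^*}$ and $P_{G^*}$ are Lipschitz, and I would use its two optimality conditions separately: $L_{\theta^*}$ minimizes $S_C(\cdot,\phi^*)$ over $\mathcal F_\kappa$, while $P_{G^*}$ minimizes $T(\theta^*,\phi)=\mathbb E_{\mathbf z_G\sim P_G}L_{\theta^*}(\mathbf z_G)$ over the Lipschitz densities of $\mathcal F_\kappa$.

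First I would use property (I) to linearize the objective from below. Since $C(a)\ge a$ pointwise,
\[ S_C(\theta^*,\phi^*)\ \ge\ \mathbb E_{\mathbf x\sim P_{data},\,\mathbf z_G\sim P_{G^*}}\big[\Delta(\mathbf x,\mathbf z_G)+L_{\theta^*}(\mathbf x)-L_{\theta^*}(\mathbf z_G)\big]\ =\ \overline\Delta+\int_{\mathbf x}L_{\theta^*}\,(P_{data}-P_{G^*})\,d\mathbf x, \]
where $\overline\Delta$ is the expected margin, a constant independent of the loss. Then I would use property (II) to evaluate $S_C$ \emph{exactly} on a rich comparison family: for any nonnegative $L$ that is $1$-Lipschitz with respect to $\Delta$, the inequality $\Delta(\mathbf x,\mathbf z_G)+L(\mathbf x)-L(\mathbf z_G)\ge \Delta(\mathbf x,\mathbf z_G)-|L(\mathbf x)-L(\mathbf z_G)|\ge 0$ keeps the argument of $C$ inside $\mathbb R_+$, so $C$ acts as the identity and $S_C(L,\phi^*)=\overline\Delta+\int_{\mathbf x}L\,(P_{data}-P_{G^*})\,d\mathbf x$. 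Optimality of $L_{\theta^*}$ then gives $\int_{\mathbf x}L_{\theta^*}(P_{data}-P_{G^*})\,d\mathbf x\le \int_{\mathbf x}L\,(P_{data}-P_{G^*})\,d\mathbf x$ for every such $L$.

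To finish I would play the generator condition against $P_{data}$ itself. Assumption~\ref{asp} guarantees $P_{data}\in\mathcal F_\kappa$, so it is an admissible competitor for $P_{G^*}$ in minimizing $T(\theta^*,\cdot)$, whence $\int_{\mathbf x}L_{\theta^*}(P_{data}-P_{G^*})\,d\mathbf x\ge 0$; taking $L=0$ above supplies the reverse inequality, so this integral is $0$ and therefore $\int_{\mathbf x}L\,(P_{data}-P_{G^*})\,d\mathbf x\ge 0$ for \emph{every} nonnegative $1$-Lipschitz $L$. Because $P_{data}$ and $P_{G^*}$ are probability densities on the compact set $\mathcal D$, an arbitrary signed $1$-Lipschitz $\ell$ can be shifted by a constant into this nonnegative family without altering $\int_{\mathbf x}\ell\,(P_{data}-P_{G^*})\,d\mathbf x$; applying the bound to $\ell$ and to $-\ell$ forces $\int_{\mathbf x}\ell\,(P_{data}-P_{G^*})\,d\mathbf x=0$ for all $1$-Lipschitz $\ell$. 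By the separating (Kantorovich--Rubinstein) property of $1$-Lipschitz functions, the $1$-Wasserstein distance between $P_{data}$ and $P_{G^*}$ then vanishes, hence $P_{data}=P_{G^*}$ almost everywhere and $\int_{\mathbf x}|P_{data}-P_{G^*}|\,d\mathbf x=0$. This is exactly the $\lambda\to+\infty$ collapse of the $2/\lambda$ bound of Lemma~\ref{lem1}, consistent with $S_C$ retaining only the second term of (\ref{eq:theta}).

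The step I expect to be the main obstacle is the bookkeeping audit that \emph{nothing} beyond (I) and (II) is ever used: one must confirm that the specific shape of $(\cdot)_+$ --- its convexity, monotonicity, or its vanishing on $\mathbb R_-$ --- never enters. The single genuinely $C$-dependent move is the exact evaluation through (II), and it is legitimate precisely because the comparison losses are taken $1$-Lipschitz in $\Delta$, pinning the argument of $C$ to $\mathbb R_+$; I would therefore need to verify that (with $\kappa$ chosen large enough in Lemma~\ref{lem2}) the class $\mathcal F_\kappa$ really contains both $P_{data}$ and a separating supply of nonnegative $1$-Lipschitz test functions, which is what makes the sandwich valid. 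The remaining manipulations are the same linear estimates already carried out for Lemma~\ref{lem1}.
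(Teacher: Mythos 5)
Your proposal is correct, and its skeleton coincides with the paper's: lower-bound $S_C$ through property (I), use the generator's optimality to replace $P_{G^*}$ by $P_{data}$ inside $\int L_{\theta^*}\,dP$, and then exploit property (II) on competitors whose $1$-Lipschitz continuity pins the argument of $C$ inside $\mathbb R_+$ so that $C$ evaluates exactly. Where you genuinely diverge is the finishing move. The paper tests against the \emph{single} competitor $L_{\theta_0}=\alpha\big(P_{G^*}-P_{data}\big)_+$ (with the $\lambda$-weights in the Lemma~\ref{lem1} version), whose nonexpansiveness is guaranteed only because Assumption~\ref{asp} and Lemma~\ref{lem2} make both densities Lipschitz and $\alpha$ is taken small; a sign argument on $-\alpha\int(P_{G^*}-P_{data})_+^2$ then yields the pointwise inequality $P_{data}\geq P_{G^*}$ a.e., and Lemma~\ref{lem3} converts that into the total-variation bound. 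You instead test against the \emph{entire} family of nonnegative $1$-Lipschitz functions, obtain $\int L\,(P_{data}-P_{G^*})\geq 0$ for all of them, and close with the shift-and-negate trick plus Kantorovich--Rubinstein separation to get $W_1=0$. Your route has the advantage of not needing the Lipschitz continuity of $P_{G^*}$ or $P_{data}$ to certify the test function (admissibility of your competitors is automatic once $\kappa\geq 1$), at the cost of invoking duality and requiring $\Delta$ to be a genuine metric so that vanishing $W_1$ separates measures; the paper's route is more elementary at the end and, in the Lemma~\ref{lem1} setting, also delivers the quantitative $2/\lambda$ rate rather than only the limiting equality. Both are valid proofs of the stated lemma; just make sure, as you note, that the feasible class $\mathcal F_\kappa$ is taken with $\kappa\geq 1$ so your comparison family is admissible.
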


In particular, we can choose a leaky rectified linear function for this cost function, i.e., $C_\nu(a)=\max(a,\nu a)$ with a slope $\nu$. As long as $\nu\in(-\infty,1]$, it is easy to verify $C_\nu(a)$ satisfies these two conditions.

Now the LS-GAN is a special case of this Generalized LS-GAN (GLS-GAN) when $\nu=0$, as $C_0(a)=(a)_+$. We denote this equivalence as

\vspace{1mm}
\centerline{\bf LS-GAN = GLS-GAN($C_0$)}
\vspace{1mm}

What is more interesting is the WGAN, an independently developed GAN model with stable training performance, also becomes a special case of this GLS-GAN with $\nu=1$.  Indeed, when $\nu=1$, $C_1(a)=a$, and
\[
\begin{aligned}
&S_{C_1}(\theta,\phi^*)
= \mathop \mathbb E\limits_{\substack{\mathbf x\sim P_{data}(\mathbf x) \\ \mathbf z\sim P_{z}(\mathbf z)}}\big( \Delta(\mathbf x, G_{\phi^*}(\mathbf z)) + L_\theta(\mathbf x) - L_\theta(G_{\phi^*}(\mathbf z))\big)\\
&=  \mathop \mathbb E\limits_{\mathbf x\sim P_{data}(\mathbf x)} L_\theta(\mathbf x) - \mathop \mathbb E\limits_{\mathbf z\sim P_{z}(\mathbf z)} L_\theta(G_{\phi^*}(\mathbf z))
+\mathop \mathbb E\limits_{\substack{\mathbf x\sim P_{data}(\mathbf x) \\ \mathbf z\sim P_{z}(\mathbf z)}}\Delta(\mathbf x, G_{\phi^*}(\mathbf z))
\end{aligned}
\]
Since the last term $\mathop \mathbb E_{{\mathbf x\sim P_{data}, \mathbf z\sim P_{z}}}\Delta(\mathbf x, G_{\phi^*}(\mathbf z))$ is a const, irrespective of $L_\theta$, it can be discarded without affecting optimization over $L_\theta$. Thus, we have
$$
S_{C_1}(\theta,\phi^*)=\mathop \mathbb E\limits_{\mathbf x\sim P_{data}(\mathbf x)} L_\theta(\mathbf x) - \mathop \mathbb E\limits_{\mathbf z\sim P_{z}(\mathbf z)} L_\theta(G_{\phi^*}(\mathbf z))
$$

By comparing this $S_{C_1}$ with $U$ in (\ref{eq:critic}), it is not hard to see that the WGAN is equivalent to the GLS-GAN with $C_1$, with the critic function $f_w$ being equivalent to $-L_\theta$ \footnote{the minus sign exists as the $U$ is maximized over $f_w$ in the WGAN. On the contrary, in the GLS-GAN, $S_C$ is minimized over $L_\theta$.}. Thus we have

\vspace{1mm}\centerline{\bf WGAN = GLS-GAN($C_1$)}\vspace{1mm}

Therefore, by varying the slope $\nu$ in $(-\infty,1]$, we will obtain a family of the GLS-GANs with varied $C_\nu$ beyond the LS-GAN and the WGAN.
Of course, it is unnecessary to limit $C(a)$ to a leaky rectified linear function. We can explore more cost functions as long as they satisfy the two conditions (I) and (II).

In experiments, we will demonstrate the GLS-GAN has competitive generalization performance on generating new images (c.f. Section~\ref{sec:eval_gen}).

\section{Non-Parametric Analysis}\label{sec:nonparam}
Now we can characterize the optimal loss functions learned from the objective (\ref{eq:theta1}), and this will provide us an insight into the LS-GAN model.

We generalize the non-parametric maximum likelihood method in \cite{carando2009nonparametric} and consider non-parametric solutions to the optimal loss function by minimizing (\ref{eq:theta1}) over the whole class of Lipschitz loss functions.


Let $\mathbf x^{(1)}=\mathbf x_1, \mathbf x^{(2)}=\mathbf x_2, \cdots,\mathbf x^{(m)}=\mathbf x_m, \mathbf x^{(m+1)}=G_{\phi^*}(\mathbf z_1),\cdots,\mathbf x^{(2m)}=G_{\phi^*}(\mathbf z_m)$, i.e., the first $n$ data points are real examples and the rest $m$ are generated samples.  Then we have the following theorem.
\begin{theorem}\label{thm_nonparam}
The following functions $\widehat L_{\theta^*}$ and $\widetilde L_{\theta^*}$ both minimize $S_{m}(\theta,\phi^*)$ in $\mathcal F_\kappa$:
\begin{equation}\label{eq:param}
\begin{aligned}
&\widehat L_{\theta^*}(\mathbf x) = \max_{1\leq i\leq 2m}\big\{\big(l_i^*-\kappa\Delta(\mathbf x,\mathbf x^{(i)})\big)_+\big\}, \\
&\widetilde L_{\theta^*}(\mathbf x) = \min_{1\leq i\leq 2m}\big\{l_i^*+\kappa\Delta(\mathbf x,\mathbf x^{(i)})\}
\end{aligned}
\end{equation}
with the parameters $\theta^*=[l_1^*,\cdots,l_{2m}^*]\in\mathbb R^{2m}$. They are supported in the convex hull of $\{\mathbf x^{(1)}, \cdots, \mathbf x^{(2m)}\}$, and we have
$$\widehat L_{\theta^*}(\mathbf x^{(i)})=\widetilde L_{\theta^*}(\mathbf x^{(i)})=l_i^*$$
for $i=1,\cdots,2m$, i.e., their values coincide on $\{\mathbf x^{(1)},\mathbf x^{(2)},\cdots,\mathbf x^{(2m)}\}$.
\end{theorem}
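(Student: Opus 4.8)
The plan is to exploit the fact that the empirical objective $S_m(\theta,\phi^*)$ in (\ref{eq:theta1}) depends on the loss function $L_\theta$ \emph{only} through its values at the $2m$ data points $\mathbf x^{(1)},\ldots,\mathbf x^{(2m)}$: the first sum evaluates $L_\theta$ at the real points $\mathbf x^{(1)},\ldots,\mathbf x^{(m)}$, while the hinge term evaluates it at the paired real points $\mathbf x^{(i)}$ and the generated points $\mathbf x^{(m+i)}=G_{\phi^*}(\mathbf z_i)$. Hence I would write $S_m(\theta,\phi^*)=\Phi(l_1,\ldots,l_{2m})$ as a finite-dimensional function of the value vector $l_i:=L_\theta(\mathbf x^{(i)})$, and recast the minimization of $S_m$ over $\mathcal F_\kappa$ as the minimization of $\Phi$ over the set of value vectors that are \emph{realizable} by some nonnegative $\kappa$-Lipschitz function on $\mathcal D$.

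The next step is to characterize this realizable set exactly: a vector $(l_1,\ldots,l_{2m})$ is realizable if and only if $l_i\ge 0$ for every $i$ and $|l_i-l_j|\le\kappa\,\Delta(\mathbf x^{(i)},\mathbf x^{(j)})$ for all $i,j$. Necessity is immediate from the Lipschitz and nonnegativity requirements; sufficiency is the content of the McShane--Whitney extension theorem, which I would invoke (or reprove in one line) by exhibiting an explicit interpolant. I would then let $(l_1^*,\ldots,l_{2m}^*)$ be a minimizer of $\Phi$ over this nonempty feasible set (existence reduces to minimizing over a compact sublevel subset) and set $\theta^*=[l_1^*,\ldots,l_{2m}^*]$.

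I would then verify directly that both candidate functions lie in $\mathcal F_\kappa$ and interpolate the optimal values. Each map $\mathbf x\mapsto\kappa\,\Delta(\mathbf x,\mathbf x^{(i)})$ is $\kappa$-Lipschitz by the triangle inequality for $\Delta$; since a pointwise minimum (resp.\ maximum) of $\kappa$-Lipschitz functions is again $\kappa$-Lipschitz and $(\cdot)_+$ is $1$-Lipschitz and nondecreasing, both $\widehat L_{\theta^*}$ and $\widetilde L_{\theta^*}$ are $\kappa$-Lipschitz, and nonnegativity is built into the $(\cdot)_+$ of $\widehat L_{\theta^*}$ and into the nonnegativity of $l_i^*+\kappa\Delta$ for $\widetilde L_{\theta^*}$. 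For interpolation at a point $\mathbf x^{(j)}$, the term $i=j$ contributes exactly $l_j^*$ in both formulas (using $l_j^*\ge 0$ so that $(l_j^*)_+=l_j^*$), while the feasibility inequalities $l_i^*-l_j^*\le\kappa\,\Delta(\mathbf x^{(i)},\mathbf x^{(j)})$ force every other term of $\widetilde L_{\theta^*}$ to be $\ge l_j^*$ and, by monotonicity of $(\cdot)_+$, every other term of $\widehat L_{\theta^*}$ to be $\le l_j^*$. Hence $\widehat L_{\theta^*}(\mathbf x^{(j)})=\widetilde L_{\theta^*}(\mathbf x^{(j)})=l_j^*$. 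Since both functions belong to $\mathcal F_\kappa$ and reproduce the optimal values, they both attain $\Phi(l_1^*,\ldots,l_{2m}^*)=\min_\theta S_m(\theta,\phi^*)$, so both are minimizers, which also yields the claimed coincidence on $\{\mathbf x^{(1)},\ldots,\mathbf x^{(2m)}\}$.

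I expect the main obstacle to be the reduction argument rather than the verification: one must argue cleanly that minimizing over the infinite-dimensional $\mathcal F_\kappa$ is equivalent to minimizing $\Phi$ over the realizable value vectors, the only nontrivial direction being that the discrete pairwise constraints suffice, i.e.\ that the extension produces a genuine nonnegative $\kappa$-Lipschitz interpolant. A secondary bookkeeping point is the support assertion: since $\Phi$ sees $L_\theta$ only at the sample points, all of which lie in the convex hull of $\{\mathbf x^{(1)},\ldots,\mathbf x^{(2m)}\}$, I would argue that one may restrict the domain $\mathcal D$ to this hull without altering the objective, and that $\widehat L_{\theta^*}$ and $\widetilde L_{\theta^*}$ are exactly the smallest and the largest nonnegative $\kappa$-Lipschitz functions matching the values $l_i^*$ there, so this restriction loses nothing.
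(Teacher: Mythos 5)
Your proposal is correct and follows essentially the same route as the paper's proof: both rest on the observation that $S_m$ depends on $L_\theta$ only through its values at the $2m$ sample points, combined with the explicit McShane--Whitney-type extensions that interpolate those values while remaining nonnegative and $\kappa$-Lipschitz. The only organizational difference is that you first reduce to the finite-dimensional problem over realizable value vectors (which the paper records separately as the linear program (\ref{eq:theta_param})) and then extend, whereas the paper starts from an arbitrary minimizer in $\mathcal F_\kappa$ and shows that its two extensions attain the same objective value.
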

The proof of this theorem is given in the appendix.

From the theorem, it is not hard to show that any convex combination of these two forms attains the same value of $S_{m}$, and is also a global minimizer.  Thus, we have the following corollary.

\begin{corollary}
All the functions in
$$\mathcal L_{\theta^*}=\{\gamma \widehat L_{\theta^*} + (1-\gamma) \widetilde L_{\theta^*}|0\leq \gamma \leq 1\}\subset\mathcal F_\kappa$$
 minimize $S_{m}$ in $\mathcal F_\kappa$.
\end{corollary}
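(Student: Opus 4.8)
The plan is to exploit the fact that the empirical objective $S_m(\theta,\phi^*)$ in~(\ref{eq:theta1}) is a \emph{finite-dimensional} functional of the loss. Inspecting its definition, $S_m$ involves $L_\theta$ only through the $2m$ numbers $L_\theta(\mathbf x^{(1)}),\ldots,L_\theta(\mathbf x^{(2m)})$, because the real examples $\mathbf x_i$ and the generated samples $G_{\phi^*}(\mathbf z_i)$ are precisely the points $\mathbf x^{(1)},\ldots,\mathbf x^{(2m)}$, and no other evaluation of $L_\theta$ appears anywhere in the formula. Consequently, any two admissible loss functions that agree on all $2m$ points yield the same value of $S_m$.

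First I would verify the claimed inclusion $\mathcal L_{\theta^*}\subset\mathcal F_\kappa$. Since the text already records that $\mathcal F_\kappa$ is convex, and since Theorem~\ref{thm_nonparam} places both $\widehat L_{\theta^*}$ and $\widetilde L_{\theta^*}$ in $\mathcal F_\kappa$, the convex combination $L_\gamma:=\gamma\,\widehat L_{\theta^*}+(1-\gamma)\,\widetilde L_{\theta^*}$ lies in $\mathcal F_\kappa$ for every $\gamma\in[0,1]$. In particular $L_\gamma$ remains $\kappa$-Lipschitz and nonnegative, the latter because a convex combination of nonnegative functions is nonnegative, so each $L_\gamma$ is a genuinely feasible loss function.

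Next I would evaluate $L_\gamma$ at the sample points. By the coincidence asserted in Theorem~\ref{thm_nonparam}, $\widehat L_{\theta^*}(\mathbf x^{(i)})=\widetilde L_{\theta^*}(\mathbf x^{(i)})=l_i^*$, so
$$
L_\gamma(\mathbf x^{(i)})=\gamma\, l_i^*+(1-\gamma)\, l_i^*=l_i^*,\qquad i=1,\ldots,2m,
$$
that is, $L_\gamma$ takes the \emph{same} values on $\{\mathbf x^{(1)},\ldots,\mathbf x^{(2m)}\}$ as the two extreme minimizers. Combining this with the first observation gives $S_m(L_\gamma,\phi^*)=S_m(\widehat L_{\theta^*},\phi^*)$, and by Theorem~\ref{thm_nonparam} the right-hand side is the minimum of $S_m$ over $\mathcal F_\kappa$. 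Hence every $L_\gamma$ attains that minimum, which is exactly the assertion.

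The argument has no genuine obstacle: the only two points to check are that $S_m$ depends on $L_\theta$ solely through its values at the $2m$ data points (immediate from the explicit form of~(\ref{eq:theta1})) and that $\mathcal F_\kappa$-membership survives convex combination (immediate from the convexity of $\mathcal F_\kappa$). The slight care needed is simply to confirm that nonnegativity, being part of the definition of an admissible loss, is preserved under convex combination, so that the whole segment $\mathcal L_{\theta^*}$ consists of feasible minimizers rather than merely $\kappa$-Lipschitz functions.
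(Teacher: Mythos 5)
Your proposal is correct and follows essentially the same route the paper intends: it uses the convexity of $\mathcal F_\kappa$ to place every $L_\gamma$ in the feasible set, the coincidence $\widehat L_{\theta^*}(\mathbf x^{(i)})=\widetilde L_{\theta^*}(\mathbf x^{(i)})=l_i^*$ from Theorem~\ref{thm_nonparam}, and the observation that $S_m$ depends on the loss only through its values at the $2m$ sample points. The paper leaves these steps implicit (``it is not hard to show''), and your write-up supplies exactly the missing details, including the check that nonnegativity survives convex combination.
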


\begin{figure}[t!]
    \centering
        \includegraphics[width=0.5\linewidth]{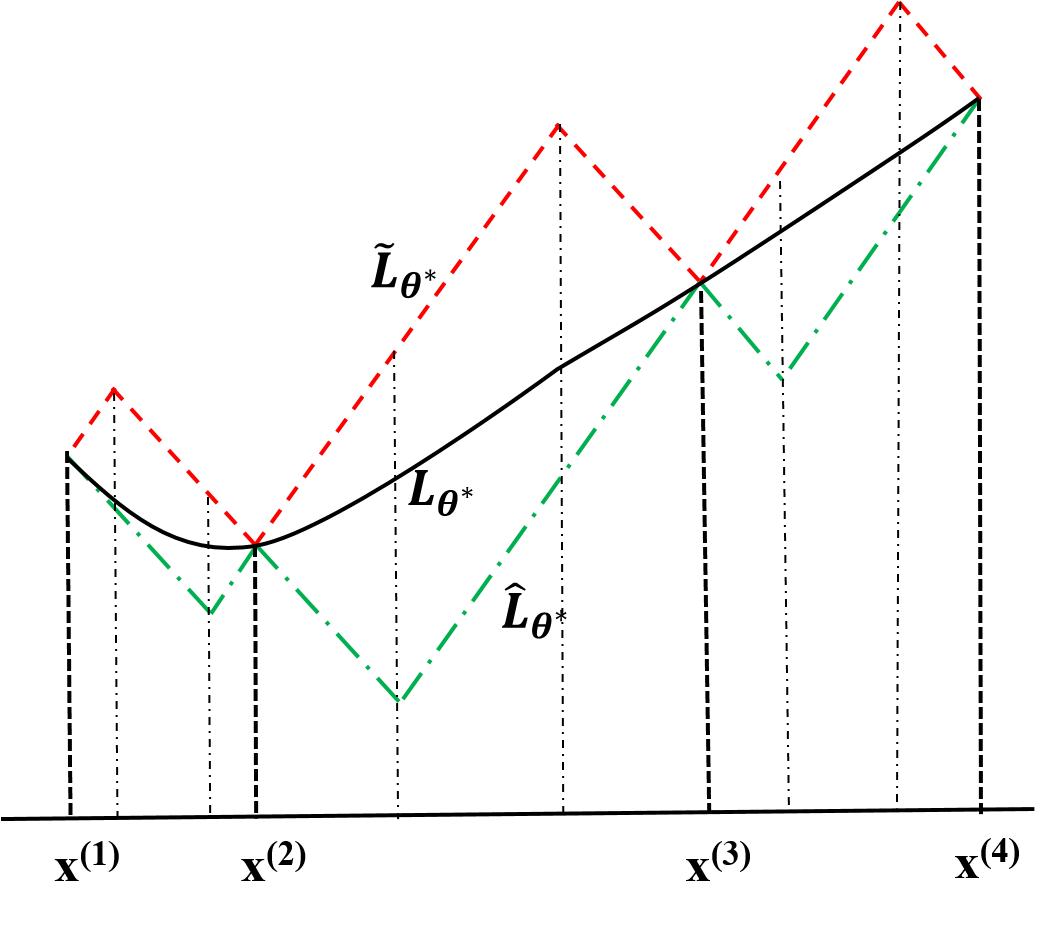}
        \caption{Comparison between two optimal loss functions $\widetilde L_{\theta^*}$ and $\widehat L_{\theta^*}$ in $\mathcal F_\kappa$ for LS-GAN. They are upper and lower bounds of the class of optimal loss functions $L_{\theta^*}$ to Problem (\ref{eq:theta1}). Both the upper and the lower bounds are cone-shaped, and have non-vanishing gradient almost everywhere. Specifically,
        in this one-dimensional example, both bounds are piecewise linear, having a slope of $\pm\kappa$ almost everywhere.}\label{fig:fun_L}
\end{figure}

This shows that the global minimizer is not unique. Moreover, through the proof of Theorem \ref{thm_nonparam}, one can find that $\widetilde L_{\theta^*}(\mathbf x)$ and $\widehat L_{\theta^*}(\mathbf x)$ are the upper and lower bound of any optimal loss function solution to the problem (\ref{eq:theta1}). In particular, we have the following corollary.
\begin{corollary}\label{cor2}
For any $L_{\theta^*}(\mathbf x)\in\mathcal F_\kappa$ that minimizes $S_{m}$, the corresponding $\widehat L_{\theta^*}(\mathbf x)$ and $\widetilde L_{\theta^*}(\mathbf x)$ are the lower and upper bounds of $L_{\theta^*}(\mathbf x)$, i.e.,
$$
\widehat L_{\theta^*}(\mathbf x) \leq L_{\theta^*}(\mathbf x) \leq \widetilde L_{\theta^*}(\mathbf x)
$$
\end{corollary}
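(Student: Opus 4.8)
The plan is to derive both inequalities directly from the two defining properties that any admissible loss function $L_{\theta^*}$ enjoys: its membership in $\mathcal F_\kappa$ (so it is $\kappa$-Lipschitz with respect to $\Delta$) and its nonnegativity. Throughout, $l_i^*$ denotes the nodal value $L_{\theta^*}(\mathbf x^{(i)})$ from which $\widehat L_{\theta^*}$ and $\widetilde L_{\theta^*}$ are built, so beyond this identification no further appeal to Theorem~\ref{thm_nonparam} is needed. A key thing to notice is that the argument never actually uses the minimizing property of $L_{\theta^*}$: the sandwich holds for every nonnegative $\kappa$-Lipschitz function evaluated against its own values on $\{\mathbf x^{(1)},\dots,\mathbf x^{(2m)}\}$, and the minimizing context merely explains why these bounds are interesting.

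For the upper bound, I would fix an arbitrary $\mathbf x$ and apply the Lipschitz inequality against each node: $L_{\theta^*}(\mathbf x) - l_i^* \leq |L_{\theta^*}(\mathbf x) - L_{\theta^*}(\mathbf x^{(i)})| \leq \kappa\,\Delta(\mathbf x, \mathbf x^{(i)})$, hence $L_{\theta^*}(\mathbf x) \leq l_i^* + \kappa\,\Delta(\mathbf x, \mathbf x^{(i)})$ for every $i$. Since this holds for all $1 \leq i \leq 2m$, taking the minimum over $i$ on the right gives $L_{\theta^*}(\mathbf x) \leq \min_i\{ l_i^* + \kappa\,\Delta(\mathbf x,\mathbf x^{(i)})\} = \widetilde L_{\theta^*}(\mathbf x)$.

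For the lower bound, the same Lipschitz estimate applied in the opposite direction yields $L_{\theta^*}(\mathbf x) \geq l_i^* - \kappa\,\Delta(\mathbf x,\mathbf x^{(i)})$ for each $i$. Combining this with the nonnegativity $L_{\theta^*}(\mathbf x) \geq 0$ gives $L_{\theta^*}(\mathbf x) \geq \big(l_i^* - \kappa\,\Delta(\mathbf x,\mathbf x^{(i)})\big)_+$, and taking the maximum over $i$ produces $L_{\theta^*}(\mathbf x) \geq \max_i\{(l_i^* - \kappa\,\Delta(\mathbf x,\mathbf x^{(i)}))_+\} = \widehat L_{\theta^*}(\mathbf x)$. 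Together these two displays establish the claimed sandwich $\widehat L_{\theta^*}(\mathbf x) \leq L_{\theta^*}(\mathbf x) \leq \widetilde L_{\theta^*}(\mathbf x)$.

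I do not anticipate a substantive obstacle here: the entire content reduces to the elementary two-sided Lipschitz bound around each sample point, sharpened on the lower side by the $(\cdot)_+$ clipping that comes from nonnegativity. The only point requiring care is bookkeeping — ensuring that $\widehat L_{\theta^*}$ and $\widetilde L_{\theta^*}$ are understood to be constructed from the nodal values of the particular minimizer $L_{\theta^*}$ under consideration, rather than from some fixed external dataset, so that the identities $L_{\theta^*}(\mathbf x^{(i)}) = l_i^*$ invoked in both bounds are valid by definition.
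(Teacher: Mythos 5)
Your proof is correct and follows essentially the same route as the paper's: the two-sided Lipschitz estimate around each node $\mathbf x^{(i)}$, sharpened on the lower side by nonnegativity to introduce the $(\cdot)_+$, followed by taking the max (resp.\ min) over $i$. Your observation that the minimizing property of $L_{\theta^*}$ is never actually used is accurate and matches the paper, whose appendix proof likewise states the bounds for a generic $L_\theta\in\mathcal F_\kappa$.
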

The proof is given in Appendix~\ref{proof_b}.

The parameters $\theta^*=[l_1^*,\cdots,l_{2m}^*]$ in (\ref{eq:param}) can be sought by minimizing
\begin{align}\label{eq:theta_param}
& S_{m} (\phi^*,\theta) \triangleq ~\dfrac{1}{m} \sum_{i=1}^m {l_i} + \dfrac{\lambda}{m} \sum_{i=1}^{m} \big( \Delta_{i,m+i}
+ l_i - l_{m+i} \big)_+\nonumber \\
&~{\rm s.t.,}~|l_i - l_{i'}| \leq \kappa\Delta(\mathbf x^{(i)},\mathbf x^{(i')})\nonumber\\
&~~~~~~~~~l_i\geq 0,~i,i'=1,\cdots,2m
\end{align}
where $\Delta_{i,j}$ is short for $\Delta(\mathbf x^{(i)},\mathbf x^{(j)})$, and the constraints are imposed to ensure the learned loss functions stay in $\mathcal F_\kappa$.
With a greater value of $\kappa$, a larger class of loss function will be sought.  Thus, one can control the modeling ability of the loss function by setting a proper value to $\kappa$.

Problem (\ref{eq:theta_param}) is a typical linear programming problem. In principle, one can solve this problem to obtain a non-parametric loss function for the LS-GAN.
Unfortunately, it consists of a large number of constraints, whose scale is at an order of $\left( {\begin{array}{*{10}{c}}
   {2m}  \\
   2  \\
\end{array}} \right)$. This prevents us from using (\ref{eq:theta_param}) directly to solve an optimal non-parametric LS-GAN model with a very large number of training examples.
On the contrary, a more tractable solution is to use a parameterized network to solve the optimization problem (\ref{eq:theta1}) constrained in $\mathcal L_\kappa$, and iteratively update parameterized $L_\theta$ and $G_\phi$ with the gradient descent method.

Although the non-parametric solution cannot be solved directly, it is valuable in shedding some light on what kind of the loss function would be learned by a deep network. It is well known that the training of the classic GAN generator suffers from vanishing gradient problem as the discriminator can be optimized very quickly.  Recent study \cite{wgan17} has revealed that this is caused by using the Jensen-Shannon (JS) distance that becomes locally saturated and gets vanishing gradient to train the GAN generator if the discriminator is over-trained. Similar problem has also been found in the energy-based GAN (EBGAN) \cite{zhao2016energy} as it minimizes the total variation that is not continuous or (sub-)differentiable if the corresponding discriminator is fully optimized \cite{wgan17}.

On the contrary, as revealed in Theorem~\ref{thm_nonparam} and illustrated in Figure~\ref{fig:fun_L}, both the upper and lower bounds of the optimal loss function of the LS-GAN are cone-shaped (in terms of $\Delta(\mathbf x, \mathbf x^{(i)})$ that defines the Lipschitz continuity), and have non-vanishing gradient almost everywhere. Moreover, Problem (\ref{eq:theta_param}) only contains {\em linear objective and constraints}; this is contrary to the classic GAN that involves logistic loss terms that are prone to saturation with vanishing gradient. Thus, an optimal loss function that is properly sought in $\mathcal L_\kappa$ as shown in Figure~\ref{fig:fun_L} is unlikely to saturate between these two bounds, and it should be able to provide sufficient gradient to update the generator by descending (\ref{eq:phi1}) even if it has been trained till optimality. Our experiment also shows that, even if the loss function is quickly trained to optimality, it can still provide sufficient gradient to continuously update the generator in the LS-GAN (see Figure~\ref{fig:gradG}).

\section{Conditional LS-GAN}\label{sec:clsgan}
The LS-GAN can easily be generalized to produce a sample based on a given condition $\mathbf y$, yielding a new paradigm of Conditional LS-GAN (CLS-GAN).

For example, if the condition is an image class, the CLS-GAN seeks to produce images of the given class; otherwise, if a text description is given as a condition, the model attempts to generate images aligned with the given description.  This gives us more flexibility in controlling what samples to be generated.

Formally, the generator of CLS-GAN takes a condition vector $\mathbf y$ as input along with a noise vector $\mathbf z$ to produce a sample $G_\phi(\mathbf z, \mathbf y)$.
To train the model, we define a loss function $L_\theta(\mathbf x, \mathbf y)$ to measure the degree of the misalignment between a data sample $\mathbf x$ and a given condition $\mathbf y$.

For a real example $\mathbf x$ aligned with the condition $\mathbf y$, its loss function should be smaller than that of a generated sample by a margin of $\Delta(\mathbf x, G_\phi(\mathbf z,\mathbf y))$.  This results in the following constraint,
\begin{equation}
L_\theta(\mathbf x, \mathbf y) \leq L_\theta(G_\phi(\mathbf z,\mathbf y),\mathbf y) - \Delta(\mathbf x, G_\phi(\mathbf z,\mathbf y))
\end{equation}



Like the LS-GAN, this type of constraint yields the following non-zero-sum game to train the CLS-GAN, which seeks a Nash equilibrium $(\theta^*,\phi^*)$ so that
$\theta^*$ minimizes
\begin{align}\label{eq:clsgantheta}
S(\theta,\phi^*)&= ~\mathop \mathbb E\limits_{(\mathbf x,\mathbf y)\sim P_{data}} L_\theta(\mathbf x, \mathbf y) \\\nonumber
&+ \lambda \mathop \mathbb E\limits_{\substack{(\mathbf x,\mathbf y)\sim P_{data} \\\nonumber
\mathbf z\sim P_z}}\big( \Delta(\mathbf x, G_{\phi^*}(\mathbf z,\mathbf y)) + L_\theta(\mathbf x,\mathbf y) \\\nonumber
&- L_\theta(G_{\phi^*}(\mathbf z,\mathbf y),\mathbf y) \big)_+
\end{align}
and $\phi^*$ minimizes
\begin{align}\label{eq:clsganphi}
T(\theta^*,\phi)&=  \mathop \mathbb E\limits_{\substack{\mathbf y\sim P_{data} \\
\mathbf z\sim P_z}} L_{\theta^*}(G_\phi(\mathbf z,\mathbf y),\mathbf y)
\end{align}
where $P_{data}$ denotes either the joint data distribution over $(\mathbf x, \mathbf y)$ in (\ref{eq:clsgantheta}) or its marginal distribution over $\mathbf y$ in (\ref{eq:clsganphi}).

Playing the above game will lead to a trained pair of loss function $L_{\theta^*}$ and generator $G_{\phi^*}$.
We can show that the learned generator $G_{\phi^*}(\mathbf z, \mathbf y)$ can produce samples whose distribution follows the true data density $P_{data}(\mathbf x|\mathbf y)$ for a given condition $\mathbf y$.

To prove this, we say a loss function $L_\theta(\mathbf x, \mathbf y)$ is Lipschitz if it is Lipschitz continuous in its first argument $\mathbf x$. We also impose the following regularity condition on the conditional density $P_{data}(\mathbf x|\mathbf y)$.
\begin{assumption}\label{asp2}
For each $\mathbf y$, the conditional density $P_{data}(\mathbf x|\mathbf y)$ is Lipschitz, and is supported in a convex compact set of $\mathbf x$.
\end{assumption}

Then it is not difficult to prove the following theorem, which shows that the conditional density $P_{G^*}(\mathbf x|\mathbf y)$ becomes $P_{data}(\mathbf x|\mathbf y)$ as $\lambda\rightarrow +\infty$. Here $P_{G^*}(\mathbf x|\mathbf y)$ denotes the density of samples generated by $G_{\phi^*}(\mathbf z,\mathbf y)$ with sampled random noise $\mathbf z$.

\begin{theorem}\label{thm4}
Under Assumption~\ref{asp2}, a Nash equilibrium $(\theta^*,\phi^*)$ exists such that\\
(i) $L_{\theta^*}(\mathbf x,\mathbf y)$ is Lipschitz continuous in $\mathbf x$ for each $\mathbf y$;\\
(ii) $P_{G^*}(\mathbf x|\mathbf y)$ is Lipschitz continuous;\\
(iii) $\int_{\mathbf x}|P_{data}(\mathbf x|\mathbf y)-P_{G^*}(\mathbf x|\mathbf y)|d\mathbf x \leq \dfrac{2}{\lambda}$.
\end{theorem}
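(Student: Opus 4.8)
The plan is to reduce the conditional theorem to the unconditional results (Lemma~\ref{lem2} and Lemma~\ref{lem1}) by slicing the objectives on the condition $\mathbf y$. First I would rewrite the CLS-GAN loss objective (\ref{eq:clsgantheta}) as an outer expectation over $\mathbf y$,
\[
S(\theta,\phi^*)=\mathop \mathbb E\limits_{\mathbf y\sim P_{data}} S_{\mathbf y}(\theta,\phi^*),
\]
where for each fixed $\mathbf y$ the inner objective $S_{\mathbf y}(\theta,\phi^*)$ is obtained by conditioning the two expectations in (\ref{eq:clsgantheta}) on $\mathbf y$. By construction $S_{\mathbf y}$ is precisely the unconditional LS-GAN objective (\ref{eq:theta}), with $P_{data}$ replaced by the conditional density $P_{data}(\cdot\mid\mathbf y)$ and the loss and generator replaced by their $\mathbf y$-slices $\mathbf x\mapsto L_\theta(\mathbf x,\mathbf y)$ and $\mathbf z\mapsto G_\phi(\mathbf z,\mathbf y)$. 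The generator objective (\ref{eq:clsganphi}) decomposes identically as $T(\theta^*,\phi)=\mathbb E_{\mathbf y\sim P_{data}}\,\mathbb E_{\mathbf z\sim P_z}\,L_{\theta^*}(G_\phi(\mathbf z,\mathbf y),\mathbf y)$, whose inner term is the unconditional generator objective (\ref{eq:phi}) on the same slice.

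Second, because the analysis optimizes the loss and the generated density over the full Lipschitz class $\mathcal F_\kappa$ (as in Lemma~\ref{lem2}), the slices $L_\theta(\cdot,\mathbf y)$ and $G_\phi(\cdot,\mathbf y)$ for distinct $\mathbf y$ are independent degrees of freedom. Since the integrand defining $S$ is nonnegative, minimizing the $\mathbf y$-average is therefore equivalent to minimizing $S_{\mathbf y}$ separately for (almost) every $\mathbf y$, and the same pointwise reduction applies to $T$. Now Assumption~\ref{asp2} ensures that for each $\mathbf y$ the conditional density $P_{data}(\cdot\mid\mathbf y)$ is Lipschitz and supported on a convex compact set, which is exactly the hypothesis of Assumption~\ref{asp} for the sliced problem. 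Hence, for each fixed $\mathbf y$, Lemma~\ref{lem2} supplies a conditional Nash equilibrium at which the loss slice $L_{\theta^*}(\cdot,\mathbf y)$ and the generated conditional density $P_{G^*}(\cdot\mid\mathbf y)$ are both Lipschitz, establishing (i) and (ii), while Lemma~\ref{lem1} applied to the same sliced problem gives $\int_{\mathbf x}|P_{data}(\mathbf x\mid\mathbf y)-P_{G^*}(\mathbf x\mid\mathbf y)|\,d\mathbf x\leq 2/\lambda$, which is (iii).

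The step I expect to be the main obstacle is the assembly of the per-$\mathbf y$ equilibria into a single pair $(\theta^*,\phi^*)$ depending measurably on $\mathbf y$, so that the glued functions $L_{\theta^*}(\mathbf x,\mathbf y)$ and $G_{\phi^*}(\mathbf z,\mathbf y)$ constitute a genuine equilibrium of the averaged game rather than merely an unstructured pointwise family. This requires a measurable-selection argument guaranteeing that $\mathbf y\mapsto L_{\theta^*}(\cdot,\mathbf y)$ and $\mathbf y\mapsto G_{\phi^*}(\cdot,\mathbf y)$ can be chosen measurably, after which one checks that the averaged loss is integrable and that the marginal of $\mathbf y$ used in (\ref{eq:clsganphi}) coincides with that used in (\ref{eq:clsgantheta}), both secured by the common $P_{data}$ marginal in the CLS-GAN setup. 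Once measurability is in hand, the conclusions (i)--(iii) follow, the pointwise bound of (iii) holding for each $\mathbf y$ directly and, if an averaged statement is preferred, by integrating over $\mathbf y\sim P_{data}$.
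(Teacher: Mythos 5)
Your proposal is correct and follows essentially the route the paper intends: the paper gives no explicit proof of this theorem, stating only that it is "not difficult to prove" by the same argument as Theorem~\ref{thm3}, which is precisely your per-$\mathbf y$ slicing that reduces each conditional problem to Lemma~\ref{lem2} and Lemma~\ref{lem1} with $P_{data}(\cdot\mid\mathbf y)$ in place of $P_{data}$. Your remark about measurable selection when gluing the slice-wise equilibria is a legitimate technical point that the paper does not address at all.
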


In addition, similar upper and lower bounds can be derived to characterize the learned conditional loss function $L_\theta(\mathbf x, \mathbf y)$ following the same idea for LS-GAN.

A useful byproduct of the CLS-GAN is one can use the learned loss function $L_{\theta^*}(\mathbf x, \mathbf y)$ to predict the label of an example $\mathbf x$ by
\begin{equation}
\mathbf y^* = \arg\min\limits_{\mathbf y} L_{\theta^*}(\mathbf x, \mathbf y)
\end{equation}

The advantage of such a CLS-GAN classifier is it is trained with both labeled and generated examples, the latter of which can improve the training of the classifier by revealing more potential variations within different classes of samples. It also provides a way to evaluate the model based on its classification performance. This is an objective metric we can use to assess the quality of feature representations learned by the model.

For a classification task, a suitable value should be set to $\lambda$.  Although Theorem~\ref{thm4} shows $P_{G^*}$ would converge to the true conditional density $P_{data}$ by increasing $\lambda$, it only ensures it is a good generative rather than classification model.  However, a too large value of $\lambda$ tends to ignore the first loss minimization term of (\ref{eq:clsgantheta}) that plays an important role in minimizing classification error.  Thus, a trade-off should be made to balance between classification and generation objectives.


\subsection{Semi-Supervised LS-GAN}\label{sec:ssl}
The above CLS-GAN can be considered as a fully supervised model to classify examples into different classes.
It can also be extended to a Semi-Supervised model by incorporating unlabeled examples.

Suppose we have $c$ classes indexed by $\{1,2,\cdots,c\}$. In the CLS-GAN, for each class, we choose a loss function that, for example, can be defined as the negative log-softmax,
$$
L_\theta(\mathbf x, \mathbf y=l) = -\log\dfrac{\exp(\mathbf a_l(\mathbf x))}{\sum_{l=1}^c \exp(\mathbf a_l(\mathbf x))}
$$
where $\mathbf a_l(\mathbf x)$ is the $l$th activation output from a network layer.

Suppose we also have unlabeled examples available, and we can define a new loss function for these unlabeled examples so that they can be involved in training the CLS-GAN. Consider an unlabeled example $\mathbf x$, its groundtruth label is unknown.  However, the best guess of its label can be made by choosing the one that minimizes $L_\theta(\mathbf x, \mathbf y=l)$ over $l$, and this inspires us to define the following loss function for the unlabeled example as
$$
L_\theta^{\bf ul}(\mathbf x)\triangleq\min_l L_\theta(\mathbf x, \mathbf y=l)
$$
Here we modify $L_\theta(\mathbf x, \mathbf y=l)$ to $-\log\frac{\exp(\mathbf a_l(\mathbf x))}{1+\sum_{l=1}^c \exp(\mathbf a_l(\mathbf x))}$ so $\frac{1}{1+\sum_{l=1}^c \exp(\mathbf a_l(\mathbf x))}$ can be viewed as the probability that $\mathbf x$ does not belong to any known label.

Then we have the following loss-sensitive objective that explores unlabeled examples to train the CLS-GAN,
\begin{align}
&S^{\bf ul}(\theta,\phi^*) \triangleq
\mathop \mathbb E\limits_{\substack{\mathbf x\sim P_{data}(\mathbf x) \\\nonumber
\mathbf z\sim P_z(\mathbf z)}} \big(\Delta(\mathbf x, G_{\phi^*}(\mathbf z))+L_\theta^{\bf ul}(\mathbf x)-L_\theta^{\bf ul}(G_{\phi^*}(\mathbf z))\big)_+
\end{align}

This objective is combined with $S(\theta,\phi^*)$ defined in (\ref{eq:clsgantheta}) to train the loss function network by minimizing
$$
S(\theta,\phi^*)+\gamma S^{\bf ul}(\theta,\phi^*)
$$
where $\gamma$ is a positive hyperparameter balancing the contributions from labeled and labeled examples.

The idea of extending the GAN for semi-supervised learning has been proposed by Odena \cite{odena2016semi} and
Salimans et al. \cite{salimans2016improved}, where generated samples are assigned to an artificial class, and unlabeled examples are treated as the negative examples.
Our proposed semi-supervised learning differs in creating a new loss function for unlabeled examples from the losses for existing classes, by minimizing which we make the best guess of the classes of unlabeled examples. The guessed labeled will provide additional information to train the CLS-GAN model, and the updated model will in turn improve the guess over the training course.
The experiments in the following section will show that this approach can generate very competitive performance especially when the labeled data is very limited.

\section{Experiments}\label{sec:exp}

Objective evaluation of a data generative model is not an easy task as there is no consensus criteria to quantify the quality of generated samples.  For this reason, we will make a qualitative analysis of generated images, and use image classification to quantitatively evaluate the resultant LS-GAN model.

First, we will assess the quality of generated images by the LS-GAN in comparison with the classic GAN model. Then, we will make an objective evaluation on the CLS-GAN to classify images. This task evaluates the quality of feature representations learned by the CLS-GAN in terms of its classification accuracy directly. 

Finally, we will assess the generalizability of various GAN models in generating new images out of training examples by proposing the Minimum Reconstruction Error (MRE) on a separate test set.

\subsection{Architectures}


\begin{table}[t]
\caption{The Network architecture used in CLS-GAN for training CIFAR-10 and SVHN, where BN stands for batch normalization, LeakyReLU for Leaky Rectifier with a slope of 0.2 for negative value, and
``3c1s96o Conv." means a $3\times 3$ convolution kernel with stride $1$ and $96$ outputs, while "UpConv." denotes the fractionally-stride convolution.}
\label{tab:arch}
\begin{center}
\subtable[Loss Function Network]{
\begin{tabular}{c}    \toprule
Input $32\times 32\times 3$\\\midrule
3c1s96o Conv. BN LeakyReLU\\
3c1s96o Conv. BN LeakyReLU\\
4c2s96o Conv. BN LeakyReLU\\\midrule
3c1s192o Conv. BN LeakyReLU\\
3c1s192o Conv. BN LeakyReLU\\
4c2s192o Conv. BN LeakyReLU\\\midrule
3c1s192o Conv. BN LeakyReLU\\
3c1s192o Conv. BN LeakyReLU\\
1c1s192o Conv. BN LeakyReLU\\\midrule
global meanpool\\\midrule
Output $1\times 1\times 10$\\\bottomrule
\end{tabular}}\vspace{2mm}
\subtable[Generator Network]{
\begin{tabular}{c}    \toprule
Input $100$-D random vector + $10$-D one-hot vector\\\midrule
4c1s512o UpConv. BN LeakyReLU\\
4c2s256o UpConv. BN LeakyReLU\\
4c2s128o UpConv. BN LeakyReLU\\
4c2s3o UpConv. BN LeakyReLU\\\midrule
Elementwise Tanh\\\midrule
Output $32 \times 32 \times 3$\\\bottomrule
\end{tabular}}
\end{center}
\end{table}

We adopted the ideas behind the network architecture for the DCGAN \cite{radford2015unsupervised} to build the generator and the loss function networks.  Compared with the conventional CNNs, maxpooling layers were replaced with strided convolutions in both networks, and fractionally-strided convolutions were used in the generator network to upsample feature maps across layers to finer resolutions.  Batch-normalization layers were added in both networks between convolutional layers, and fully connected layers were removed from these networks.

However, unlike the DCGAN, the LS-GAN model (unconditional version in Section~\ref{sec:lsgan}) did not use a sigmoid layer as the output for the loss function network.  Instead, we removed it and directly output the activation before the removed sigmoid layer.


On the other hand, for the loss function network in CLS-GAN, a global mean-pooling layer was added on top of convolutional layers.  This produced a $1\times 1$ feature map that output the conditional loss $L_\theta(\mathbf x, \mathbf y)$ on different classes $\mathbf y$.
In the generator network, Tanh was used to produce images whose pixel values are scaled to $[-1,1]$. Thus, all image examples in datasets were preprocessed to have their pixel values in $[-1,1]$.
More details about the design of network architectures can be found in literature \cite{radford2015unsupervised}.

Table~\ref{tab:arch} shows the network architecture for the CLS-GAN model on CIFAR-10 and SVHN datasets in the experiments.  In particular, the architecture of the loss function network was adapted from that used in \cite{springenberg2015unsupervised} with nine hidden layers.

\subsection{Training Details}
The models were trained in a mini-batch of $64$ images, and their weights were initialized from a zero-mean Gaussian distribution with a standard deviation of $0.02$. The Adam optimizer \cite{kingma2014adam} was used to train the network with initial learning rate and $\beta_1$ being set to $10^{-3}$ and $0.5$ respectively, while the learning rate was annealed every $25$ epochs by a factor of $0.8$. The other hyperparameters such as $\gamma$ and $\lambda$ were chosen based on an independent validation set held out from training examples.

\begin{figure}[t]
\centering
\subfigure[Inception]{
\begin{minipage}{0.35\linewidth}
\begin{center}
	   \includegraphics[width=0.7\linewidth]{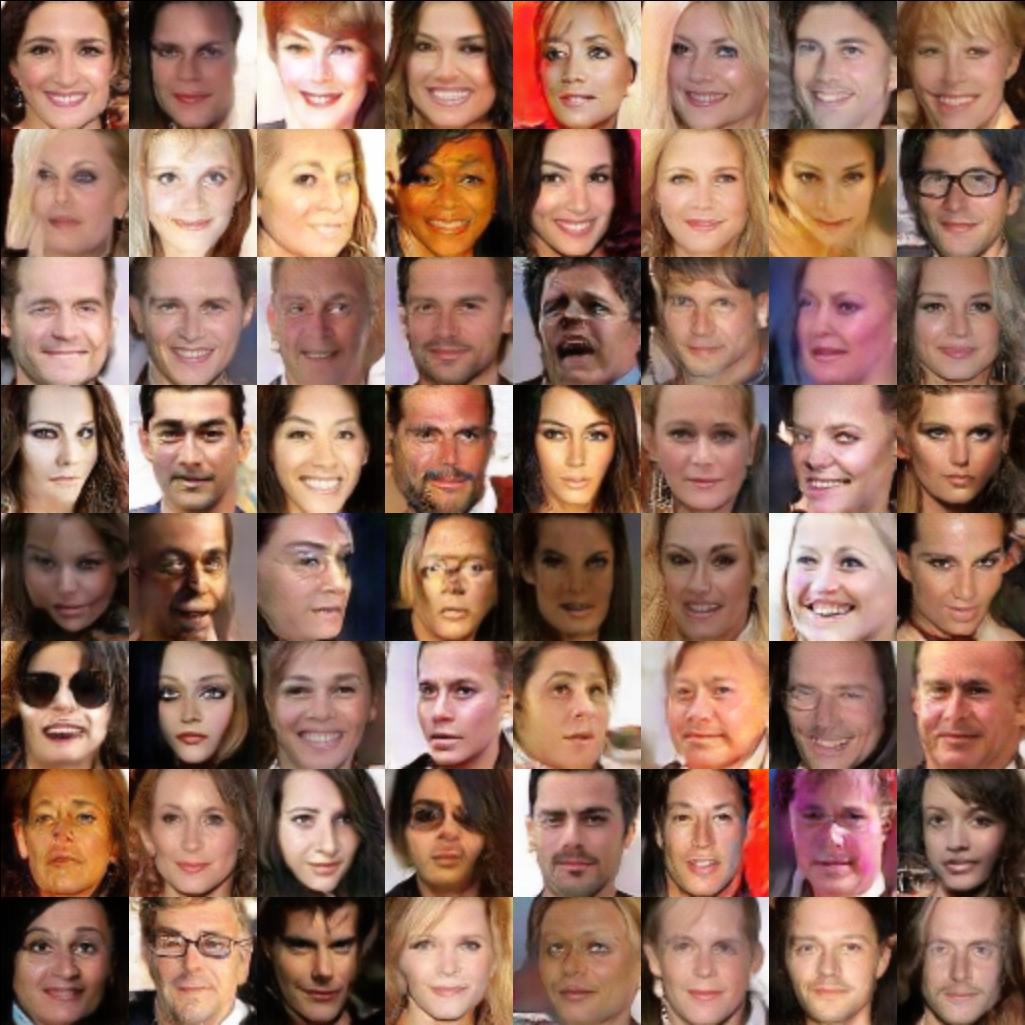}
\end{center}
\end{minipage}}
\subfigure[VGG-16]{
\begin{minipage}{0.35\linewidth}
\begin{center}
	   \includegraphics[width=0.7\linewidth]{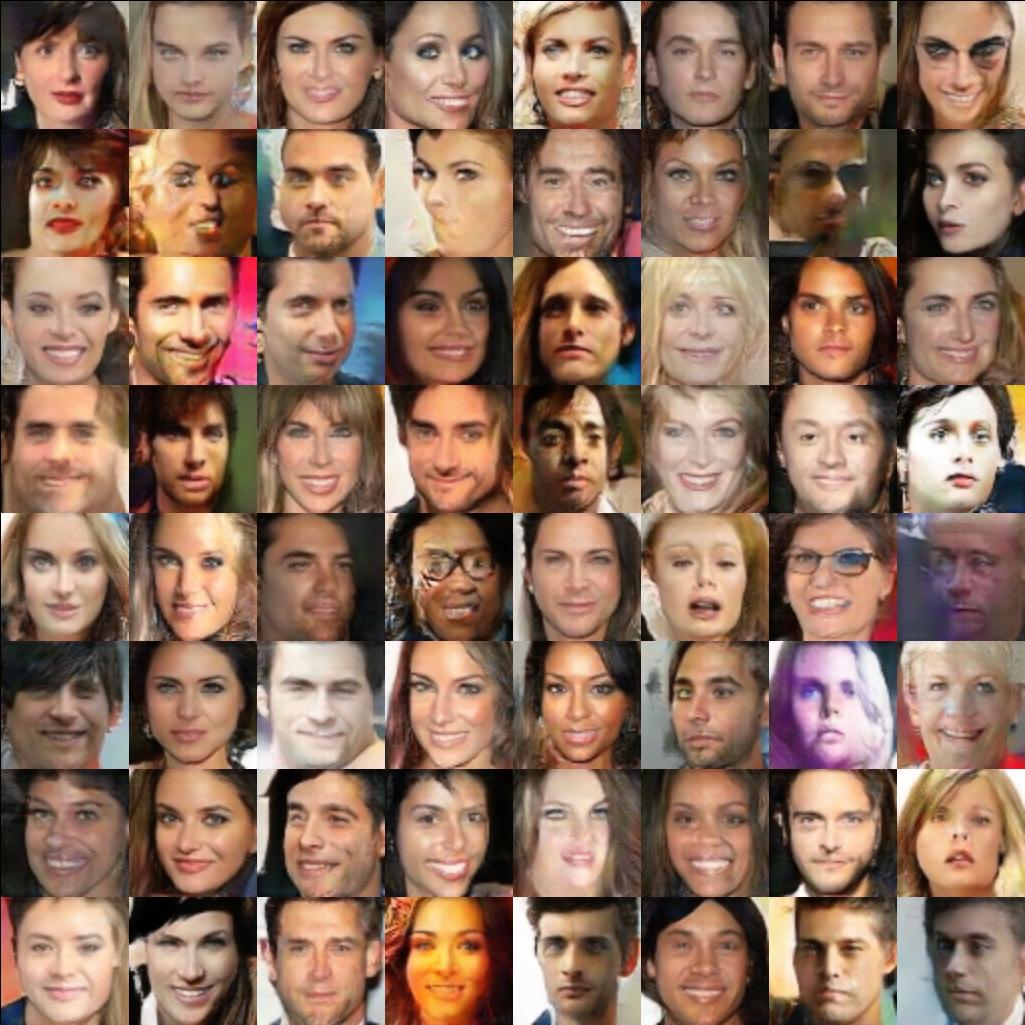}
\end{center}
\end{minipage}}
   \caption{Images generated by the LS-GAN on the CelebA dataset, in which the margin is computed as the distance between the features extracted from the Inception and VGG-16 networks. Images are resized to $128\times 128$ to fit the input size of both networks.}\label{fig:inception_vgg}
\end{figure}

We also tested various forms of loss margins $\Delta(\cdot,\cdot)$ between real and fake samples.
For example, we tried the $L_p$ distance between image representations as the margin, and found the best result can be achieved when $p=1$.
The distance between convolutional features was supposed to capture perceptual dissimilarity between images.
But we should avoid a direct use of the convolutional features from the loss function network, since we found they would tend to collapse to a trivial point as the loss margin vanishes.
The feature maps from a separate pretrained deep network, such as Inception and VGG-16 networks, could be a better choice to define the loss margin. Figure~\ref{fig:inception_vgg} shows the images generated by LS-GAN on CelebA with the inception and VGG-16 margins.

However, for a fair comparison, we did not use these external deep networks in other experiments on image generation and classification tasks. We simply used the distance between raw images as the loss margin, and it still achieved competitive results. This demonstrates the robustness of the proposed method without having to choose a sophisticated loss margin. This is also consistent with our theoretical analysis where we do not assume any particular form of loss margin to prove the results.

%

For the generator network of LS-GAN, it took a $100$-dimensional random vector drawn from Unif$[-1,1]$ as input. For the
CLS-GAN generator, an one-hot vector encoding the image class condition was concatenated with the sampled random vector. The CLS-GAN was trained by involving both unlabeled and labeled examples as in Section~\ref{sec:clsgan}. This was compared against the other state-of-the-art supervised and semi-supervised models.

\subsection{Generated Images by LS-GAN}

First we made a qualitative comparison between the images generated by the DCGAN and the LS-GAN on the celebA dataset.

Figure~\ref{fig:celebA} compares the visual quality of images generated by LS-GAN and DCGAN after they were trained for $25$ epochs, and
there was no perceptible difference between the qualities of their generated images.

\begin{figure}[t!]
\centering
\subfigure[DCGAN]{
\begin{minipage}{0.35\linewidth}
\begin{center}
	   \includegraphics[width=0.7\linewidth]{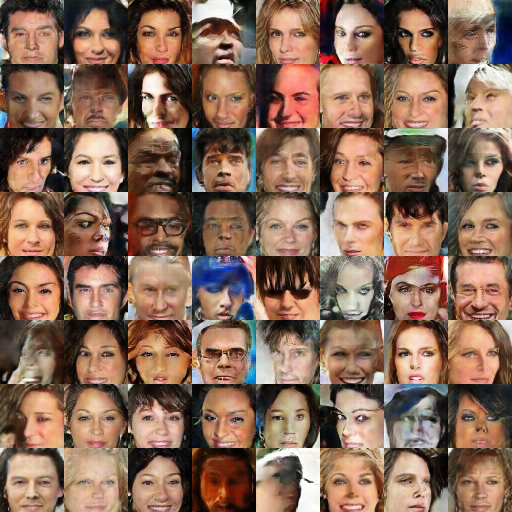}
\end{center}
\end{minipage}}
\subfigure[LS-GAN]{
\begin{minipage}{0.35\linewidth}
\begin{center} \includegraphics[width=0.7\linewidth]{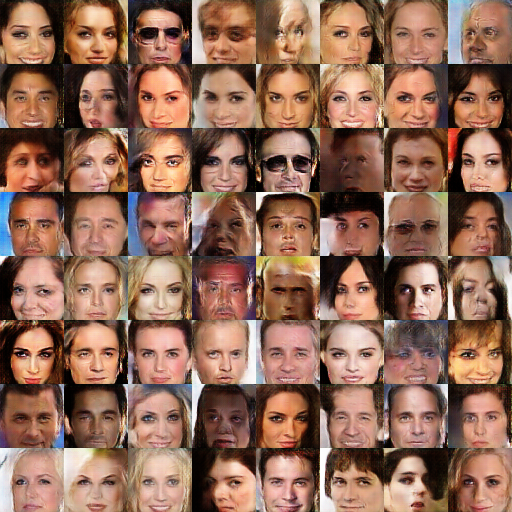}
\end{center}
\end{minipage}}
   \caption{Images generated by the DCGAN and the LS-GAN on the CelebA dataset. The results are obtained after $25$ epochs of training the models.}\label{fig:celebA}
\end{figure}

However, the DCGAN architecture has been exhaustively fine-tuned in terms of the classic GAN training criterion to maximize the image generation performance. It was susceptible that its architecture could be fragile if we make some change to it. Here we tested if the LS-GAN can be more robust than the DCGAN when a structure change was made.

For example, one of the most key components in the DCGAN is the batch normalization inserted between the fractional convolution layers in the generator network. It has been reported in literature \cite{salimans2016improved} that the batch normalization not only plays a key role in training the DCGAN model, but also prevents the mode collapse of the generator into few data points.

The results were illustrated in Figure~\ref{fig:celebA_noBN}. If one removed the batch normalization layers from the generator, the DCGAN would collapse without producing any face images.  On the contrary, the LS-GAN still performed very well even if these batch normalization layers were removed, and there was { no perceived deterioration or mode collapse of the generated images}. This shows that the LS-GAN was more resilient than the DCGAN.

\begin{figure}[t]
\centering
\subfigure[DCGAN]{
\begin{minipage}{0.35\linewidth}
\begin{center}
	   \includegraphics[width=0.7\linewidth]{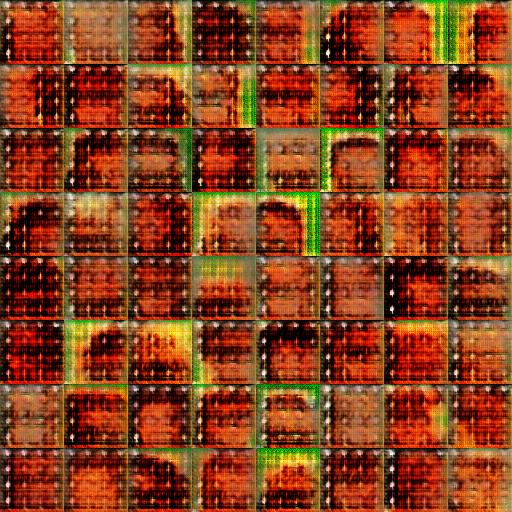}
\end{center}
\end{minipage}}
\subfigure[LS-GAN]{
\begin{minipage}{0.35\linewidth}
\begin{center}
	   \includegraphics[width=0.7\linewidth]{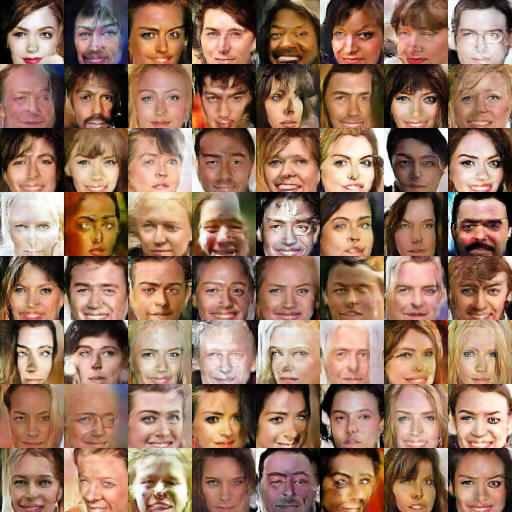}
\end{center}
\end{minipage}}
   \caption{Images generated by the DCGAN and the LS-GAN on the CelebA dataset without batch normalization for the generator networks. The results are obtained after $25$ epochs of training the models.}\label{fig:celebA_noBN}
\end{figure}

We also analyzed the magnitude ($\ell_2$ norm) of the generator's gradient (in logarithmic scale) in Figure~\ref{fig:gradG} over iterations. With the loss function being updated every iteration, the generator was only updated every $1$, $3$, and $5$ iterations.
From the figure, we note that the magnitude of the generator's gradient, no matter how frequently the loss function was updated, gradually increased until it stopped at the same level. This implies the objective function to update the generator tended to be linear rather than saturated through the training process, which was consistent with our non-parametric analysis of the optimal loss function. Thus, it provided sufficient gradient to continuously update the generator. Furthermore, we compared the images generated with different frequencies of updating the loss function in Figure~\ref{fig:freq_gen}, where there was no noticeable difference in the visual quality.  This shows the LS-GAN was not affected by over-trained loss function in experiments.

\begin{figure}[t!]
    \centering
        \includegraphics[width=0.6\linewidth, trim={0cm 8cm 0cm 8cm},clip=true]{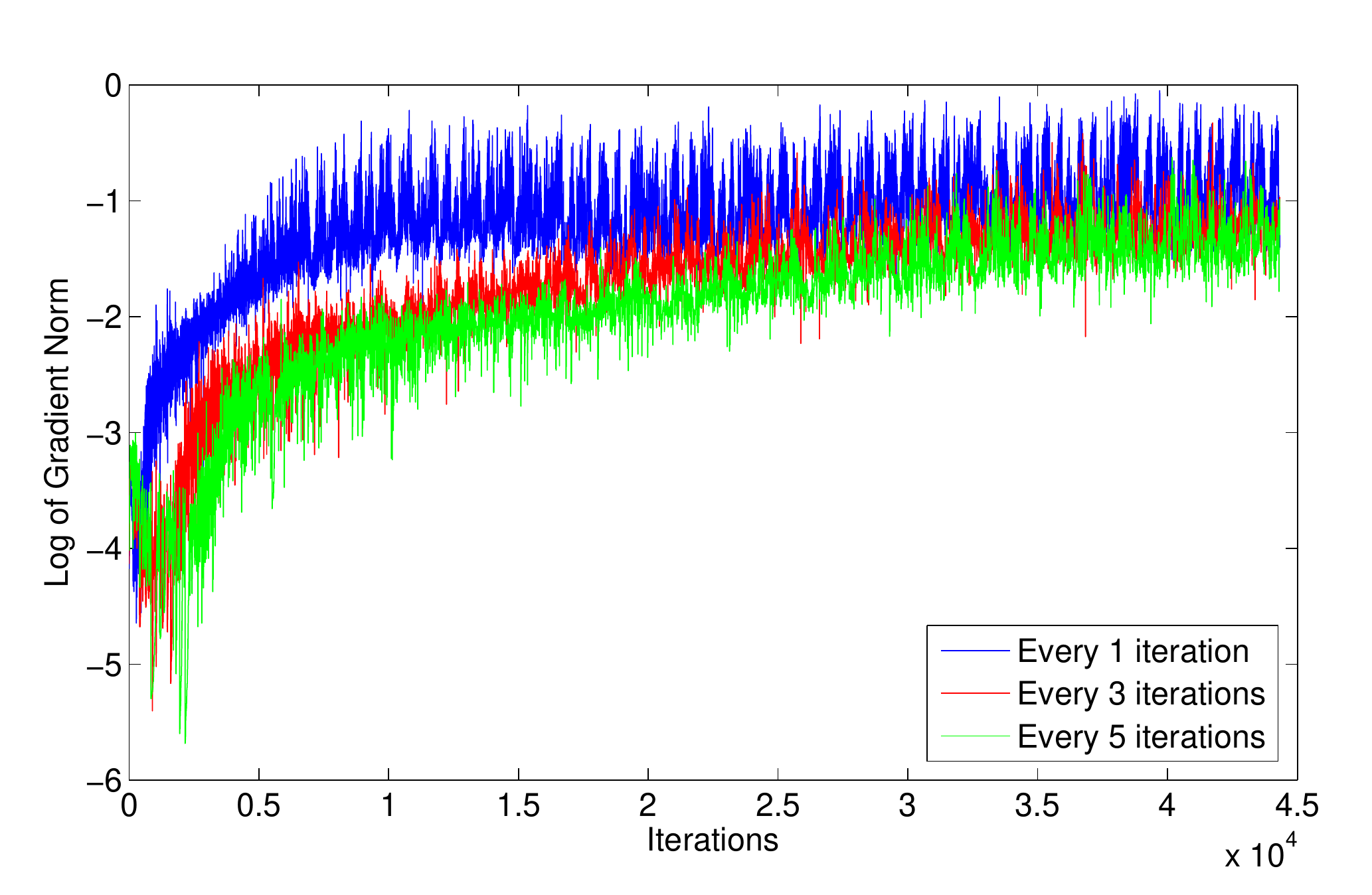}
        \caption{The log of the generator's gradient norm over iterations. The generator is updated every 1, 3, and 5 iterations while the loss function is updated every iteration.  The loss function can be quickly updated to be optimal, and the figure shows the generator's gradient does not vanish even if the loss function is well trained.}\label{fig:gradG}
\end{figure}

\begin{figure}[t]
\centering
\subfigure[]{
\begin{minipage}{0.35\linewidth}
\begin{center}
	   \includegraphics[width=0.7\linewidth]{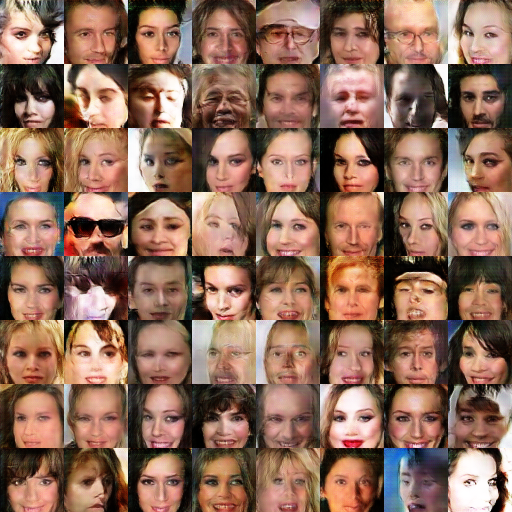}
\end{center}
\end{minipage}}
\subfigure[]{
\begin{minipage}{0.35\linewidth}
\begin{center} \includegraphics[width=0.7\linewidth]{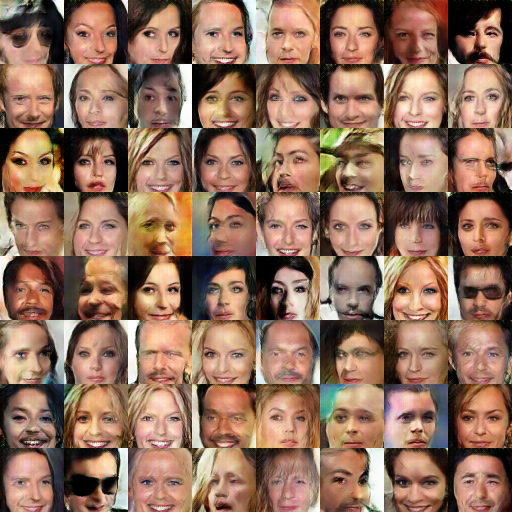}
\end{center}
\end{minipage}}
   \caption{Images generated by the LS-GAN on CelebA, where its generator is updated every three times (a) and every five times (b) the discriminator is updated.}\label{fig:freq_gen}
\end{figure}

\subsection{Image Classification}
We conducted experiments on CIFAR-10 and SVHN to compare the classification accuracy of LS-GAN with the other approaches.


\subsubsection{CIFAR-10}


\begin{table}[]
\caption{Classification accuracies on CIFAR-10 dataset. Accuracies with all training examples labeled (all) and with only 400 labeled examples per class (400) are reported. The best result is highlighted in bold.
}
\label{tab:cifar10}
\begin{center}
\begin{tabular}{c||c|c}    \toprule
\emph{Methods} & \emph{All}& \emph{400 per class}   \\\midrule
1 Layer K-means \cite{radford2015unsupervised}& 80.6\% & 63.7\% ($\pm$ 0.7\%) \\
3 Layer K-means Learned RF \cite{coates2011selecting}& 82.0\% & 70.7\%($\pm$ 0.7\%)  \\
View Invariant K-means \cite{hui2013direct}& 81.9\% & 72.6\%($\pm$ 0.7\%) \\
Examplar CNN \cite{dosovitskiydiscriminative}& 84.3\% & 77.4\%($\pm$ 0.2\%) \\
Conditional GAN \cite{mirza2014conditional} & 83.6\% & 75.5\%($\pm$ 0.4\%)\\
DCGAN \cite{radford2015unsupervised}& 82.8\% & 73.8\%($\pm$ 0.4\%) \\
Ladder Network \cite{rasmus2015semi} &  -- & 79.6\%($\pm$ 0.5\%)\\
CatGAN \cite{springenberg2015unsupervised} & -- & 80.4\%($\pm$ 0.4\%) \\
ALI \cite{dumoulin2016adversarially} & -- & 81.7\%\\
Improved GAN \cite{salimans2016improved} & -- & 81.4\%($\pm$ 2.3\%) \\\midrule
CLS-GAN & \bf 91.7\% & \bf 82.7\%($\pm$ 0.5\%) \\\bottomrule
\end{tabular}
\end{center}
\end{table}

The CIFAR dataset \cite{krizhevsky2009learning} consists of 50,000 training images and $10,000$ test images on ten image categories.
We tested the proposed CLS-GAN model with class labels as conditions.  In the supervised training, all labeled examples were used to train the CLS-GAN.

We also conducted experiments with $400$ labeled examples per class, which was a more challenging task as much fewer labeled examples were used for training. In this case, the remaining unlabeled examples were used to train the model in a semi-supervised fashion as discussed in Section~\ref{sec:clsgan}. In each mini-batch, the same number of labeled and unlabeled examples were used to update the model by stochastic gradient descent. The experiment results on this task were reported by averaging over ten subsets of labeled examples.

Both hyperparameters $\gamma$ and $\lambda$ were chosen via a five-fold cross-validation on the labeled examples from $\{0.25, 0.5, 1.0, 2.0\}$ and $\{0.5, 1.0, 2.0\}$ respectively. Once they were chosen, the model was trained with the chosen hyperparameters on the whole training set, and the performance was reported based on the results on the test set. As in the improved GAN, we also adopted the weight normalization and feature matching mechanisms for the sake of the fair comparison.


We compared the proposed model with the state-of-the-art methods in literature.  In particular, we compared with the conditional GAN \cite{mirza2014conditional} as well as the DCGAN \cite{radford2015unsupervised}. For the sake of fair comparison, the conditional GAN shared the same architecture as the CLS-GAN.
On the other hand, the DCGAN algorithm \cite{radford2015unsupervised} max-pooled
the discriminator's convolution features from all layers to $4\times 4$ grids as the image features, and a L2-SVM was then trained to classify images. The DCGAN was an unsupervised model which had shown competitive performance on generating photo-realistic images. Its feature representations were believed to reach the state-of-the-art performance in modeling images with no supervision.



We also compared with the other recently developed supervised and semi-supervised models in literature, including the baseline 1 Layer K-means feature extraction pipeline, a multi-layer extension of the baseline model (3 Layer K-means Learned RF \cite{coates2011selecting}), View Invariant K-means \cite{hui2013direct}, Examplar CNN \cite{dosovitskiydiscriminative}, Ladder Network \cite{rasmus2015semi}, as well as CatGAN \cite{springenberg2015unsupervised}.  In particular, among the compared semi-supervised algorithms, the improved GAN \cite{salimans2016improved} had recorded the best performance in literature.  Furthermore, we also compared with the ALI \cite{dumoulin2016adversarially} that extended the classic GAN by jointly generating data and inferring their representations, which achieved comparable performance to the Improved GAN. This pointed out an interesting direction to extend the CLS-GAN by directly inferring the data representation, and we will leave it in the future work.

Table~\ref{tab:cifar10} compares the experiment results, showing the CSL-GAN successfully outperformed the compared algorithms in both fully-supervised and semi-supervised settings.

\subsubsection{SVHN}

\begin{table}[t]
\caption{Classification errors on SVHN dataset with $1,000$ labeled examples. The best result is highlighted in bold.}
\label{tab:svhn}
\begin{center}
\begin{tabular}{c||c}    \toprule
\emph{Methods} & \emph{Error rate} \\\midrule
KNN \cite{radford2015unsupervised}& 77.93\%\\
TSVM \cite{radford2015unsupervised}& 66.55\%\\
M1+KNN \cite{kingma2014semi}& 65.63\%\\
M1+TSVM \cite{kingma2014semi}& 54.33\%\\
M1+M2 \cite{kingma2014semi}& 36.02\%\\\midrule
SWWAE w/o dropout \cite{zhao2015stacked}& 27.83\%\\
SWWAE with dropout \cite{zhao2015stacked}& 23.56\%\\
DCGAN \cite{radford2015unsupervised}& 22.48\%\\
Conditional GAN \cite{mirza2014conditional}& 21.85\%$\pm$0.38\%\\
Supervised CNN \cite{radford2015unsupervised}& 28.87\%\\
DGN \cite{kingma2014semi} &36.02\%$\pm$0.10\%\\
Virtual Adversarial \cite{miyato2015distributional} &24.63\%\\
Auxiliary DGN \cite{maaloe2016auxiliary} &22.86\%\\
Skip DGN \cite{maaloe2016auxiliary}&16.61\%$\pm$0.24\%\\
ALI \cite{dumoulin2016adversarially} & 7.3\%\\
Improved GAN \cite{salimans2016improved}&8.11\%$\pm$1.3\%\\\midrule
CLS-GAN & {\bf 5.98\%$\pm$ 0.27\%}\\\bottomrule
\end{tabular}
\vspace{-2mm}
\end{center}
\end{table}

The SVHN (i.e., Street View House Number) dataset \cite{netzer2011reading} contains $32\times 32$ color images of house numbers collected by Google Street View. They were roughly centered on a digit in a house number, and the objective is to recognize the digit.  The training set has $73,257$ digits while the test set consists of $26,032$.

To test the model, $1,000$ labeled digits were used to train the model, which are uniformly selected from ten digit classes, that is $100$ labeled examples per digit class. The remaining unlabeled examples were used as additional data to enhance the generative ability of CLS-GAN in semi-supervised fashion.
We expect a good generative model could produce additional examples to augment the training set.

We used the same experiment setup and network architecture for CIFAR-10 to train the LS-GAN on this dataset.  Table~\ref{tab:svhn} reports the result on the SVHN, and it shows that the LS-GAN performed the best among the compared algorithms.

\subsubsection{Analysis of Generated Images by CLS-GAN}

\begin{figure*}[t]
\centering
\subfigure[MNIST]{
\begin{minipage}{0.28\linewidth}
\begin{center}
	   \includegraphics[width=1.0\linewidth]{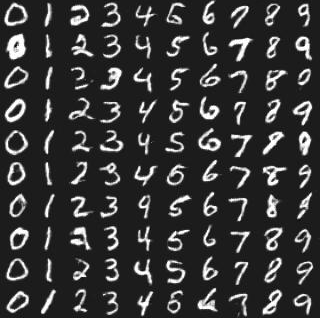}
\end{center}
\end{minipage}}\hspace{4mm}
\subfigure[CIFAR-10]{
\begin{minipage}{0.28\linewidth}
\begin{center}
	   \includegraphics[width=1.0\linewidth]{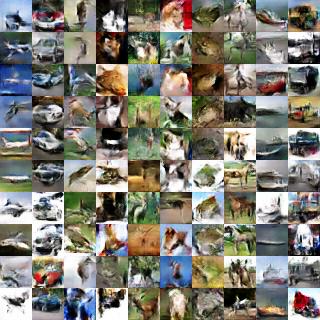}
\end{center}
\end{minipage}}\hspace{4mm}
\subfigure[SVHN]{
\begin{minipage}{0.28\linewidth}
\begin{center}
	   \includegraphics[width=1.0\linewidth]{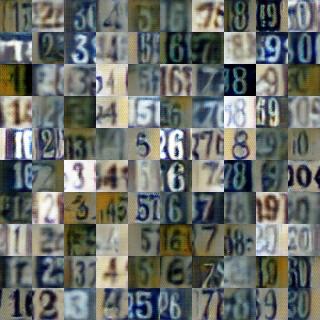}
\end{center}
\end{minipage}}
   \caption{Images generated by CLS-GAN for MNIST, CIFAR-10 and SVHN. Images in a column are generated for the same class. In particular, the generated images on CIFAR-10 are airplane, automobile, bird, cat, deer, dog, frog, horse, ship and truck from the leftmost to the rightmost column.}\label{fig:generation}
\label{Fig:dRNN}
\end{figure*}

Figure~\ref{fig:generation} illustrates the generated images by CLS-GAN for MNIST, CIFAR-10 and SVHN datasets. On each dataset, images in a column were generated for the same class. On the MNIST and the SVHN, both handwritten and street-view digits are quite legible. Both also cover many variants for each digit class. For example, the synthesized MNIST digits have various writing styles, rotations and sizes, and the generated SVHN digits have various lighting conditions, sizes and even different co-occurring digits in the cropped bounding boxes.  On the CIFAR-10 dataset, image classes can be recognized from the generated images although some visual details are missing. This is because the images in the CIFAR-10 dataset have very low resolution ($32\times 32$ pixels), and most details are even missing from input examples.

We also observe that if we set a small value to the hyperparameter $\lambda$, the generated images would become very similar to each other within each class. As illustrated in Figure~\ref{fig:collapse}, the images were generated by halving $\lambda$ used for generating images in Figure~\ref{fig:generation}.
A smaller $\lambda$ means a relatively large weight was placed on the first loss minimization term of (\ref{eq:theta1}), which tends to collapse generated images to a single mode as it aggressively minimizes their losses to train the generator. This is also consistent with Theorem~\ref{thm4} where the density of generated samples with a smaller $\lambda$ could have a larger deviation from the underlying density.
One should avoid the collapse of trained generator since
diversifying generated images can improve the classification performance of the CLS-GAN by revealing more intra-class variations.  This will help improve the model's generalization ability as these variations could appear in future images.

\begin{figure*}[t]
\centering
\subfigure[MNIST]{
\begin{minipage}{0.28\linewidth}
\begin{center}
	   \includegraphics[width=1.0\linewidth]{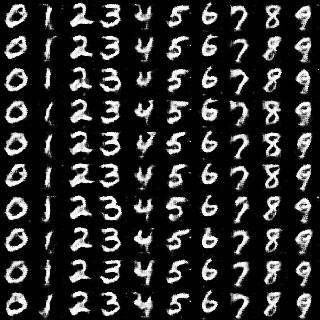}
\end{center}
\end{minipage}}\hspace{4mm}
\subfigure[CIFAR-10]{
\begin{minipage}{0.28\linewidth}
\begin{center}
	   \includegraphics[width=1.0\linewidth]{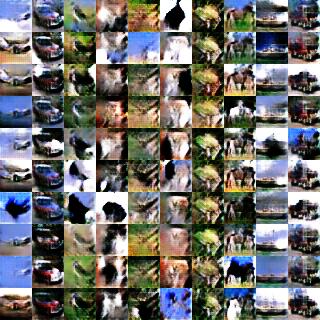}
\end{center}
\end{minipage}}\hspace{4mm}
\subfigure[SVHN]{
\begin{minipage}{0.28\linewidth}
\begin{center}
	   \includegraphics[width=1.0\linewidth]{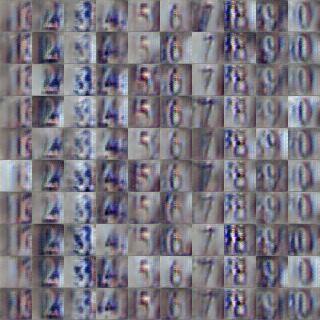}
\end{center}
\end{minipage}}
   \caption{Illustration of generated images that are collapsed to a single mode of the underlying image density on MNIST, CIFAR-10 and SVHN.}\label{fig:collapse}
\end{figure*}

However, one should also avoid setting too large value to $\lambda$.  Otherwise, the role of the first loss minimization term could be underestimated, which can also adversely affect the classification results without reducing the training loss to a satisfactory level.  Therefore, we choose a proper value for $\lambda$ by cross-validation on the training set in the experiments.

In brief, the comparison between Figure~\ref{fig:generation} and Figure~\ref{fig:collapse} reveals a trade-off between image generation quality and classification accuracy through the hyperparameter $\lambda$. Such a trade-off is intuitive: while a classification task usually focuses on learning class-invariant representations that do not change within a class, image generation should be able to capture many variant factors (e.g., lighting conditions, viewing angles, and object poses) so that it could diversify generated samples for each class. Although diversified examples can augment training dataset, it comes at a cost of trading class-invariance for modeling variant generation factors. Perhaps, this is an intrinsic dilemma between supervised learning and data generation that is worth more theoretical and empirical studies in future.

\subsection{Evaluation of Generalization Performances}\label{sec:eval_gen}

Most of existing metrics like Inception Score \cite{salimans2016improved} for evaluating GAN models focus on comparing the qualities and diversities of their generated images. However, even though a GAN model can produce diverse and high quality images with no collapsed generators, it is still unknown if the model can generate {\em unseen} images out of given examples, or simply memorizing existing ones.  
While one of our main pursuits in this paper is a generalizable LS-GAN, we were motivated to propose the following Minimum Reconstruction Error (MRE) to compare its generalizability with various GANs.



Specifically, for an unseen test image $\mathbf x$, we aim to find an input noise $\mathbf z$ that can best reconstruct $\mathbf x$ with the smallest error, i.e.,
$
\min_\mathbf z \|\mathbf x-G(\mathbf z) \|_1,
$
where $G$ is the GAN generator under evaluation. Obviously, if $G$ is adequate to produce new images, it should have a small reconstruction error on a separate test set that has not been used in training the model.

We assessed the GAN's generalizability on CIFAR-10 and tiny ImageNet datasets. On CIFAR-10, we split the dataset into 50\% training examples, 25\% validation examples and 25\% test examples; the tiny ImageNet was split into training, validation and test sets in a ratio of 10:1:1. For a fair comparison, all the hyperparameters, including the number of epochs, were chosen based on the average MREs on the validation set, and the test MREs were reported for comparison. The optimal $\mathbf z$'s were iteratively updated on the validation and test sets by descending the gradient of the reconstruction errors.


\begin{figure*}
\centering
\subfigure[CIFAR-10]{
\begin{minipage}{0.49\linewidth}
\begin{center}
	   \includegraphics[width=0.98\linewidth]{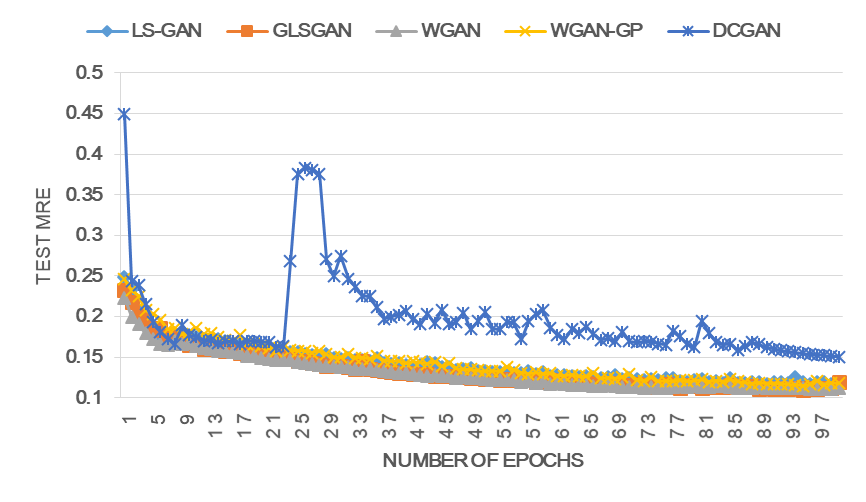}\vspace{1mm}
\end{center}
\end{minipage}}
\subfigure[tiny ImageNet]{
\begin{minipage}{0.49\linewidth}
\begin{center} \includegraphics[width=0.98\linewidth]{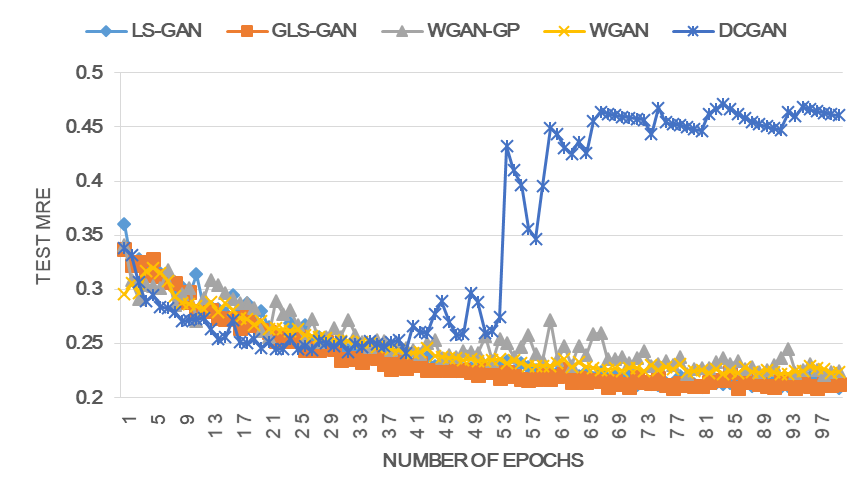}\vspace{1mm}
\end{center}
\end{minipage}}
   \caption{The change of test MREs on CIFAR-10 and tiny ImageNet over epochs. Image pixels were scaled to $[-1,1]$ to compute the MREs. }\label{fig:cifar10_mre}
\end{figure*}

\begin{figure*}[h]
\centering
\subfigure[Original test images]{
\begin{minipage}{0.25\linewidth}
\begin{center}
	   \includegraphics[width=1.0\linewidth]{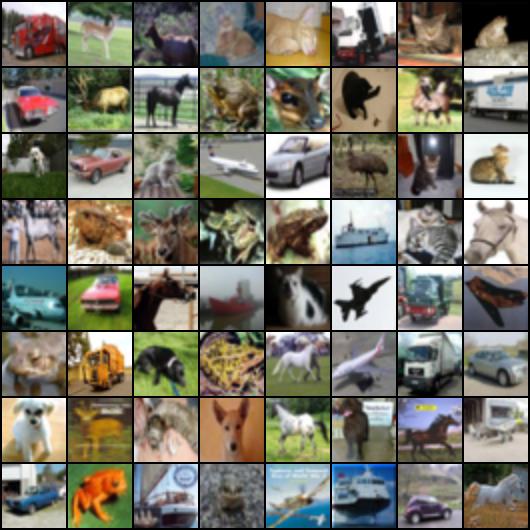}\vspace{1mm}
\end{center}
\end{minipage}}
\subfigure[LS-GAN(0.1166)]{
\begin{minipage}{0.25\linewidth}
\begin{center}
	   \includegraphics[width=1.0\linewidth]{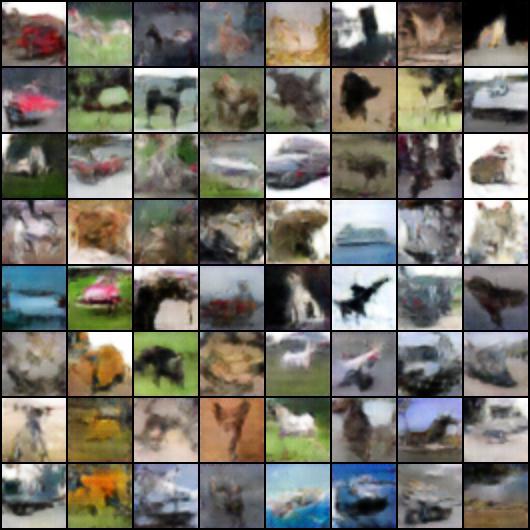}\vspace{1mm}
\end{center}
\end{minipage}}
\subfigure[\bf GLS-GAN(0.1089)]{
\begin{minipage}{0.25\linewidth}
\begin{center}
	   \includegraphics[width=1.0\linewidth]{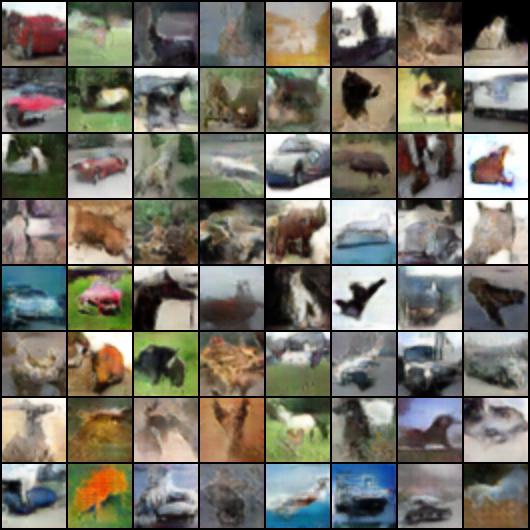}\vspace{1mm}
\end{center}
\end{minipage}}\\
\subfigure[WGAN-GP(0.1149)]{
\begin{minipage}{0.25\linewidth}
\begin{center}
	   \includegraphics[width=1.0\linewidth]{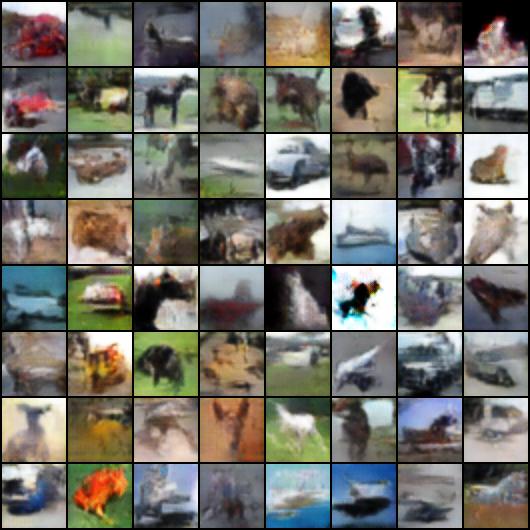}\vspace{1mm}
\end{center}
\end{minipage}}
\subfigure[WGAN(0.1109)]{
\begin{minipage}{0.25\linewidth}
\begin{center}
	   \includegraphics[width=1.0\linewidth]{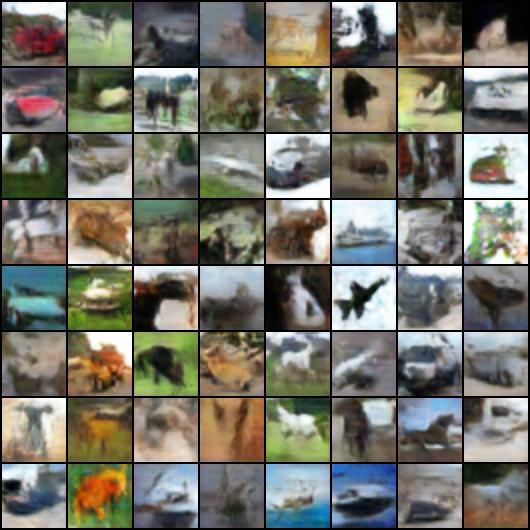}\vspace{1mm}
\end{center}
\end{minipage}}
\subfigure[DCGAN(0.1506)]{
\begin{minipage}{0.25\linewidth}
\begin{center}
	   \includegraphics[width=1.0\linewidth]{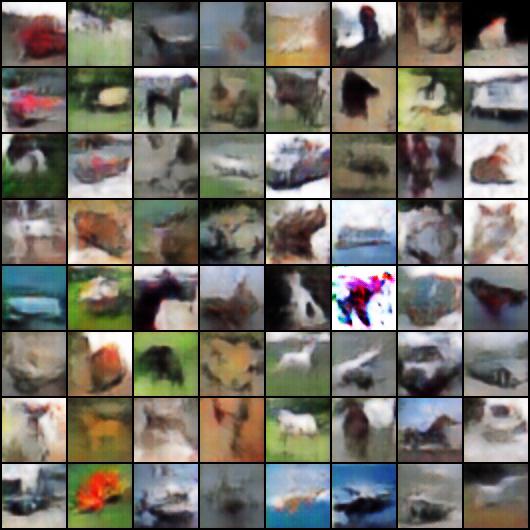}\vspace{1mm}
\end{center}
\end{minipage}}
   \caption{The figure illustrates the images reconstructed by various GANs on CIFAR-10 with their MREs on the test set in the parentheses. }\label{fig:reconstructed_cifar10}
\end{figure*}

\begin{figure*}[h]
\centering
\subfigure[Original test images]{
\begin{minipage}{0.25\linewidth}
\begin{center}
	   \includegraphics[width=1.0\linewidth]{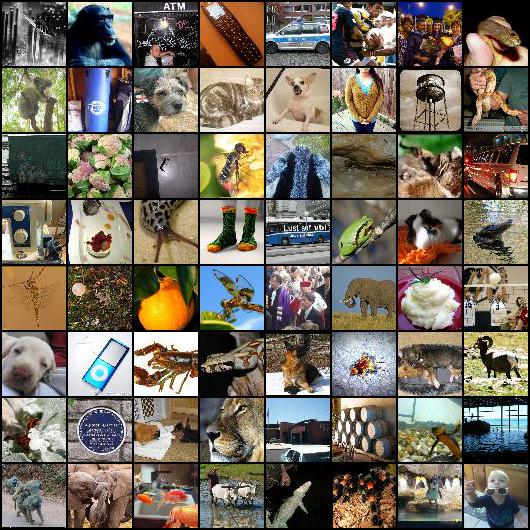}\vspace{1mm}
\end{center}
\end{minipage}}
\subfigure[LS-GAN(0.2093)]{
\begin{minipage}{0.25\linewidth}
\begin{center}
	   \includegraphics[width=1.0\linewidth]{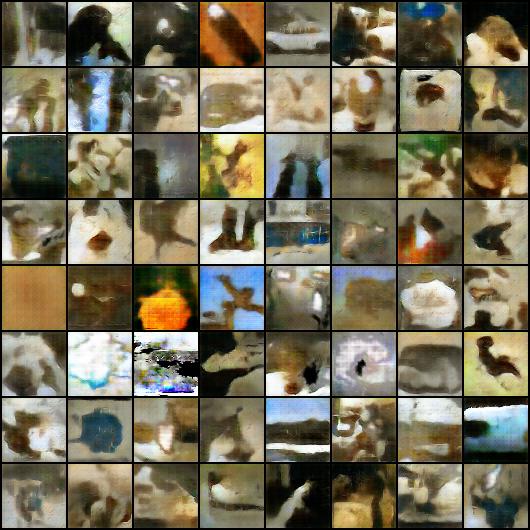}\vspace{1mm}
\end{center}
\end{minipage}}
\subfigure[\bf GLS-GAN(0.2085)]{
\begin{minipage}{0.25\linewidth}
\begin{center}
	   \includegraphics[width=1.0\linewidth]{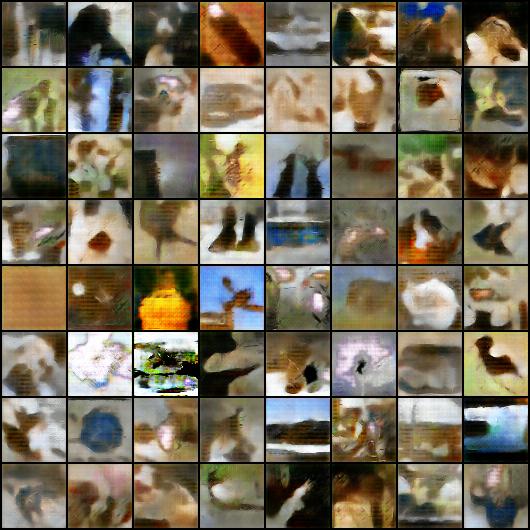}\vspace{1mm}
\end{center}
\end{minipage}}\\
\subfigure[WGAN-GP(0.2210)]{
\begin{minipage}{0.25\linewidth}
\begin{center}
	   \includegraphics[width=1.0\linewidth]{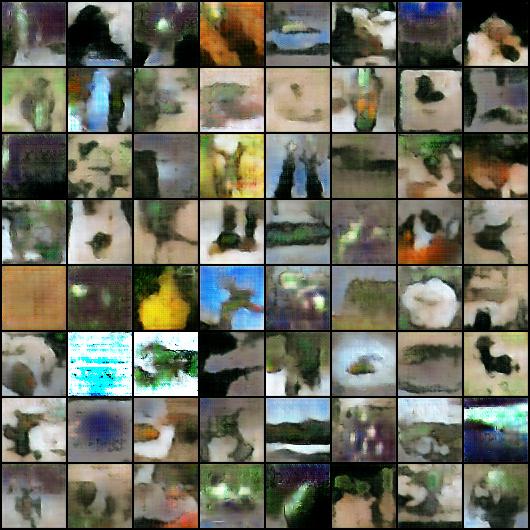}\vspace{1mm}
\end{center}
\end{minipage}}
\subfigure[WGAN(0.2219)]{
\begin{minipage}{0.25\linewidth}
\begin{center}
	   \includegraphics[width=1.0\linewidth]{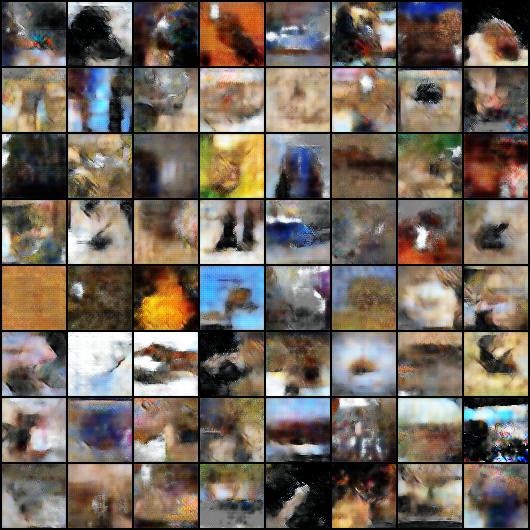}\vspace{1mm}
\end{center}
\end{minipage}}
\subfigure[DCGAN(0.2413)]{
\begin{minipage}{0.25\linewidth}
\begin{center}
	   \includegraphics[width=1.0\linewidth]{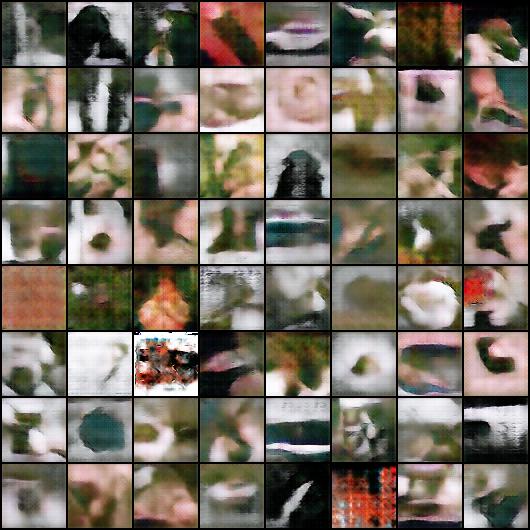}\vspace{1mm}
\end{center}
\end{minipage}}
   \caption{The figure illustrates the images reconstructed by various GANs on tiny ImageNet with their MREs on the test set in the parentheses.}\label{fig:reconstructed_cifar10}
\end{figure*}

In Figure~\ref{fig:cifar10_mre}, we compare the test MREs over $100$ epochs by LS-GAN, GLS-GAN, WGAN \cite{wgan17}, WGAN-GP \cite{gulrajani2017improved} and DCGAN \cite{radford2015unsupervised} on CIFAR-10 respectively.
For the sake of a fair comparison, all models were trained with the network architecture used in \cite{radford2015unsupervised}.
The result clearly shows the regularized models, including GLS-GAN, LS-GAN, WGAN-GP and WGAN, have apparently better generalization performances than the unregularized DCGAN based on the classic GAN model. On CIFAR-10, the test MRE was reduced from $0.1506$ by DCGAN to as small as $0.1109$ and $0.1089$ by WGAN and GLS-GAN respectively; on tiny ImageNet, the GLS-GAN reaches the smallest test MRE of $0.2085$ among all compared regularized and unregularized GANs.

In addition, the DCGAN exhibited fluctuating MREs on the CIFAR-10, while the regularized models steadily decreased the MREs over epochs. This implies regularized GANs have more stable training than the classic GAN. 


We illustrate some examples of reconstructed images by different GANs on the test set along with their test MREs in Figure~\ref{fig:reconstructed_cifar10}.
The results show the GLS-GAN achieved the smallest test MRE of 0.1089 and 0.2085 with a LeakyReLU cost function of slope $0.01$ and $0.5$ on CIFAR-10 and tiny ImageNet, followed by the other regularized GAN models.
This is not a surprising result since it has been shown in Section~\ref{sec:glsgan} that the other regularized GANs such as LS-GAN and WGAN are only special cases of the GLS-GAN model that covers larger family of models. Here we only considered LeakyReLU as the cost function for GLS-GAN. Of course, there exist many more cost functions satisfying the two conditions in Section~\ref{sec:glsgan}  to expand the family of regularized GANs, which should have potentials of yielding even better generalization performances.

\section{Conclusions}\label{sec:concl}
In this paper, we present a novel Loss-Sensitive GAN (LS-GAN) approach to generate samples from a data distribution.  The LS-GAN learns a loss function to distinguish between generated and real samples, where the loss of a real sample should be smaller by a margin than that of a generated sample.
Our theoretical analysis shows the distributional consistency between the real and generated samples based on the Lipschitz regularity.
This no longer needs a non-parametric discriminator with infinite modeling ability in the classic GAN, allowing us to search for the optimal loss function in a smaller functional space with a bounded Lipschitz constant.
Moreover, we prove the generalizability of LS-GAN by showing its required number of training examples is polynomial in its complexity. This suggests the generalization performance can be improved by penalizing the Lipschitz constants (via their gradient surrogates) of the loss function to reduce the sample complexity.
Furthermore, our non-parametric analysis of the optimal loss function shows its lower and upper bounds are cone-shaped with non-vanishing gradient almost everywhere, implying the generator can be continuously updated even if the loss function is over-trained.
Finally, we extend the LS-GAN to a Conditional LS-GAN (CLS-GAN) for semi-supervised tasks, and demonstrate it reaches competitive performances on both image generation and classification tasks.


\bibliographystyle{spmpsci}      
\bibliography{LSGAN}   

\begin{thebibliography}{10}
\providecommand{\url}[1]{{#1}}
\providecommand{\urlprefix}{URL }
\expandafter\ifx\csname urlstyle\endcsname\relax
  \providecommand{\doi}[1]{DOI~\discretionary{}{}{}#1}\else
  \providecommand{\doi}{DOI~\discretionary{}{}{}\begingroup
  \urlstyle{rm}\Url}\fi

\bibitem{arjovsky2017towards}
Arjovsky, M., Bottou, L.: Towards principled methods for training generative
  adversarial networks

\bibitem{wgan17}
Arjovsky, M., Chintala, S., Bottou, L.: Wasserstein gan.
\newblock arXiv preprint arXiv:1701.07875  (2017)

\bibitem{arora2017generalization}
Arora, S., Ge, R., Liang, Y., Ma, T., Zhang, Y.: Generalization and equilibrium
  in generative adversarial nets (gans).
\newblock arXiv preprint arXiv:1703.00573  (2017)

\bibitem{border1989fixed}
Border, K.C.: Fixed point theorems with applications to economics and game
  theory.
\newblock Cambridge university press (1989)

\bibitem{carando2009nonparametric}
Carando, D., Fraiman, R., Groisman, P.: Nonparametric likelihood based
  estimation for a multivariate lipschitz density.
\newblock Journal of Multivariate Analysis \textbf{100}(5), 981--992 (2009)

\bibitem{chen2016infogan}
Chen, X., Duan, Y., Houthooft, R., Schulman, J., Sutskever, I., Abbeel, P.:
  Infogan: Interpretable representation learning by information maximizing
  generative adversarial nets.
\newblock In: Advances in Neural Information Processing Systems, pp. 2172--2180
  (2016)

\bibitem{coates2011selecting}
Coates, A., Ng, A.Y.: Selecting receptive fields in deep networks.
\newblock In: Advances in Neural Information Processing Systems, pp. 2528--2536
  (2011)

\bibitem{denton2015deep}
Denton, E.L., Chintala, S., Fergus, R., et~al.: Deep generative image models
  using a laplacian pyramid of adversarial networks.
\newblock In: Advances in neural information processing systems, pp. 1486--1494
  (2015)

\bibitem{dosovitskiydiscriminative}
Dosovitskiy, A., Fischer, P., Springenberg, J.T., Riedmiller, M., Brox, T.:
  Discriminative unsupervised feature learning with exemplar convolutional
  neural networks

\bibitem{dosovitskiy2015learning}
Dosovitskiy, A., Tobias~Springenberg, J., Brox, T.: Learning to generate chairs
  with convolutional neural networks.
\newblock In: Proceedings of the IEEE Conference on Computer Vision and Pattern
  Recognition, pp. 1538--1546 (2015)

\bibitem{dumoulin2016adversarially}
Dumoulin, V., Belghazi, I., Poole, B., Lamb, A., Arjovsky, M., Mastropietro,
  O., Courville, A.: Adversarially learned inference.
\newblock arXiv preprint arXiv:1606.00704  (2016)

\bibitem{gatys2015neural}
Gatys, L.A., Ecker, A.S., Bethge, M.: A neural algorithm of artistic style.
\newblock arXiv preprint arXiv:1508.06576  (2015)

\bibitem{goodfellow2014generative}
Goodfellow, I., Pouget-Abadie, J., Mirza, M., Xu, B., Warde-Farley, D., Ozair,
  S., Courville, A., Bengio, Y.: Generative adversarial nets.
\newblock In: Advances in Neural Information Processing Systems, pp. 2672--2680
  (2014)

\bibitem{gregor2015draw}
Gregor, K., Danihelka, I., Graves, A., Rezende, D.J., Wierstra, D.: Draw: A
  recurrent neural network for image generation.
\newblock arXiv preprint arXiv:1502.04623  (2015)

\bibitem{gulrajani2017improved}
Gulrajani, I., Ahmed, F., Arjovsky, M., Dumoulin, V., Courville, A.: Improved
  training of wasserstein gans.
\newblock arXiv preprint arXiv:1704.00028  (2017)

\bibitem{hui2013direct}
Hui, K.Y.: Direct modeling of complex invariances for visual object features.
\newblock In: International Conference on Machine Learning, pp. 352--360 (2013)

\bibitem{im2016generating}
Im, D.J., Kim, C.D., Jiang, H., Memisevic, R.: Generating images with recurrent
  adversarial networks.
\newblock arXiv preprint arXiv:1602.05110  (2016)

\bibitem{kingma2014adam}
Kingma, D., Ba, J.: Adam: A method for stochastic optimization.
\newblock arXiv preprint arXiv:1412.6980  (2014)

\bibitem{kingma2014semi}
Kingma, D.P., Mohamed, S., Rezende, D.J., Welling, M.: Semi-supervised learning
  with deep generative models.
\newblock In: Advances in Neural Information Processing Systems, pp. 3581--3589
  (2014)

\bibitem{kingma2013auto}
Kingma, D.P., Welling, M.: Auto-encoding variational bayes.
\newblock arXiv preprint arXiv:1312.6114  (2013)

\bibitem{krizhevsky2009learning}
Krizhevsky, A.: Learning multiple layers of features from tiny images  (2009)

\bibitem{maaloe2016auxiliary}
Maal{\o}e, L., S{\o}nderby, C.K., S{\o}nderby, S.K., Winther, O.: Auxiliary
  deep generative models.
\newblock arXiv preprint arXiv:1602.05473  (2016)

\bibitem{mirza2014conditional}
Mirza, M., Osindero, S.: Conditional generative adversarial nets.
\newblock arXiv preprint arXiv:1411.1784  (2014)

\bibitem{miyato2015distributional}
Miyato, T., Maeda, S.i., Koyama, M., Nakae, K., Ishii, S.: Distributional
  smoothing by virtual adversarial examples.
\newblock arXiv preprint arXiv:1507.00677  (2015)

\bibitem{netzer2011reading}
Netzer, Y., Wang, T., Coates, A., Bissacco, A., Wu, B., Ng, A.Y.: Reading
  digits in natural images with unsupervised feature learning  (2011)

\bibitem{nowozin2016f}
Nowozin, S., Cseke, B., Tomioka, R.: f-gan: Training generative neural samplers
  using variational divergence minimization.
\newblock arXiv preprint arXiv:1606.00709  (2016)

\bibitem{odena2016semi}
Odena, A.: Semi-supervised learning with generative adversarial networks.
\newblock arXiv preprint arXiv:1606.01583  (2016)

\bibitem{qi2017global}
Qi, G.J., Zhang, L., Hu, H., Edraki, M., Wang, J., Hua, X.S.: Global versus
  localized generative adversarial nets.
\newblock Proceedings of IEEE Conference on Computer Vision and Pattern
  Recognition (CVPR)  (2018)

\bibitem{radford2015unsupervised}
Radford, A., Metz, L., Chintala, S.: Unsupervised representation learning with
  deep convolutional generative adversarial networks.
\newblock arXiv preprint arXiv:1511.06434  (2015)

\bibitem{rasmus2015semi}
Rasmus, A., Berglund, M., Honkala, M., Valpola, H., Raiko, T.: Semi-supervised
  learning with ladder networks.
\newblock In: Advances in Neural Information Processing Systems, pp. 3546--3554
  (2015)

\bibitem{salimans2016improved}
Salimans, T., Goodfellow, I., Zaremba, W., Cheung, V., Radford, A., Chen, X.:
  Improved techniques for training gans.
\newblock In: Advances in Neural Information Processing Systems, pp. 2226--2234
  (2016)

\bibitem{sion1958general}
Sion, M.: On general minimax theorems.
\newblock Pacific Journal of mathematics \textbf{8}(1), 171--176 (1958)

\bibitem{springenberg2015unsupervised}
Springenberg, J.T.: Unsupervised and semi-supervised learning with categorical
  generative adversarial networks.
\newblock arXiv preprint arXiv:1511.06390  (2015)

\bibitem{valpola2015neural}
Valpola, H.: From neural pca to deep unsupervised learning.
\newblock Adv. in Independent Component Analysis and Learning Machines pp.
  143--171 (2015)

\bibitem{wolpert1996lack}
Wolpert, D.H.: The lack of a priori distinctions between learning algorithms.
\newblock Neural computation \textbf{8}(7), 1341--1390 (1996)

\bibitem{zhao2015stacked}
Zhao, J., Mathieu, M., Goroshin, R., Lecun, Y.: Stacked what-where
  auto-encoders.
\newblock arXiv preprint arXiv:1506.02351  (2015)

\bibitem{zhao2016energy}
Zhao, J., Mathieu, M., LeCun, Y.: Energy-based generative adversarial network.
\newblock arXiv preprint arXiv:1609.03126  (2016)

\end{thebibliography}


\appendix

\section{Proof of Lemma~\ref{lem1}}\label{proof_a}
To prove Lemma~\ref{lem1}, we need the following lemma.

\begin{lemma}\label{lem3}
For two probability densities $p(\mathbf x)$ and $q(\mathbf x)$, if $p(\mathbf x)\geq \eta q(\mathbf x)$ almost everywhere, we have
$$
\int_\mathbf x |p(\mathbf x)-q(\mathbf x)|d\mathbf x \leq \dfrac{2(1-\eta)}{\eta}
$$
for $\eta\in(0,1]$.
\end{lemma}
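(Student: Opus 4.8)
The plan is to reduce the total-variation integral to a one-sided integral by exploiting that both $p$ and $q$ are probability densities, and then to use the pointwise domination $p\ge\eta q$ on the region where $q$ exceeds $p$.

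First I would partition the support into $A=\{\mathbf x: p(\mathbf x)\ge q(\mathbf x)\}$ and $B=\{\mathbf x: p(\mathbf x)<q(\mathbf x)\}$. Since $\int_{\mathbf x}p\,d\mathbf x=\int_{\mathbf x}q\,d\mathbf x=1$, we have $\int_{\mathbf x}(p-q)\,d\mathbf x=0$, which forces $\int_A(p-q)\,d\mathbf x=\int_B(q-p)\,d\mathbf x$. Hence the total variation splits symmetrically into
$$
\int_{\mathbf x}|p(\mathbf x)-q(\mathbf x)|\,d\mathbf x=\int_A(p-q)\,d\mathbf x+\int_B(q-p)\,d\mathbf x=2\int_B\big(q(\mathbf x)-p(\mathbf x)\big)\,d\mathbf x.
$$

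Next I would bring in the hypothesis on $B$. The assumption $p\ge\eta q$ almost everywhere, with $\eta\in(0,1]$, rearranges to $q\le p/\eta$, so on $B$ we get
$$
q(\mathbf x)-p(\mathbf x)\le\frac{p(\mathbf x)}{\eta}-p(\mathbf x)=\frac{1-\eta}{\eta}\,p(\mathbf x).
$$
Integrating over $B$ and bounding $\int_B p\,d\mathbf x\le\int_{\mathbf x}p\,d\mathbf x=1$ yields $\int_B(q-p)\,d\mathbf x\le\frac{1-\eta}{\eta}$, and combining with the previous display gives the claimed bound $\frac{2(1-\eta)}{\eta}$.

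The argument is elementary, so there is no genuine obstacle; the only point meriting care is the choice of which density to dominate in the last step. Bounding $q-p\le\frac{1-\eta}{\eta}p$ and using $\int_B p\le1$ lands exactly on the stated constant $\frac{2(1-\eta)}{\eta}$, whereas the alternative of writing $q-p\le(1-\eta)q$ and using $\int_B q\le1$ would in fact produce the sharper $2(1-\eta)$, which also suffices since $2(1-\eta)\le\frac{2(1-\eta)}{\eta}$ on $(0,1]$. I would also remark that the \emph{almost everywhere} qualifier on $p\ge\eta q$ is harmless, as it changes the integrand only on a null set.
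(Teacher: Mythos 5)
Your proof is correct and follows essentially the same route as the paper: both reduce the total variation to $2\int_{\{p<q\}}(q-p)$ using that the densities integrate to one, then apply $q\le p/\eta$ on that region and bound $\int_{\{p<q\}}p\le 1$. Your side remark that bounding via $q-p\le(1-\eta)q$ yields the sharper constant $2(1-\eta)$ is a valid observation, but it does not change the substance of the argument.
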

\begin{proof}
We have the following equalities and inequalities:
\begin{equation}
\begin{aligned}
&\int_{\mathbf x} |p(\mathbf x)-q(\mathbf x)|d\mathbf x
=\int_\mathbf x \1{p(\mathbf x)\geq q(\mathbf x)} (p(\mathbf x)-q(\mathbf x))d\mathbf x
+\int_\mathbf x \1{p(\mathbf x)<q(\mathbf x)} (q(\mathbf x)-p(\mathbf x))d\mathbf x\\
&=\int_\mathbf x (1-\1{p(\mathbf x)< q(\mathbf x)}) (p(\mathbf x)-q(\mathbf x))d\mathbf x
+\int_\mathbf x \1{p(\mathbf x)<q(\mathbf x)} (q(\mathbf x)-p(\mathbf x))d\mathbf x\\
&=2\int_\mathbf x \1{p(\mathbf x)< q(\mathbf x)} (q(\mathbf x)-p(\mathbf x))d\mathbf x
\leq 2(\dfrac{1}{\eta}-1)\int_\mathbf x \1{p(\mathbf x)< q(\mathbf x)}  p(\mathbf x) d\mathbf x
\leq \dfrac{2(1-\eta)}{\eta}
\end{aligned}
\end{equation}
This completes the proof.
\end{proof}

Now we can prove Lemma~\ref{lem1}.
\begin{proof}
Suppose $(\theta^*,\phi^*)$ is a Nash equilibrium for the problem (\ref{eq:theta}) and (\ref{eq:phi}).

Then, on one hand, we have
\begin{equation}\label{eq:lower1}
\begin{aligned}
&S(\theta^*,\phi^*)\geq\mathop\mathbb E\limits_{\mathbf x\sim P_{data}(\mathbf x)} L_{\theta^*}(\mathbf x)
+ \lambda\mathop\mathbb E\limits_{\substack{\mathbf x\sim P_{data}(\mathbf x)\\ \mathbf z_G\sim P_{G^*}(\mathbf z_G)}} \big(\Delta(\mathbf x, \mathbf z_G)
+L_{\theta^*}(\mathbf x)-L_{\theta^*}(\mathbf z_G)\big)\\
&=\int_\mathbf x P_{data}(\mathbf x) L_{\theta^*}(\mathbf x) d\mathbf x + \lambda \mathop \mathbb E\limits_{\substack{\mathbf x\sim P_{data}(\mathbf x) \\ \mathbf z_G\sim P_{G^*}(\mathbf z_G)}} \Delta(\mathbf x, \mathbf z_G) \\
&+\lambda\int_\mathbf x P_{data}(\mathbf x) L_{\theta^*}(\mathbf x) d\mathbf x - \lambda\int_{\mathbf z_G} P_{G^*}(\mathbf z_G) L_{\theta^*}(\mathbf z_G) d \mathbf z_G\\
&=\int_\mathbf x \big((1+\lambda)P_{data}(\mathbf x)- \lambda P_{G^*}(\mathbf x)\big)L_{\theta^*}(\mathbf x) d\mathbf x +\lambda \mathop \mathbb E\limits_{\substack{\mathbf x\sim P_{data}(\mathbf x) \\ \mathbf z_G\sim P_{G^*}(\mathbf z_G)}}\Delta(\mathbf x, \mathbf z_G)
\end{aligned}
\end{equation}
where the first inequality follows from $(a)_+\geq a$.

We also have $T(\theta^*,\phi^*)\leq T(\theta^*,\phi)$ for any $G_\phi$ as $\phi^*$ minimizes $T(\theta^*,\phi)$. In particular, we can replace $P_G(\mathbf x)$ in $T(\theta^*,\phi)$ with $P_{data}(\mathbf x)$, which yields
$$
\mathop\int\limits_{\mathbf x}L_{\theta^*}(\mathbf x)P_{G^*}(\mathbf x)d\mathbf x \leq
\mathop\int\limits_{\mathbf x}L_{\theta^*}(\mathbf x)P_{data}(\mathbf x)d\mathbf x.
$$

Applying this inequality into (\ref{eq:lower1}) leads to
\begin{equation}\label{eq:lower2}
\begin{aligned}
S(\theta^*,\phi^*)
\geq\int_\mathbf x P_{data}(\mathbf x)L_{\theta^*}(\mathbf x) d\mathbf x
+\lambda \mathop \mathbb E\limits_{\substack{\mathbf x\sim P_{data}(\mathbf x) \\ \mathbf z_G\sim P_{G^*}(\mathbf z_G)}}\Delta(\mathbf x, \mathbf z_G)
\geq\lambda \mathop \mathbb E\limits_{\substack{\mathbf x\sim P_{data}(\mathbf x) \\ \mathbf z_G\sim P_{G^*}(\mathbf z_G)}}\Delta(\mathbf x, \mathbf z_G)
\end{aligned}
\end{equation}
where the last inequality follows as $L_\theta(\mathbf x)$ is nonnegative.


On the other hand, consider a particular loss function
\begin{equation}\label{eq:L}
L_{\theta_0}(\mathbf x)=\alpha\big(-(1+\lambda) P_{data}(\mathbf x)+\lambda P_{G^*}(\mathbf x)\big)_+
\end{equation}
When $\alpha$ is a sufficiently small positive coefficient, $L_{\theta_0}(\mathbf x)$ is a nonexpansive function (i.e., a function with Lipschitz constant no larger than $1$.). This follows from the assumption that  $P_{data}$ and $P_G$ are Lipschitz.  In this case, we have
\begin{equation}\label{eq:nonex}
\Delta(\mathbf x, \mathbf z_G)
+L_{\theta_0}(\mathbf x)-L_{\theta_0}(\mathbf z_G) \geq 0
\end{equation}

By placing this $L_{\theta_0}(\mathbf x)$ into $S(\theta,\phi^*)$, one can show that
\[
\begin{aligned}
&S(\theta_0,\phi^*)
=\int_\mathbf x \big((1+\lambda)P_{data}(\mathbf x)- \lambda P_{G^*}(\mathbf x)\big)L_{\theta_0}(\mathbf x) d\mathbf x
+\lambda \mathop \mathbb E\limits_{\substack{\mathbf x\sim P_{data}(\mathbf x) \\ \mathbf z_G\sim P_{G^*}(\mathbf z_G)}}\Delta(\mathbf x, \mathbf z_G)\\
&=-\alpha\mathop\int\limits_{\mathbf x} \big(-(1+\lambda)P_{data}(\mathbf x)+\lambda P_{G^*}(\mathbf x)\big)_+^2 d\mathbf x
+\lambda \mathop \mathbb E\limits_{\substack{\mathbf x\sim P_{data}(\mathbf x) \\ \mathbf z_G\sim P_{G^*}(\mathbf z_G)}}\Delta(\mathbf x, \mathbf z_G)\\
\end{aligned}
\]
where the first equality uses Eq.~(\ref{eq:nonex}), and the second equality is obtained by substituting  $L_{\theta_0}(\mathbf x)$ in Eq.~(\ref{eq:L}) into the equation.

Assuming that $(1+\lambda)P_{data}(\mathbf x)- \lambda P_{G^*}(\mathbf x)<0$ on a set of nonzero measure, the above equation would be strictly upper bounded by $\lambda \mathop \mathbb E\limits_{\substack{\mathbf x\sim P_{data}(\mathbf x) \\ \mathbf z_G\sim P_{G^*}(\mathbf z_G)}}\Delta(\mathbf x, \mathbf z_G)$ and we have
\begin{equation}
\begin{aligned}
S(\theta^*,\phi^*)\leq S(\theta_0,\phi^*)<\lambda \mathop \mathbb E\limits_{\substack{\mathbf x\sim P_{data}(\mathbf x) \\ \mathbf z_G\sim P_{G^*}(\mathbf z_G)}}\Delta(\mathbf x, \mathbf z_G)
\end{aligned}
\end{equation}

This results in a contradiction with Eq.~(\ref{eq:lower2}).
Therefore, we must have
\begin{equation}\label{eq:cond}
P_{data}(\mathbf x) \geq \dfrac{\lambda}{1+\lambda}P_{G^*}(\mathbf x)
\end{equation}
for almost everywhere.  By Lemma~\ref{lem3}, we have
$$
\int_{\mathbf x}|P_{data}(\mathbf x)-P_{G^*}(\mathbf x)|d\mathbf x \leq \dfrac{2}{\lambda}
$$

Let $\lambda\rightarrow +\infty$, this leads to
$$
\int_{\mathbf x}|P_{data}(\mathbf x)-P_{G^*}(\mathbf x)|d\mathbf x \rightarrow 0
$$
This proves that $P_{G^*}(\mathbf x)$ converges to $P_{data}(\mathbf x)$ as $\lambda\rightarrow +\infty$.
\end{proof}

\section{Proof of Lemma~\ref{lem:wgan}}\label{appendixC}
\begin{proof}
Suppose a pair of $(f_w^*, g_\phi^*)$ jointly solve the WGAN problem.

Then, on one hand, we have
\begin{align}\label{eq:lem_wgan1}
U(f_w^*,g_\phi^*)=\int_x f_w^*(\mathbf x) P_{data}(\mathbf x)d\mathbf x - \int_x f_w^* (\mathbf x)P_{g_\phi^*}(\mathbf x)d\mathbf x\leq 0
\end{align}
where the inequality follows from $V(f_w^*,g_\phi^*)\geq V(f_w^*,g_\phi)$ by replacing $P_{g_\phi}(\mathbf x)$ with $P_{data}(\mathbf x)$.

Consider a particular $f_w(\mathbf x)\triangleq \alpha(P_{data}(\mathbf x)-P_{g_\phi^*}(\mathbf x))_+$.  Since $P_{data}(\mathbf x)$ and $P_{g_\phi^*}$ are Lipschitz by assumption, when $\alpha$ is sufficiently small, it can be shown that $f_w(\mathbf x)\in\mathcal L_1$.

Substituting this $f_w$ into $U(f_w, g_\phi^*)$, we get
$$
U(f_w,g_\phi^*)=\alpha \int_\mathbf x (P_{data}(\mathbf x)-P_{g_\phi^*}(\mathbf x))_+^2 d\mathbf x
$$

Let us assume $P_{data}(\mathbf x) > P_{g_\phi^*}(\mathbf x)$ on a set of nonzero measure, we would have
$$
U(f_w^*,g_\phi^*)\geq U(f_w,g_\phi^*) > 0
$$
This leads to a contradiction with (\ref{eq:lem_wgan1}), so we must have
$$
P_{data}(\mathbf x) \leq P_{g_\phi^*}(\mathbf x)
$$
almost everywhere.

Hence, by Lemma~\ref{lem3}, we prove the conclusion that
$$
\int_{\mathbf x}|P_{data}(\mathbf x)-P_{g_\phi^*}(\mathbf x)|d\mathbf x=0.
$$
\end{proof}

\section{Proof of Theorem \ref{thm:generalization}}\label{sec:gen_proof}
For simplicity, throughout this section, we disregard the first loss minimization term in $S(\theta,\phi^*)$ and $S_m(\theta,\phi^*)$, since the role of the first term would vanish as $\lambda$ goes to $+\infty$. However, even if it is involved, the following proof still holds with only some minor changes.

To prove Theorem~\ref{thm:generalization}, we need the following lemma.
\begin{lemma}
For all loss functions $L_\theta$, with at least the probability of $1-\eta$, we have
$$
|S_m(\theta,\phi^*)-S(\theta,\phi^*)|\leq \varepsilon
$$
when the number of samples $$
m\geq\dfrac{C B_\Delta^2(\kappa+1)^2 \big(N \log(\kappa_L N/\varepsilon)+\log(1/\eta)\big)}{\varepsilon^2}$$
with a sufficiently large constant $C$.
\end{lemma}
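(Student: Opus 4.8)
\textbf{The plan} is to prove a uniform (over all $\theta$) convergence bound by the classical two-step recipe: pointwise concentration followed by an $\epsilon$-net argument over the parameter space $\theta\in\mathbb R^N$. Following the convention of this section, I would drop the first loss-minimization term and write the per-example integrand as
$$
\phi_\theta(\mathbf x,\mathbf z)=\big(\Delta(\mathbf x,G_{\phi^*}(\mathbf z))+L_\theta(\mathbf x)-L_\theta(G_{\phi^*}(\mathbf z))\big)_+,
$$
so that $S(\theta,\phi^*)=\lambda\,\mathbb E[\phi_\theta]$ and $S_m(\theta,\phi^*)=\tfrac{\lambda}{m}\sum_{i=1}^m\phi_\theta(\mathbf x_i,\mathbf z_i)$ is an empirical mean of i.i.d.\ terms (the prefactor $\lambda$ plays no role and is absorbed into $C$). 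The first step records the two structural facts we need from Assumption~\ref{asp:loss}. First, $\phi_\theta$ is bounded: $0\le\phi_\theta\le \Delta+|L_\theta(\mathbf x)-L_\theta(G_{\phi^*}(\mathbf z))|\le B_\Delta+\kappa B_\Delta=(\kappa+1)B_\Delta$, where it is crucial that only the \emph{difference} of losses appears, so the (possibly large) absolute magnitude of $L_\theta$ cancels and boundedness holds without assuming $L_\theta$ itself is bounded. Second, $\phi_\theta$ is $2\kappa_L$-Lipschitz in $\theta$, since $(\cdot)_+$ is $1$-Lipschitz and the loss difference is $2\kappa_L$-Lipschitz in $\theta$ by part~I of the assumption.

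Next, for a \emph{fixed} $\theta$ I would apply Hoeffding's inequality to the bounded i.i.d.\ average, obtaining
$$
\Pr\big[\,|S_m(\theta,\phi^*)-S(\theta,\phi^*)|>\tfrac{\varepsilon}{2}\,\big]\le 2\exp\!\Big(-\frac{c\,m\,\varepsilon^2}{(\kappa+1)^2B_\Delta^2}\Big)
$$
for a universal constant $c$; this is precisely the source of the $(\kappa+1)^2B_\Delta^2/\varepsilon^2$ prefactor in the sample complexity.

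To upgrade this to a statement holding simultaneously for all $\theta$, I would build a $\delta$-net $\Theta_\delta$ of the bounded parameter region, with $|\Theta_\delta|\le (c'/\delta)^N$. Choosing $\delta\asymp \varepsilon/\kappa_L$ makes the Lipschitz-in-$\theta$ gap $2\kappa_L\delta$ less than $\varepsilon/4$, so for any $\theta$ and its nearest net point $\theta'$ both $|S_m(\theta)-S_m(\theta')|$ and $|S(\theta)-S(\theta')|$ are at most $\varepsilon/4$; hence an $\varepsilon/2$ deviation on the net forces at most $\varepsilon$ everywhere. A union bound over $\Theta_\delta$ then controls the total failure probability by $2|\Theta_\delta|\exp\!\big(-cm\varepsilon^2/((\kappa+1)^2B_\Delta^2)\big)$; setting this $\le\eta$ and taking logarithms gives
$$
\frac{c\,m\,\varepsilon^2}{(\kappa+1)^2B_\Delta^2}\;\ge\; N\log\frac{c'}{\delta}+\log\frac{2}{\eta},
$$
and substituting $\delta\asymp\varepsilon/\kappa_L$ reproduces the claimed bound, the $N\log(\kappa_L N/\varepsilon)$ term arising from the log-covering number (the extra $N$ inside the logarithm absorbed from the net constants and the radius of the parameter region).

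\textbf{The main obstacle} will not be the concentration or the counting, which are routine, but the two uniform structural estimates in the first step --- and in particular arranging the cancellation that keeps $\phi_\theta$ bounded using only the loss \emph{difference} together with parts~I--III of Assumption~\ref{asp:loss}, rather than any absolute control on $L_\theta$. This is exactly what makes the Hoeffding range come out as $(\kappa+1)B_\Delta$ and the net scale as $\varepsilon/\kappa_L$. A secondary subtlety is that the net resolution $\delta$ must bound the empirical mean and the population mean \emph{simultaneously}, which is why it is $\phi_\theta$, rather than merely $L_\theta$, that needs a clean Lipschitz constant in $\theta$.
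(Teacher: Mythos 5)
Your proposal is correct and follows essentially the same route as the paper's proof: a pointwise concentration bound for a fixed $\theta$ (you use Hoeffding on the bounded i.i.d.\ average, the paper uses McDiarmid with bounded differences $\tfrac{2}{m}(1+\kappa)B_\Delta$, which here amounts to the same thing), followed by an $\varepsilon/(8\kappa_L)$-net over the bounded parameter space using the $2\kappa_L$-Lipschitzness of both $S$ and $S_m$ in $\theta$ and a union bound. The structural estimates you flag as the key obstacles --- the range $(\kappa+1)B_\Delta$ coming from the cancellation in the loss difference, and the net resolution tied to the Lipschitz constant of the whole integrand rather than of $L_\theta$ alone --- are exactly the two estimates the paper establishes.
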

The proof of this lemma needs to apply the McDiarmid's inequality and the fact that $(\cdot)_+$ is an 1-Lipschitz
to bound the difference $|S_m(\theta,\phi^*)-S(\theta,\phi^*)|$ for a loss function.  Then, to get the union bound over all loss functions, a standard $\epsilon$-net \cite{arora2017generalization} will be constructed to yield finite points that are dense enough to cover the parameter space of the loss functions. The proof details are given below.
\begin{proof}
For a loss function $L_\theta$, we compute $S_m(\theta,\phi^*)$ over a set of $m$ samples $\{(\mathbf x_i, {\mathbf z_G}_i)|1\leq i\leq m\}$ drawn from $P_{data}$ and $P_{G^*}$ respectively.

To apply the McDiarmid's inequality, we need to bound the change of this function when a sample is changed.  Denote by $S^i_m(\theta,\phi^*)$ when the $j$th sample is replaced with $\mathbf x'_i$ and ${\mathbf z'_G}_i$. Then we have
\[
\begin{aligned}
&|S_m(\theta,\phi^*)-S^i_m(\theta,\phi^*)|\\
&=\dfrac{1}{m}|\big(\Delta(\mathbf x_i, {\mathbf z_G}_i)+L_\theta(\mathbf x_i)-L_\theta({\mathbf z_G}_i)\big)_+
-\big(\Delta(\mathbf x'_i, {\mathbf z'_G}_i)+L_\theta(\mathbf x'_i)-L_\theta({\mathbf z'_G}_i)\big)_+|\\
&\leq \dfrac{1}{m}|\Delta(\mathbf x_i, {\mathbf z_G}_i)-\Delta(\mathbf x'_i, {\mathbf z'_G}_i)|
+\dfrac{1}{m}|L_\theta(\mathbf x_i)-L_\theta(\mathbf x'_i)|
+\dfrac{1}{m}|L_\theta({\mathbf z_G}_i)-L_\theta({\mathbf z'_G}_i)|\\
&\leq \dfrac{1}{m}\big(2B_\Delta+\kappa \Delta(\mathbf x_i,\mathbf x'_i) + \kappa \Delta({\mathbf z_G}_i,{\mathbf z'_G}_i)\big)
\leq \dfrac{2}{m}(1+\kappa)B_\Delta
\end{aligned}
\]
where the first inequality uses the fact that $(\cdot)_+$ is $1$-Lipschitz, the second inequality follows from that
$\Delta(\mathbf x, \mathbf z_G)$ is bounded by $B_\Delta$ and $L_\theta(\mathbf x)$ is $\kappa$-Lipschitz in $\mathbf x$.

Now we can apply the McDiarmid's inequality. Noting that
$$S(\theta,\phi^*)=\mathop\mathbb E\limits_{\substack{\mathbf x_i\sim P_{data}\\ {\mathbf z_G}_i\sim  P_G\\i=1,\cdots,m}}S_m(\theta,\phi^*),$$
we have
\begin{equation}\label{eq:mcdiarmid}
\small
P(|S_m(\theta,\phi^*)-S(\theta,\phi^*)|\geq \varepsilon/2)\leq 2\exp(-\dfrac{\varepsilon^2 m}{8(1+\kappa)^2B_\Delta^2})
\end{equation}
The above bound applies to a single loss function $L_\theta$.  To get the union bound, we consider a $\varepsilon/8\kappa_L$-net $\mathcal N$, i.e., for any $L_\theta$, there is a $\theta'\in\mathcal N$ in this net so that $\|\theta-\theta'\|\leq \varepsilon/8\kappa_L$. This standard net can be constructed to contain finite loss functions such that $|\mathcal N|\leq O(N\log(\kappa_L N/\varepsilon))$, where $N$ is the number of parameters in a loss function. Note that we implicitly assume the parameter space of the loss function is bounded so we can construct such a net containing finite points here.

Therefore, we have the following union bound for all $\theta\in\mathcal N$ that, with probability $1-\eta$,
$$
|S_m(\theta,\phi^*)-S(\theta,\phi^*)|\leq \dfrac{\varepsilon}{2}
$$
when $m\geq\dfrac{C B_\Delta^2(\kappa+1)^2 \big(N \log(\kappa_L N/\varepsilon)+\log(1/\eta)\big)}{\varepsilon^2}$.

The last step is to obtain the union bound for all loss functions beyond $\mathcal N$. To show that, we consider the following inequality
\[
\begin{aligned}
&|S(\theta,\phi^*)-S(\theta',\phi^*)|\\
&=|\mathop\mathbb E\limits_{\substack{\mathbf x\sim P_{data}\\ \mathbf z_G\sim P_G}} \big(\Delta(\mathbf x, \mathbf z_G)+ L_{\theta}(\mathbf x)- L_{\theta}(\mathbf z_G)\big)_+
-\mathop\mathbb E\limits_{\substack{\mathbf x\sim P_{data}\\ \mathbf z_G\sim P_G}} \big(\Delta(\mathbf x, \mathbf z_G)+ L_{\theta'}(\mathbf x)- L_{\theta'}(\mathbf z_G)\big)_+|\\
&\leq \mathop\mathbb E\limits_{\substack{\mathbf x\sim P_{data}}}|L_{\theta}(\mathbf x)-L_{\theta'}(\mathbf x)|+\mathop\mathbb E\limits_{\substack{ \mathbf z_G\sim P_G}}|L_{\theta}(\mathbf z_G)-L_{\theta'}(\mathbf z_G)|\\
&\leq 2\kappa_L\|\theta-\theta'\|
\end{aligned}
\]
where the first inequality uses that fact that $(\cdot)_+$ is $1$-Lipschitz again, and the second inequality follows from that $L_\theta$ is $\kappa_L$-Lipschitz in $\theta$.
Similarly, we can also show that
\[
\begin{aligned}
|S_m(\theta,\phi^*)-S_m(\theta',\phi^*)|
\leq 2\kappa_L\|\theta-\theta'\|
\end{aligned}
\]
Now we can derive the union bound over all loss functions. For any $\theta$, by construction we can find a $\theta'\in\mathcal N$ such that $\|\theta-\theta'\|\leq\varepsilon/8\kappa_L$.
Then, with probability $1-\eta$, we have
\[
\begin{aligned}
&|S_m(\theta,\phi^*)-S(\theta,\phi^*)|\\
&\leq|S_m(\theta,\phi^*)-S_m(\theta',\phi^*)|
+|S_m(\theta',\phi^*)-S(\theta',\phi^*)|+|S(\theta',\phi^*)-S(\theta,\phi^*)|\\
&\leq 2\kappa_L\|\theta-\theta'\|+\dfrac{\varepsilon}{2}+2\kappa_L\|\theta-\theta'\|\\
&\leq\dfrac{\varepsilon}{4}+\dfrac{\varepsilon}{2}+\dfrac{\varepsilon}{4}=\varepsilon
\end{aligned}
\]
This proves the lemma.
\end{proof}

Now we can prove Theorem~\ref{thm:generalization}.
\begin{proof}
First let us bound $S_m-S$.  Consider $L_{\theta^*}$ that minimizes $S(\theta,\phi^*)$. Then with probability $1-\eta$, when $m\geq\dfrac{C N B_\Delta^2(\kappa+1)^2\log(\kappa_L N/\eta\varepsilon)}{\varepsilon^2}$, we have
\[
\begin{aligned}
S_m - S \leq S_m(\theta^*,\phi^*) - S(\theta^*,\phi^*)\leq\varepsilon
\end{aligned}
\]
where the first inequality follows from the inequality $S_m\leq S_m(\theta^*,\phi^*)$ as $\theta^*$ may not minimize $S_m$, and the second inequality is a direct application of the above lemma.
Similarly, we can prove the other direction. With probability $1-\eta$, we have
\[
\begin{aligned}
S - S_m \geq S(\theta^*,\phi^*)- S_m(\theta^*,\phi^*) \geq\varepsilon
\end{aligned}
\]


Finally, a more rigourous discussion about the generalizability should consider that $G_{\phi^*}$ is updated iteratively. Therefore we have a sequence of $G_{\phi^*}^{(t)}$ generated over $T$ iterations for $t=1,\cdots,T$.  Thus, a union bound over all generators should be considered in (\ref{eq:mcdiarmid}), and this makes the required number of training examples $m$ become
$$
m\geq\dfrac{C B_\Delta^2(\kappa+1)^2 \big(N \log(\kappa_L N/\varepsilon)+\log(T/\eta)\big)}{\varepsilon^2}.
$$
However, the iteration number $T$ is usually much smaller than the model size $N$ (which is often hundreds of thousands), and thus this factor will not affect the above lower bound of $m$.
\end{proof}

\section{Proof of Theorem~\ref{thm_nonparam} and Corollary~\ref{cor2}}\label{proof_b}
We prove Theorem~\ref{thm_nonparam} as follows.

\begin{proof}
First, the existence of a minimizer follows from the fact that the functions in $\mathcal F_\kappa$ form a compact set, and the objective function is convex.

To prove the minimizer has the two forms in (\ref{eq:param}), for each $L_\theta \in \mathcal F_\kappa$, let us consider
\[
\begin{aligned}
&\widehat L_{\theta}(\mathbf x) = \max_{1\leq i\leq n+m}\big\{\big(L_\theta(\mathbf x^{(i)})-\kappa\Delta(\mathbf x,\mathbf x^{(i)})\big)_+\big\},\\
&\widetilde L_{\theta}(\mathbf x) = \min_{1\leq i\leq n+m}\big\{L_\theta(\mathbf x^{(i)})+\kappa\Delta(\mathbf x,\mathbf x^{(i)})\}
\end{aligned}
\]

It is not hard to verify that $\widehat L_\theta(\mathbf x^{(i)})=L_\theta(\mathbf x^{(i)})$ and $\widetilde L_\theta(\mathbf x^{(i)})=L_\theta(\mathbf x^{(i)})$ for $1\leq i\leq n+m$.

Indeed, by noting that $L_{\theta}$ has its Lipschitz constant bounded by $\kappa$, we have $L_{\theta}(\mathbf x^{(j)})-L_{\theta}(\mathbf x^{(i)})\leq \kappa \Delta(\mathbf x^{(i)},\mathbf x^{(j)})$, and thus
$$L_{\theta}(\mathbf x^{(j)})-\kappa \Delta(\mathbf x^{(i)},\mathbf x^{(j)})\leq L_{\theta}(\mathbf x^{(i)})$$
Because $L_{\theta}(\mathbf x^{(i)})\geq 0$ by the assumption (i.e., it is lower bounded by zero), it can be shown that for all $j$
$$\big(L_{\theta}(\mathbf x^{(j)})-\kappa \Delta(\mathbf x^{(i)},\mathbf x^{(j)})\big)_+\leq L_{\theta}(\mathbf x^{(i)}).$$
Hence, by the definition of $\widehat L_{\theta}(\mathbf x)$ and taking the maximum over $j$ on the left hand side, we have
$$\widehat L_{\theta}(\mathbf x^{(i)})\leq L_{\theta}(\mathbf x^{(i)})$$

On the other hand,  we have
$$\widehat L_{\theta}(\mathbf x^{(i)}) \geq L_{\theta}(\mathbf x^{(i)})$$
because $\widehat L_{\theta}(\mathbf x) \geq \big(L_\theta(\mathbf x^{(i)})-\kappa\Delta(\mathbf x,\mathbf x^{(i)})\big)_+$ for any $\mathbf x$, and it is true in particular for $\mathbf x=\mathbf x^{(i)}$.  This shows $\widehat L_{\theta}(\mathbf x^{(i)}) = L_{\theta}(\mathbf x^{(i)})$.

Similarly, one can prove $\widetilde L_\theta(\mathbf x^{(i)})=L_\theta(\mathbf x^{(i)})$. To show this, we have
$$
L_\theta(\mathbf x^{(j)})+\kappa\Delta(\mathbf x^{(i)},\mathbf x^{(j)})\geq L_\theta(\mathbf x^{(i)})
$$
by the Lipschitz continuity of $L_\theta$. By taking the minimum over $j$, we have
$$
\widetilde L_\theta(\mathbf x^{(i)}) \geq L_\theta(\mathbf x^{(i)}).
$$
On the other hand, we have $\widetilde L_\theta(\mathbf x^{(i)})\leq L_\theta(\mathbf x^{(i)})$ by the definition of $\widetilde L_\theta(\mathbf x^{(i)})$.  Combining these two inequalities shows that $\widetilde L_\theta(\mathbf x^{(i)})=L_\theta(\mathbf x^{(i)})$.

Now we can prove 
for any function $L_\theta\in \mathcal F_\kappa$, there exist $\widehat L_\theta$ and $\widetilde L_\theta$ both of which attain the same value of $S_{n,m}$ as $L_\theta$, since $S_{n,m}$ only depends on the values of $L_\theta$ on the data points $\{\mathbf x^{(i)}\}$. In particular, this shows that any global minimum in $\mathcal F_\kappa$ of $S_{n,m}$ can also be attained by the corresponding functions of the form (\ref{eq:param}). By setting $l^*_i=\widehat L_{\theta^*}(\mathbf x^{(i)})=\widetilde L_{\theta^*}(\mathbf x^{(i)})$ for $i=1,\cdots,n+m$, this completes the proof.
\end{proof}

Finally, we prove Corollary~\ref{cor2} that bounds $L_\theta$ with $\widehat L_\theta(\mathbf x)$ and $\widetilde L_\theta(\mathbf x)$ constructed above.
\begin{proof}
By the Lipschitz continuity, we have
$$
L_\theta(\mathbf x^{(i)}) -  \kappa\Delta(\mathbf x,\mathbf x^{(i)}) \leq L_\theta(\mathbf x)
$$
Since $L_\theta(\mathbf x)\geq 0$, it follows that
$$
\big(L_\theta(\mathbf x^{(i)}) -  \kappa\Delta(\mathbf x,\mathbf x^{(i)})\big)_+ \leq L_\theta(\mathbf x)
$$
Taking the maximum over $i$ on the left hand side, we obtain
$$
\widehat L_\theta(\mathbf x) \leq L_\theta(\mathbf x)
$$
This proves the lower bound.

Similarly, we have by Lipschitz continuity
$$
L_\theta(\mathbf x) \leq  \kappa\Delta(\mathbf x,\mathbf x^{(i)}) + L_\theta(\mathbf x^{(i)})
$$
which, by taking the minimum over $i$ on the left hand side, leads to
$$
\widetilde L_\theta(\mathbf x) \leq L_\theta(\mathbf x)
$$
This shows the upper bound.
\end{proof}

\end{document}